\documentclass{article}

\usepackage{microtype}
\usepackage{listings}
\usepackage{graphicx}
\usepackage{subcaption}
\usepackage{booktabs} % for professional tables
\usepackage{placeins}
\usepackage{hyperref}

\usepackage[accepted]{arxiv}

\usepackage{amsmath}
\usepackage{amssymb}
\usepackage{mathtools}
\usepackage{amsthm}
\usepackage{bbm}
\usepackage{bm}
\usepackage{enumitem}
\usepackage[capitalize,noabbrev]{cleveref}

\theoremstyle{plain}
\newtheorem{theorem}{Theorem}[section]
\newtheorem{proposition}[theorem]{Proposition}

\theoremstyle{definition}

\theoremstyle{remark}

\definecolor{mutedred}{rgb}{0.8,0.3,0.3}

\icmltitlerunning{Reducing Per-Sample Interference in Stochastic Optimization} % don 't forget incoming information

\begin{document}

\twocolumn[
  \icmltitle{Reducing Per-Sample Interference in Stochastic Optimization}

  \begin{icmlauthorlist}
    \icmlauthor{Apostolos Avranas$^{\dagger}$}{}
  \end{icmlauthorlist}
  % \icmlaffiliation{comp}{Amadeus, 821 Av. Jack Kilby, 06270, France}

  %\icmlcorrespondingauthor{Apostolos Avranas}{apostolos.avranas@amadeus.com}
  \icmlkeywords{Optimization, Non-catastrophic forgetting, Per-sample gradients}

  \vskip 0.1in
]

% this must go after the closing bracket ] following \twocolumn[ ...
% This command actually creates the footnote in the first column listing the
% affiliations and the copyright notice. The command takes one argument, which
% is text to display at the start of the footnote. The \icmlEqualContribution
% command is standard text for equal contribution. Remove it (just {}) if you
% do not need this facility.
% Use ONE of the following lines. DO NOT remove the command.
% If you have no special notice, KEEP empty braces:
% Or, if applicable, use the standard equal contribution text:
% \printAffiliationsAndNotice{\icmlEqualContribution}
\begin{abstract}
Modern optimizers combine gradients from the current mini-batch with historical optimization state, such as momentum or adaptive moments. While effective, this standard practice can produce parameter updates that actively increase the loss of \mbox{individual} samples. We term this phenomenon \textbf{per-sample interference} and propose redefining the parameter update as an optimization problem that explicitly minimizes it. 

Because the exact formulation of the problem is computationally prohibitive, we introduce a highly efficient surrogate. By reducing the problem's dimensionality to the batch size and restricting the optimization to the last linear layer, we overcome memory and speed bottlenecks. This strategy hinges on our unexpected finding that this layer alone can reliably capture core second-order statistics of the full network. The resulting surrogate problem integrates readily into standard optimizers like SGD and AdamW, and can be solved using a small number of GPU-friendly iterations. Crucially, the method exhibits \textbf{favorable scaling properties}, as the relative computational overhead shrinks as the model size or input grows. Experiments on image classification benchmarks confirm \emph{reduced per-sample interference} and \emph{improved generalization}. We open-source our associated code.\footnotemark
\begingroup
\let\thefootnote\relax
\footnotetext{\hspace*{-2em}\textsuperscript{1} \href{https://github.com/avranasa/sample-harm-reduction}{https://github.com/avranasa/sample-harm-reduction}}
\endgroup
\end{abstract}

\printAffiliationsAndNotice{$^{\dagger}$Amadeus, 821 Av. Jack Kilby, 06270, France}  % custom notice

\section{Introduction}

Stochastic optimization is the backbone of modern deep learning. In practice, parameter updates are computed by aggregating gradients over a mini-batch and combining them with historical information such as momentum or adaptive moments. These mechanisms are crucial for stabilizing training, yet they implicitly assume that combining batch-averaged gradients with past optimization state is always beneficial for learning from the current mini-batch.

In this work, we challenge this assumption. The interaction between historical optimization state and batch-averaged gradients can produce parameter updates that actively increase the loss of individual samples in the current batch. While this effect may be obscured at the batch level, it represents a systematic loss of information at the per-sample level. We refer to this phenomenon as \emph{interference}. From this perspective, standard optimizers may inadvertently trade individual sample improvement for optimization stability.

To correct this imbalance, our goal is to protect each sample from the potentially conflicting influence of batch averaging and historical optimization states. We introduce a \emph{novel} framework that \textbf{formalizes the parameter update as an optimization problem, explicitly minimizing the interfering influence on samples within the current mini-batch.}

Unfortunately, directly solving the resulting optimization problem is practically prohibitive. We employ a series of steps to derive a surrogate problem admitting a computationally efficient solution:
\begin{itemize}[leftmargin=1.4em, itemsep=1.0em, parsep=0pt, topsep=0pt]
\item \emph{Dimensionality reduction:} We mathematically transform the optimization problem out of parameter space and into the drastically smaller batch space.
\item \emph{Last-layer approximation:} The formulation involves the full-network per-sample gradients whose computation adds a massive overhead. We demonstrate that with careful manipulation, the problem can be approximated using the per-sample gradients of only the last linear layer. This relies on our surprising finding that this layer alone reliably captures the core second-order statistics of the full-network per-sample gradients, making the surrogate problem highly representative of the original.
\item \emph{Implicit gradient computation:} Even for the last layer, materializing per-sample gradients poses a bottleneck. We leverage Khatri-Rao factorization properties to bypass materializing these gradients, completely eliminating the associated memory and speed overheads.
\item \emph{Efficient solver:} We propose a tailored solver for the resulting surrogate problem that rapidly converges using a small number of GPU-friendly iterations.
\end{itemize}

Overall, our framework is optimizer-agnostic and can be incorporated into widely used methods such as Stochastic Gradient Descent (SGD) with momentum and AdamW. Importantly, the introduced computational and memory overhead is small and naturally shrinks as the model size or input dimensions scale up. Empirically, we confirm that the proposed method successfully mitigates interference to individual samples in the current batch. Furthermore, evaluations on image classification benchmarks show improved generalization, suggesting that controlling update interference is a practical and complementary dimension for improving deep learning optimization.

\section{Literature Review}
Our work is most closely related to research in continual learning, where the objective is to prevent catastrophic forgetting of previously acquired knowledge when learning new tasks. In contrast, our setting is fundamentally different: rather than protecting past knowledge from new data, we aim to prevent the interfering influence of past optimization state on the incoming data.

Within the continual learning literature, the closest method to our approach is Gradient Episodic Memory (GEM) \cite{GEM_for_continual_learning}, which uses a similar notion of interference that is based on gradient alignment. To determine the update direction, GEM formulates an optimization problem  whose dimensionality is reduced from the parameter space to the number of samples. However, challenges such as reliance on large episodic memories and the efficient solution of the resulting optimization problem are not fully addressed. Subsequent works \cite{A-GEM_for_continual_learning, epsilon_A-GEM} alleviate some of these limitations by replacing episodic memories with averaged gradients over past samples. Related approaches enforce orthogonality or projection-based constraints between gradients of different tasks \cite{OrthogonalGD_CL, GradientProjectionMemory_CL}. Finally, recent work \cite{LLM_catastrophic_forgetting} shows that catastrophic forgetting effects persist in contemporary large language models, with severity increasing as model size grows.

By viewing each sample within the current mini-batch as a distinct task, we can draw parallels with multi-task learning (MTL). In \cite{game_theory_MTL}, each task is modeled as a player in a game, leading to an optimization problem that learns weights over per-task gradients and relies on a task–task Gram matrix capturing pairwise gradient interactions. In \cite{Multi-Task_ProjectGrads}, the authors use a notion of inter-task interference similar to ours and, analogously to \cite{A-GEM_for_continual_learning}, propose projecting averaged gradients to reduce conflicts across tasks.  In \cite{orthogonal_decomposition_MTL}, orthogonal decomposition is used to avoid interference between gradients of different tasks. In \cite{min_worst_conflict_MTL}, they consider weighting the gradients of each task so as to minimize the worst conflict. In \cite{equal_speed_MTL}, the weights of each task aim to equalize the learning speed across tasks while in \cite{orthonormal_gradients_MTL} they aim to construct orthonormal gradients between different tasks. 

In contrast to the above works, \cite{adatask_optimizer_MTL} explicitly accounts for the base optimizer and proposes adaptations for the MTL setting.  Similarly, our goal is to adapt practical optimizers to alter the optimization trajectory in a way that reduces, at each iteration, the harmful impact on the samples of the current mini-batch. In this work we focus on two widely used optimizers: AdamW \cite{AdamW} and SGD with momentum \cite{sutskever_sgd, nesterov_sgd}. AdamW is a variant of Adam \cite{adam_optimizer} that decouples weight decay from the gradient-based update. Other strong Adam-style methods include Adafactor \cite{adafactor_optimizer_1, adafactor_optimizer_2}, which reduces memory usage, LAMB \cite{lamb_optimizer}, which employs layer-wise adaptation, and RAdam \cite{radam_optimizer}, which corrects the variance of the adaptive learning rate. For SGD-style methods, competitive optimizers used in large-scale training include MUON \cite{muon_optimizer}, which encourages orthogonal updates, and Lion \cite{lion_optimizer}, which uses the sign of momentum.

An important element of this work is working with per-sample gradients without incurring excessive memory and computational costs. Per sample gradients and statistics like their norms are useful in various contexts like differential privacy \cite{use_per_example_g_differential_privacy}, predicting generalization \cite{use_per_example_g_gradient_diversity} and developing new optimizers \cite{use_per_example_g_optimizers}. Unfortunately, computing per-sample gradients is demanding. For convolutional neural networks, \cite{use_per_example_g_conv} propose a method to compute them more efficiently. Also libraries like Opacus \cite{use_per_example_g_opacus_library} and Backpack \cite{use_per_example_g_backpack_library} were developed for that purpose. We note that computing full per-sample gradients is costly in both memory and runtime. However, an efficient technique exists to compute their norms without explicitly calculating the full gradients. This trick was used in \cite{use_per_example_g_optimizers, use_per_example_g_differential_privacy_llm, use_per_example_g_differential_privacy_low_cost}.

\section{Motivation and Problem Formulation}

Let $F(\mathbf{x};\boldsymbol{\theta})$ denote a model with parameters $\boldsymbol{\theta} \in \mathbb{R}^{\Theta}$ processing input $\mathbf{x}$. Standard training proceeds in mini-batches of $B$ samples $\{\mathbf{x}_i\}_{i=1}^{B}$, each with a per-sample loss $\mathcal{L}_i(\boldsymbol{\theta})=\mathcal{L}(F(\mathbf{x}_i;\boldsymbol{\theta}))$. In practice, standard optimizers such as SGD and AdamW \cite{sutskever_sgd, adam_optimizer, AdamW} update parameters using only the \emph{aggregated} batch gradient $\mathbf{g}=\nabla_{\boldsymbol{\theta}} \frac{1}{B}\sum_{i=1}^{B} \mathcal{L}_i(\boldsymbol{\theta})$, combined with an internal optimization state (e.g., momentum). While this process reliably decreases the \textit{average} loss, the ultimate goal is typically to perform well on \textit{every} sample. This discrepancy is particularly evident in classification tasks, where cross-entropy is a surrogate for accuracy. Minimizing the mean loss does not guarantee improvement for every sample and may even fail to increase the number of correctly classified examples.

Beyond this mismatch in objectives, batch aggregation also discards valuable information about the batch: we lose visibility into per-sample contributions and conflicts, i.e., which examples are helped and which are interfered with by a parameter update.

Finally, practical optimizers blend the current batch gradient with historical information, such as momentum or adaptive moments. Consequently, the actual parameter update may deviate significantly from the descent direction of the current batch, potentially increasing the loss for a subset of the \emph{current} samples. In this sense, the update can ``catastrophically'' interfere with the current batch samples. This motivates a per-sample view of the update: rather than considering only the aggregated batch gradient, we seek robust updates that respect the descent direction of each sample.

Let $\mathbf{G} \in \mathbb{R}^{B \times \Theta}$ be the matrix of per-sample gradients, where each row  $\mathbf{G}_i = \frac{1}{B}\nabla_{\boldsymbol{\theta}} \mathcal{L}_i(\boldsymbol{\theta})$ corresponds to the gradient of the $i$-th sample. Using a first-order Taylor expansion, the change in loss for sample $i$ is:
\begin{align}
    \label{eq:probability-approximation}
   \mathcal{L}_i(\boldsymbol{\theta}+\Delta\boldsymbol{\theta}) &\approx \mathcal{L}_i(\boldsymbol{\theta}) + B\mathbf{G}_i^\top \Delta\boldsymbol{\theta}
\end{align} 
When $\mathbf{G}_i^\top \Delta\boldsymbol{\theta} > 0$, the loss for sample $i$ increases.  We characterize this increase as ``interference". According to Taylor's first-order approximation, the severity of the interference is proportional to $\max(0, \mathbf{G}_i^\top \Delta\boldsymbol{\theta})$. Ideally, we seek a direction $\Delta\boldsymbol{\theta}$ that improves \textit{every} sample simultaneously. In practice, however, conflicting gradients make this \textit{impossible}. 

Since we cannot guarantee that no sample is interfered with, our goal shifts to minimizing the total interference across the mini-batch.  We construct an objective $J(\Delta\boldsymbol{\theta})$ that linearly penalizes sample interference (i.e., when $\mathbf{G}_i^\top \Delta\boldsymbol{\theta} > 0$) while assigning zero penalty to directions that successfully decrease a sample's loss ($\mathbf{G}_i^\top \Delta\boldsymbol{\theta} \le 0$). Defining the helper function:
$$H(\mathbf{v}):= \sum_{i=1}^{B} \max(0, [\mathbf{v}]_i),$$
where $[\mathbf{v}]_i$ denotes the $i$-th element of vector $\mathbf{v}\in\mathbb{R}^B$., we can express our objective as $J(\Delta\boldsymbol{\theta}) := H(\mathbf{G} \Delta\boldsymbol{\theta})$. We therefore seek the optimal parameter update:
\begin{equation}\label{eq:conceptual_problem}
\Delta\boldsymbol{\theta}^\star = \arg\min_{\Delta\boldsymbol{\theta}} \; J(\Delta\boldsymbol{\theta}).   
\end{equation}

\subsection{Introducing a Constraint on the Update Direction}

Solving \cref{eq:conceptual_problem} directly presents several pitfalls. First, the trivial solution $\Delta\boldsymbol{\theta}=\mathbf{0}$ is technically optimal (zero interference) but ineffective for training the model. Second, \cref{eq:conceptual_problem} relies solely on the gradients of the current batch. In contrast, modern practical optimizers compute at each iteration a parameter update $\Delta\boldsymbol{\theta}_b$ that incorporates information from previous iterations (e.g., via momentum for SGD or adaptive moments for Adam/AdamW). This historical context is crucial for stabilizing the optimization trajectory. Deviating significantly from the standard update $\Delta\boldsymbol{\theta}_b$ discards this valuable information.

To address those issues, we constrain the proposed update $\Delta\boldsymbol{\theta}$ to remain within the vicinity of the standard optimizer's update $\Delta\boldsymbol{\theta}_b$. Formally, we restrict $\Delta\boldsymbol{\theta}$ to lie within a ball (measured by the Euclidean norm $\|\cdot\|$) centered at $\Delta\boldsymbol{\theta}_b$, with a radius proportional to the magnitude of $\Delta\boldsymbol{\theta}_b$:
\begin{equation}\label{eq:opt_problem_initial}
\Delta\boldsymbol{\theta}^\star {=}\arg\min_{\Delta\boldsymbol{\theta}} H(\mathbf{G} \Delta\boldsymbol{\theta}) \;\text{s.t.}\; \|\Delta\boldsymbol{\theta} {-} \Delta\boldsymbol{\theta}_b\| {\leq} c \|\Delta\boldsymbol{\theta}_b\|,
\end{equation}
where $c > 0$ is a hyperparameter that controls the radius of the \emph{trust region}. This formulation ensures that while we actively seek to minimize the interference to individual samples, we do so without negating the general descent direction $\Delta\boldsymbol{\theta}_b$ determined by the global loss and the optimizer's state.

Importantly, the optimization problem in \eqref{eq:opt_problem_initial} is convex, ensuring it can be solved efficiently. While other formalizations could be employed to mitigate sample interference, alternative designs require careful consideration. For instance, modifying the objective to minimize the discrete count of interfered-with samples renders the problem combinatorial and non-convex. Furthermore, in our initial experiments, we explored a GEM-like formulation \cite{GEM_for_continual_learning}, solving  $\min_{\Delta\boldsymbol{\theta}} ||\Delta\boldsymbol{\theta} - \Delta\boldsymbol{\theta}_b||^2$ s.t. $\mathbf{G}_i^\top \Delta\boldsymbol{\theta} \le 0\; \forall i$. However, we found its constraint to be often overly restrictive, forcing the solution $\Delta\boldsymbol{\theta}$ substantially far from the reference $\Delta\boldsymbol{\theta}_b$ and causing severe optimization instability.

\section{Challenges}

\cref{eq:opt_problem_initial} provides a principled framework  for reducing  per-sample interference. However, directly integrating it into a standard training loop presents several practical challenges:
\begin{itemize}[leftmargin=1.0em, parsep=0pt, topsep=0pt]
    \item \textbf{High Dimensionality:} The variable $\Delta\boldsymbol{\theta}$ lies in $\mathbb{R}^{\Theta}$. Since $\Theta$ is typically in the millions or billions for modern models, performing constrained optimization over such a high-dimensional space is computationally prohibitive.
    \item \textbf{Per-Sample Gradient Cost:} \cref{eq:opt_problem_initial} involves the per-sample gradient matrix $\mathbf{G}$. Computing and storing it incurs massive memory and runtime overheads. It is crucial to avoid explicitly materializing this matrix.
    \item \textbf{Solver Efficiency:} The solver for \cref{eq:opt_problem_initial} must be highly efficient and GPU-friendly to ensure that the training throughput is not significantly compromised.
\end{itemize}
In the following subsections, we address these challenges systematically.
%A key feature of reverse-mode Automatic Differentiation is that it never stores the gradients of individual elements of a batch. This increases the memory efficiency of the process

\subsection{High Dimensionality of Optimization Variables}
The  dimensionality of the problem can be reduced from the number of parameters $\Theta$ to the batch size $B$ (note $B \ll \Theta$).

\begin{proposition}
\label{prop:optimal_form}
Let $\mathcal{S}^*$ denote the set of optimal solutions of the optimization problem \eqref{eq:opt_problem_initial}. Then:
\begin{enumerate}
    \item There exists an optimal solution $\Delta\boldsymbol{\theta}^* \in \mathcal{S}^*$ such that:
    \begin{equation} \label{eq:form_of_optimal_dtheta}
        \Delta\boldsymbol{\theta}^* = \Delta\boldsymbol{\theta}_b + \mathbf{G}^\top \mathbf{w},
    \end{equation}
    where $\mathbf{w} \in \mathbb{R}^{B}$ and $\mathbf{w} \le \mathbf{0}$.
    \item Among all optimal solutions $\Delta\boldsymbol{\theta} \in \mathcal{S}^*$, the unique solution that minimizes the distance to the update $\Delta\boldsymbol{\theta}_b$ (i.e.,  $\min_{\Delta\boldsymbol{\theta} \in \mathcal{S}^*}\|\Delta\boldsymbol{\theta} - \Delta\boldsymbol{\theta}_b\|$) satisfies the form given in \cref{eq:form_of_optimal_dtheta}.
\end{enumerate}
\end{proposition}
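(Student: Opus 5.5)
We plan to prove part 2 first and then obtain part 1 as an immediate consequence, since part 2 exhibits a specific element of $\mathcal{S}^*$ of the required form. Write $\Delta\boldsymbol{\theta} = \Delta\boldsymbol{\theta}_b + \mathbf{d}$ and $r = c\|\Delta\boldsymbol{\theta}_b\|$. The problem \eqref{eq:opt_problem_initial} then reads $\min_{\|\mathbf{d}\|\le r} H(\mathbf{G}\Delta\boldsymbol{\theta}_b + \mathbf{G}\mathbf{d})$. This is a continuous convex function minimized over a compact convex set, so $\mathcal{S}^*$ is nonempty, closed and convex. The minimal-distance element of $\mathcal{S}^*$ is the Euclidean projection of $\Delta\boldsymbol{\theta}_b$ onto $\mathcal{S}^*$, and it is therefore unique. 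Call it $\Delta\boldsymbol{\theta}^\dagger = \Delta\boldsymbol{\theta}_b + \mathbf{d}^\dagger$.

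\textbf{Step 1: $\mathbf{d}^\dagger$ lies in the row space of $\mathbf{G}$.} Decompose $\mathbf{d}^\dagger = \mathbf{G}^\top\mathbf{u} + \mathbf{d}_\perp$ with $\mathbf{G}\mathbf{d}_\perp = \mathbf{0}$. Replacing $\mathbf{d}^\dagger$ by $\mathbf{G}^\top\mathbf{u}$ leaves the objective unchanged and does not increase the norm, so the new point is still feasible and still optimal. If $\mathbf{d}_\perp \neq \mathbf{0}$, this point would be strictly closer to $\Delta\boldsymbol{\theta}_b$, contradicting minimality. Hence $\mathbf{d}^\dagger = \mathbf{G}^\top\mathbf{u}$ for some $\mathbf{u}$.

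\textbf{Step 2: the coefficients can be chosen nonpositive.} This is the main obstacle, because $\mathbf{u}$ need not be unique or sign-definite when $\mathbf{G}$ is rank deficient. We use the characterization of $\mathbf{d}^\dagger$ as the minimum-norm point of $\{\mathbf{d} : \|\mathbf{d}\|\le r,\ H(\mathbf{G}(\Delta\boldsymbol{\theta}_b+\mathbf{d}))\le J^*\}$, where $J^*$ is the optimal value. If $J^*$ is attained at $\mathbf{d}=\mathbf{0}$, then $\mathbf{d}^\dagger = \mathbf{0}$ and we take $\mathbf{w}=\mathbf{0}$. Otherwise the ball constraint is inactive at the minimum-norm point, since $\mathbf{d}^\dagger$ has the smallest norm among optimal points. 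Slater's condition may fail for the level set, so instead of a KKT argument we apply a subgradient optimality condition for minimizing $\tfrac12\|\mathbf{d}\|^2$ over the convex set $\mathcal{C} = \{\mathbf{d} : h(\mathbf{d}) \le J^*\}$, where $h(\mathbf{d}) = H(\mathbf{G}(\Delta\boldsymbol{\theta}_b+\mathbf{d}))$. Because $h$ is polyhedral, the normal cone of $\mathcal{C}$ at $\mathbf{d}^\dagger$ is generated by the subgradients of the active max-affine pieces. This gives $-\mathbf{d}^\dagger \in \operatorname{cone}(\partial h(\mathbf{d}^\dagger))$ with no constraint qualification needed. Every element of $\partial h(\mathbf{d}^\dagger)$ has the form $\mathbf{G}^\top \mathbf{s}$ with $\mathbf{s}\in[0,1]^B$, with $s_i=1$ if $[\mathbf{G}\Delta\boldsymbol{\theta}^\dagger]_i>0$, $s_i=0$ if it is $<0$, and $s_i \in [0,1]$ otherwise. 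Therefore $\mathbf{d}^\dagger = \mathbf{G}^\top\mathbf{w}$ with $\mathbf{w} = -\lambda\mathbf{s} \le \mathbf{0}$ for some $\lambda \ge 0$.

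To justify the polyhedral normal-cone claim, write $h(\mathbf{d}) = \max_{S\subseteq[B]} \sum_{i\in S}[\mathbf{G}(\Delta\boldsymbol{\theta}_b+\mathbf{d})]_i$. Then $\mathcal{C}$ is a polyhedron cut out by the linear inequalities $\sum_{i\in S}\mathbf{G}_i^\top \mathbf{d} \le J^* - \sum_{i\in S}\mathbf{G}_i^\top\Delta\boldsymbol{\theta}_b$. Linear constraints need no constraint qualification, so KKT holds. The multipliers $\mu_S \ge 0$ give $\mathbf{d}^\dagger = -\sum_S \mu_S \mathbf{G}^\top\mathbf{1}_S$, which is again of the form $\mathbf{G}^\top\mathbf{w}$ with $\mathbf{w}\le\mathbf{0}$. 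This finishes part 2, and part 1 follows with $\Delta\boldsymbol{\theta}^* = \Delta\boldsymbol{\theta}^\dagger$.
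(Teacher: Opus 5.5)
Your proof is correct, but it obtains the sign condition by a genuinely different mechanism than the paper.

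The paper adds a Tikhonov term $\epsilon\|\boldsymbol{\delta}\|^2$ and applies KKT to the strictly convex regularized problem. This gives $\boldsymbol{\delta}_\epsilon=\mathbf{G}^\top\mathbf{w}_\epsilon$ with $\mathbf{w}_\epsilon=-\boldsymbol{\gamma}/(2(\epsilon+\lambda))\le\mathbf{0}$. It then lets $\epsilon\to0$ and asserts that $\boldsymbol{\delta}_\epsilon$ converges to the minimum-norm optimal point. You instead recognize $\mathbf{d}^\dagger$ as the projection of the origin onto the polyhedral sublevel set $\{h\le J^*\}$. You then use linear KKT (Farkas) to write $-\mathbf{d}^\dagger$ as a nonnegative combination of the active normals $\mathbf{G}^\top\mathbf{1}_S$.

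Your route buys three things. First, it needs no limit, whereas the paper leaves implicit both the Tikhonov convergence and the closedness of the finitely generated cone $\{\mathbf{G}^\top\mathbf{w}:\mathbf{w}\le\mathbf{0}\}$. Closedness is needed because only $\mathbf{G}^\top\mathbf{w}_\epsilon$, not $\mathbf{w}_\epsilon$ itself, is guaranteed to converge. Second, you prove uniqueness explicitly via projection onto the closed convex set $\mathcal{S}^*$. The paper's Part~3 takes uniqueness for granted after showing that a distance minimizer has $\boldsymbol{\delta}_\perp=\mathbf{0}$. Third, you get the extra structure $\mathbf{w}=-\lambda\mathbf{s}$ with $\mathbf{s}\in\partial H(\mathbf{G}\Delta\boldsymbol{\theta}^\dagger)$, so samples strictly helped at the solution receive $[\mathbf{w}]_i=0$.

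The paper's argument, in turn, only uses that subgradients of $H$ are nonnegative. It therefore transfers unchanged to any convex, coordinatewise nondecreasing penalty. Your constraint-qualification-free normal-cone step leans on $H$ being polyhedral: for a general convex $h$, the identity $N_{\{h\le J^*\}}(\mathbf{d}^\dagger)=\operatorname{cone}\partial h(\mathbf{d}^\dagger)$ can fail when $J^*$ is also the unconstrained minimum of $h$.

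One wording fix: the ball constraint need not be inactive in the strict sense. If $\mathcal{S}^*$ is a single point on the sphere, then $\|\mathbf{d}^\dagger\|=r$. What you actually use does hold, namely that $\mathbf{d}^\dagger$ minimizes the norm over $\{h\le J^*\}$ with the ball dropped. Any point of that set with smaller norm would lie in the ball and hence be optimal, contradicting minimality.
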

\begin{proof}
See Appendix \ref{appendix:proof-proposition-1} for the proof.
\end{proof}

Proposition \ref{prop:optimal_form} allows us to optimize over $\mathbf{w} \in \mathbb{R}^B$ instead of $\Delta\boldsymbol{\theta} \in \mathbb{R}^{\Theta}$. Substituting $\Delta\boldsymbol{\theta} = \Delta\boldsymbol{\theta}_b + \mathbf{G}^\top \mathbf{w}$ into \cref{eq:opt_problem_initial}, the optimization problem transforms into:
\begin{equation}\label{eq:opt_problem_initial_dual}
\begin{aligned}
\mathbf{w}^\star & = \arg\min_{\mathbf{w} \in \mathbb{R}^B}\;  H(\mathbf{Q} \mathbf{w} + \mathbf{G} \Delta\boldsymbol{\theta}_b) \\
&\text{s.t.}\;  \mathbf{w}^\top \mathbf{Q} \mathbf{w} \le c^2 \|\Delta\boldsymbol{\theta}_b\|^2, \; \mathbf{w} \leq \mathbf{0},
\end{aligned}
\end{equation}
where $\mathbf{Q} = \mathbf{G}\mathbf{G}^\top \in \mathbb{R}^{B \times B}$ is the Gram matrix (symmetric and positive semi-definite) of the per-sample gradients. 

A crucial property of the problem's defining quantities (i.e. $\mathbf{Q}, \mathbf{G} \Delta\boldsymbol{\theta}_b$, and  $\|\Delta\boldsymbol{\theta}_b\|^2$) is their \textbf{layer-wise decomposability}. Consider a network composed of $L$ layers, $F = F_L \circ F_{L-1} \circ \cdots \circ F_1$, where each layer $F_l(\cdot;\boldsymbol{\theta}_l)$ has its distinct parameters $\boldsymbol{\theta}_l \in \mathbb{R}^{\Theta_l}$. Let $\mathbf{G}_l \in \mathbb{R}^{B \times \Theta_l}$ denote the per-sample gradient matrix for layer $l$, and $\Delta\boldsymbol{\theta}_{b,l} \in \mathbb{R}^{\Theta_l}$ the corresponding slice of the update vector $\Delta\boldsymbol{\theta}_b$. With the full gradient matrix  $\mathbf{G} = [\mathbf{G}_1, \dots, \mathbf{G}_L]$ and partial Gram matrix $\mathbf{Q}_l = \mathbf{G}_l \mathbf{G}_l^\top$, the defining quantities decompose as:
\[
\mathbf{Q} {{=} {\sum_{l=1}^L}} \mathbf{Q}_l, \; \mathbf{G} \Delta\boldsymbol{\theta}_b {{=} {\sum_{l=1}^L}} \mathbf{G}_l \Delta\boldsymbol{\theta}_{b,l}, \;\|\Delta\boldsymbol{\theta}_b\|^2 {=} {\sum_{l=1}^L} \|\Delta\boldsymbol{\theta}_{b,l}\|^2.
\]

\subsection{Naive solutions do not work}
Although \cref{eq:opt_problem_initial_dual} mitigates the dimensionality bottleneck, the strict dependence on per-sample gradients remains, required \textit{both} to form the reduced problem (due to $\mathbf{Q}$ and $\mathbf{G}\Delta\boldsymbol{\theta}_b$ involving $\mathbf{G}$) \emph{and} to reconstruct the update $\Delta\boldsymbol{\theta}^\star=\Delta\boldsymbol{\theta}_b+\mathbf{G}^\top\mathbf{w}^\star$.

Before deriving our solution, we note that two straightforward implementation strategies fail in practice:

- \textbf{Memory-prohibitive approach (store full $\mathbf{G}$).} Compute and store $\mathbf{G}$. Solve \cref{eq:opt_problem_initial_dual} for $\mathbf{w}^\star$, and finally compute $\mathbf{G}^\top \mathbf{w}^\star$. In practice, this approach is only feasible for very small models, as it scales memory by a factor of $B$.

- \textbf{Slow approach (second backward).} During the backward pass, accumulate the quantities $\mathbf{Q}_l$, $\mathbf{G}_l \Delta\boldsymbol{\theta}_{b,l}$, and $\|\Delta\boldsymbol{\theta}_{b,l}\|^2$ per layer (without storing $\mathbf{G}_l$) to form \cref{eq:opt_problem_initial_dual}. Solve for $\mathbf{w}^\star$ and obtain $\mathbf{G}^\top\mathbf{w}^\star$ via a second backward pass on the weighted loss $\frac{1}{B}\sum_{i=1}^{B} [\mathbf{w}^\star]_i \mathcal{L}_i$. Beyond requiring custom backward kernels at each layer to produce these quantities, the second backward pass adds significant runtime overhead.

In the next subsection, we introduce an approximation that avoids explicitly forming $\mathbf{G}$ while still enabling efficient construction of \cref{eq:opt_problem_initial_dual}.

\subsection{Approximating the Full Quantities via Last Layers}
We avoid requiring the full per-sample gradient matrix $\mathbf{G}$ by restricting computations to the last few layers (from $\ell$ to $L$) and utilizing the partial quantities $\mathbf{Q}_{\ell{:}L} = \sum_{k=\ell}^L \mathbf{G}_k \mathbf{G}_k^\top$, $(\mathbf{G}\Delta\boldsymbol{\theta}_b)_{\ell{:}L} = \sum_{k=\ell}^L \mathbf{G}_k \Delta\boldsymbol{\theta}_{b,k}$, and $\|\Delta\boldsymbol{\theta}_{b,\ell{:}L}\|^2 = \sum_{k=\ell}^L \|\Delta\boldsymbol{\theta}_{b,k}\|^2$. Specifically we propose the simple approximation that the Gram matrix of the full network is proportional to that of the last layers: 
\begin{equation}\label{eq:last_layers_approximation_Q}
\begin{aligned}
\mathbf{Q} \approx \alpha \mathbf{Q}_{\ell{:}L},
\end{aligned}
\end{equation}
for an unknown scaling factor $\alpha > 0$. 

Remarkably, under vanilla SGD (i.e. $\Delta\boldsymbol{\theta}_b = -\mathrm{lr}\cdot \mathbf{G}^\top \mathbf{1}$ where $\mathbf{1}$ is the all-ones vector), this single assumption propagates to the remaining quantities:
$$\mathbf{G}\Delta\boldsymbol{\theta}_b = -\mathrm{lr} \mathbf{Q}\mathbf{1} \approx \alpha \big({-\mathrm{lr}} \mathbf{Q}_{\ell{:}L}\mathbf{1}\big) = \alpha (\mathbf{G}\Delta\boldsymbol{\theta}_b)_{\ell{:}L},$$
$$\|\Delta\boldsymbol{\theta}_b\|^2 = \mathrm{lr}^2 \mathbf{1}^\top \mathbf{Q} \mathbf{1} \approx \alpha \big(\mathrm{lr}^2 \mathbf{1}^\top \mathbf{Q}_{\ell{:}L} \mathbf{1}\big) = \alpha \|\Delta\boldsymbol{\theta}_{b,\ell{:}L}\|^2.$$

What makes this result powerful is that when we inject these scaled approximations into \cref{eq:opt_problem_initial_dual}, \textbf{the unknown factor $\alpha$ completely vanishes} from both the objective function and the constraints. The approximated surrogate problem is invariant to $\alpha$,  mapping perfectly back to \cref{eq:opt_problem_initial_dual}  but with all defining quantities replaced by their partial, last-layer counterparts:
\begin{equation}\label{eq:opt_problem_last_layers_approx}
\begin{aligned}
\mathbf{w}^\star & = \arg\min_{\mathbf{w} \in \mathbb{R}^B}\; H( \mathbf{Q}_{\ell{:}L} \mathbf{w} + (\mathbf{G} \Delta\boldsymbol{\theta}_b)_{\ell{:}L}) \\
&\text{s.t.}\; \mathbf{w}^\top \mathbf{Q}_{\ell{:}L} \mathbf{w} \le c^2 \|\Delta\boldsymbol{\theta}_{b,\ell{:}L}\|^2, \; \mathbf{w} \leq \mathbf{0},
\end{aligned}
\end{equation}

\begin{figure}[h!]
    \centering
    \begin{subfigure}[b]{0.49\columnwidth}
        \centering
        \includegraphics[width=\textwidth]{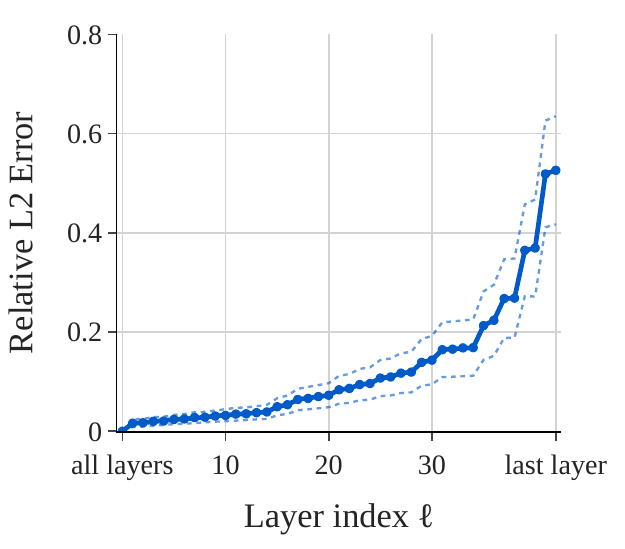}
        \includegraphics[width=\textwidth]{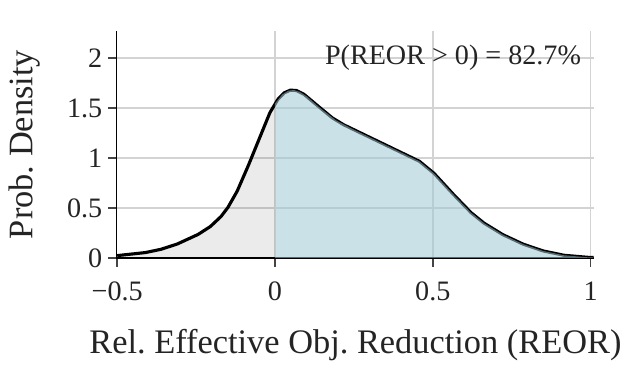}
        \caption{ResNet-20, CIFAR-10}
    \end{subfigure}
    \begin{subfigure}[b]{0.49\columnwidth}
        \centering
        \includegraphics[width=\textwidth]{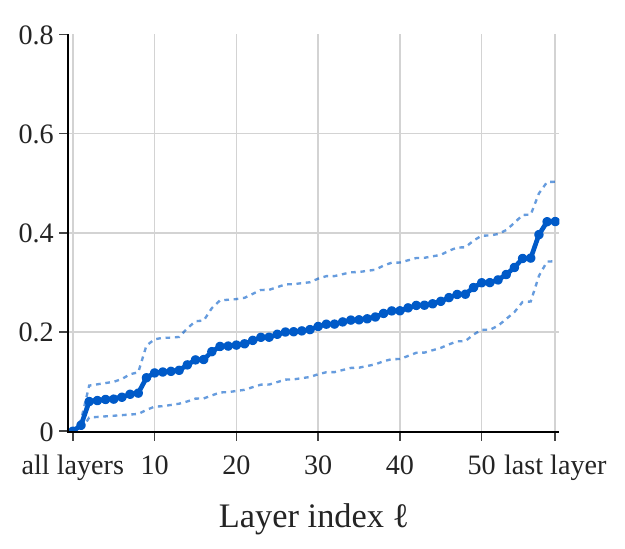}
        \includegraphics[width=\textwidth]{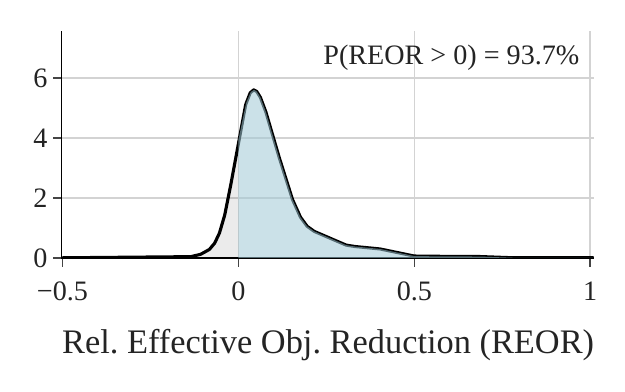}
        \caption{ViT-small, CIFAR-100}     
    \end{subfigure}
    \caption{Approximation quality at epoch 100. \textbf{Top Row:} Relative $L_2$ error of the Gram matrix approximation, $\frac{\|\mathbf{Q} - \alpha \mathbf{Q}_{\ell:L}\|_F }{ \|\mathbf{Q}\|_F}$, as a function of included layers (lower is better). The solid curve shows the mean, and the two dotted curves show one standard deviation above and below the mean at each point. \textbf{Bottom Row:} Distribution of the relative \textit{effective} objective reduction (REOR), $\frac{J(\Delta\boldsymbol{\theta}_b) - J(\Delta\boldsymbol{\theta}^\star)}{J(\Delta\boldsymbol{\theta}_b)}$, on the original problem (higher is better).}
    \label{fig:last_layers_approx_quality}
\end{figure}

To confirm the surprising robustness of this approximation we conduct the following experiments:
\paragraph{Experiment: Approximation Quality of $\mathbf{Q}$.}
We measure how much information about the full-network \emph{second-order statistics} $\mathbf{Q}$ is retained by the last layers, i.e., how accurate \cref{eq:opt_problem_last_layers_approx} is. To evaluate this, we compute the full per-sample gradients $\mathbf{G}$ and with that $\mathbf{Q}$ and $\mathbf{Q}_{\ell:L}$ for every layer $\ell$. In the top row of \cref{fig:last_layers_approx_quality} we depict the relative Frobenius error $\|\mathbf{Q} - \alpha \mathbf{Q}_{\ell:L}\|_F / \|\mathbf{Q}\|_F$, where $\alpha$ is chosen to minimize this error; this scaling is appropriate because the surrogate problem is invariant to $\alpha$ (details in \cref{appendix:additional_figures_approx}). The figures reveal that the error diminishes rapidly as earlier layers are included. Even restricting solely to the last linear layer ($\ell=L$) retains significant structural information, maintaining an average error of around 0.5 or less.\footnote{We note that for the ViT model, the matrices incorporate AdamW weighting, i.e. $\mathbf{Q}_1$ and $\mathbf{Q}_{1,\ell:L}$ (defined in \cref{sec:SGD and AdamW adaptations}).}

\paragraph{Experiment: End-to-End Interference Reduction.}
Our ultimate goal is to incorporate per-sample interference reduction into advanced optimizers (SGD with momentum, AdamW). However, their intricate update rules make applying the last-layer approximation in a strict theoretical manner intractable. To circumvent this, we propose using \cref{eq:opt_problem_last_layers_approx} as a general surrogate and adapt it for other optimizers. Although \cref{eq:opt_problem_last_layers_approx} is derived assuming vanilla SGD, and applying it elsewhere introduces an inconsistency, we show in practice that the resulting adaptations (\cref{eq:opt_problem_sgd} for SGD with momentum and \cref{eq:opt_problem_adamw} for AdamW, detailed in \cref{sec:SGD and AdamW adaptations}) remain faithful proxies for our ultimate objective of reducing total per-sample interference (\cref{eq:conceptual_problem}), even when restricting the approximation to the last  layer ($\ell=L$). We quantify the entire pipeline's effectiveness via the \textit{relative effective objective reduction} (REOR): $\left(J(\Delta\boldsymbol{\theta}_b) - J(\Delta\boldsymbol{\theta}^\star)\right)/J(\Delta\boldsymbol{\theta}_b)$, where $\Delta\boldsymbol{\theta}^\star {=} \Delta\boldsymbol{\theta}_b {+} \mathbf{G}^\top\mathbf{w}^\star$ and $\mathbf{w}^\star$ is the solution to the respective problem. Despite the underlying simplifications, the bottom row of \cref{fig:last_layers_approx_quality} shows that the majority of updates result in positive reductions, confirming that the optimized $\Delta\boldsymbol{\theta}^\star$ is systematically less interfering than the standard update $\Delta\boldsymbol{\theta}_b$. Additional figures and details are provided in \cref{appendix:additional_figures_approx}.

\subsection{Adaptation to Modern Optimizers}\label{sec:SGD and AdamW adaptations}

In this section, we extend \cref{eq:opt_problem_last_layers_approx} to modern optimizers: SGD with momentum and AdamW. A critical requirement for these adaptations is to completely avoid the explicit materialization of $\mathbf{G}$  at any stage, including when computing the final update $\Delta\boldsymbol{\theta}^\star = \Delta\boldsymbol{\theta}_b + \mathbf{G}^\top \mathbf{w}^\star$. We structure our surrogate problems so that their optimal solution $\bm{w}^\star$ allows $\Delta\boldsymbol{\theta}^\star$ to be implicitly realized. Specifically, \textit{we aim to compute the update via a standard backward pass on a weighted loss}, $\sum_{i=1}^{B} [\bm{w}^\star]_i \mathcal{L}_i$, naturally leveraging the underlying optimizer. Finally, as we advocate restricting the approximation to the very last layer, we henceforth drop the $\ell{:}L$ subscript and simply denote it by $L$.

\paragraph{SGD with Momentum:}
We consider the standard in deep learning SGD formulation \cite{sutskever_sgd}:
\begin{align*}
\mathbf{g}_t &\leftarrow \mathbf{g}_t + \mathrm{wd} \cdot \boldsymbol{\theta}_t ,\\
\mathbf{v}_t &\leftarrow  \mathrm{mom} \cdot \mathbf{v}_{t-1} + \mathbf{g}_t ,\\
\boldsymbol{\theta}_{t+1} &\leftarrow \boldsymbol{\theta}_t - \mathrm{lr} \cdot\mathbf{v}_t ,
\end{align*}
where $\mathrm{wd}$ is weight decay, $\mathrm{mom}$ is momentum factor, $\mathrm{lr}$ is the learning rate, and $\mathbf{v}_t$ is the momentum vector. Hence, the standard update $\Delta\boldsymbol{\theta}_b = \boldsymbol{\theta}_{t+1} - \boldsymbol{\theta}_t$ is equal to:
\begin{equation}\label{eq:sgd_dg}
    \Delta\boldsymbol{\theta}_b = - \mathrm{lr} \cdot \mathbf{v}_t
       = - \mathrm{lr}\cdot (\mathrm{mom} \cdot \mathbf{v}_{t-1}
         + \mathrm{wd} \cdot \boldsymbol{\theta}_t
         + \mathbf{g}_t) .
\end{equation}
Since $\mathbf{g}_t = \mathbf{G}^\top \mathbf{1}$, the target term $\mathbf{G}^\top \mathbf{w}^\star$ can be \textbf{automatically} incorporated by adjusting the loss to:
\begin{equation}\label{eq:weighted_CE}
    \mathcal{L}_{\bm{w}^\star}
       = \frac{1}{B}\sum_{i=1}^{B} \big(1 - \frac{[\mathbf{w}^\star]_i}{\mathrm{lr}}\big) \mathcal{L}_i .
\end{equation}
This is because the aggregated gradients of $\mathcal{L}_{\bm{w}^\star}$ are $\tilde{\mathbf{g}}_t = \mathbf{g}_t - \frac{1}{\mathrm{lr}}\mathbf{G}^\top \mathbf{w}^\star$ and substituting $\mathbf{g}_t$ with $\tilde{\mathbf{g}}_t$ in \cref{eq:sgd_dg} gives the desired updates $\Delta\boldsymbol{\theta}^\star = \Delta\boldsymbol{\theta}_b + \mathbf{G}^\top \mathbf{w}^\star$.

Finally, to make the formulation clearer, we perform the change of variables $\bm{w} \leftarrow -\frac{1}{\mathrm{lr}}\mathbf{w}$ so that the resulting optimization problem directly yields the \textit{additional relative weight} assigned to each sample. Substituting this into \cref{eq:opt_problem_initial_dual}, we obtain the optimization 
\begin{equation}\label{eq:opt_problem_sgd}
\boxed{
\begin{aligned}
\;&\arg\min_{\bm{w} \in \mathbb{R}^B}\; H\big(- \mathbf{Q}_L \bm{w}+ \frac{1}{\mathrm{lr}}\mathbf{G}_L \Delta\boldsymbol{\theta}_{b,L}\big) \\
&\text{s.t.}\; \bm{w}^\top \mathbf{Q}_L \bm{w}  \le \Big(\frac{c}{\mathrm{lr}}\Big)^2 \|\Delta\boldsymbol{\theta}_{b,L}\|^2, \; \mathbf{0} \leq \bm{w} \leq \mathbf{2}\quad
\end{aligned}
}
\end{equation}
Because the problem relies on the last-layer approximation, we add the constraint $\bm{w} \leq \mathbf{2}$ (where $\mathbf{2}=2\mathbf{1}$) as a safeguard to ensure that no sample is unreasonably weighted. After solving \cref{eq:opt_problem_sgd}, the interference-reducing parameter update $\Delta\boldsymbol{\theta}^\star$ can be implicitly computed using the standard SGD optimizer on  the weighted loss:
\begin{equation}\label{eq:weighted_CE_final}
    \mathcal{L}_{\bm{w}^\star} = \frac{1}{B}\sum_{i=1}^{B} (1 + [\bm{w}^\star]_i) \mathcal{L}_i
\end{equation}

\paragraph{AdamW:}
The AdamW \cite{AdamW} optimizer updates are:
\begin{align*}
\mathbf{m}_t &\leftarrow \beta_1 \mathbf{m}_{t-1} + (1-\beta_1) \mathbf{g}_t ,\\
\bm{v}_t &\leftarrow \beta_2 \bm{v}_{t-1} + (1-\beta_2) \mathbf{g}_t^2 ,\\
\boldsymbol{\theta}_{t+1} &\leftarrow \boldsymbol{\theta}_t - \mathrm{lr} \cdot \left( \frac{\mathbf{m}_t/(1-\beta_1^{t})}{\sqrt{\bm{v}_t/(1-\beta_2^{t})} + \varepsilon} + \mathrm{wd} \cdot \boldsymbol{\theta}_t \right),
\end{align*}
where $\varepsilon$ is a small constant, and all divisions, squaring, and square roots are performed element-wise. The standard update $\Delta\boldsymbol{\theta}_b=\boldsymbol{\theta}_{t+1}-\boldsymbol{\theta}_t$ is given by:
\begin{equation}\label{eq:standard adamW dg}
    \Delta\boldsymbol{\theta}_b = - \mathrm{lr} \cdot \Big(\mathrm{wd} \cdot \boldsymbol{\theta}_t
         + \frac{\beta_1}{(1-\beta_1)}\boldsymbol{\kappa} \odot  \mathbf{m}_{t-1}
         + \cdot \boldsymbol{\kappa} \odot \mathbf{g}_t\Big)
\end{equation}
with $\boldsymbol{\kappa} = \frac{(1-\beta_1)/(1-\beta_1^{t})}{\sqrt{\bm{v}_t/(1-\beta_2^{t})} + \varepsilon}$ and $\odot$ the Hadamard product. 

Optimizing here the weighted loss of  \cref{eq:weighted_CE_final}, creates gradients $\tilde{\mathbf{g}}_t = \mathbf{g}_t + \mathbf{G}^\top \bm{w}^\star$ which produce the modified second moment $\tilde{\bm{v}}_t = \beta_2 \bm{v}_{t-1} + (1-\beta_2)\tilde{\mathbf{g}}_t^2 $. Since $\beta_2$ is typically very close to 1 (e.g., $0.999$),
%evolves slowly and is relatively insensitive to the specific gradient of the current batch
we can safely assume $\tilde{\bm{v}}_t \approx \bm{v}_t$ , and consequently the modified scaling factor $\tilde{\boldsymbol{\kappa}} \approx \boldsymbol{\kappa}$. Therefore, the weighted loss results in the updates:
\begin{align} 
\Delta\boldsymbol{\theta}  &= - \mathrm{lr} \cdot \mathrm{wd} \cdot \boldsymbol{\theta}_t
         - \mathrm{lr} \cdot \frac{\beta_1}{(1-\beta_1)}\tilde{\boldsymbol{\kappa}} \odot  \mathbf{m}_{t-1}
         - \mathrm{lr} \cdot \tilde{\boldsymbol{\kappa}} \odot \tilde{\mathbf{g}}_t \nonumber \\  
         &\approx \Delta\boldsymbol{\theta}_b - \mathrm{lr}  \cdot  \mathbf{D}_{\boldsymbol{\kappa}} \mathbf{G}^\top \bm{w}^\star ,  \label{eq:approx_dtheta_adamw}  
\end{align}
where $\mathbf{D}_{\boldsymbol{\kappa}} = \operatorname{diag}(\boldsymbol{\kappa})$. Unlike SGD, the element-wise division by $\sqrt{\hat{\bm{v}}_t}$ introduces this diagonal scaling $\mathbf{D}_{\boldsymbol{\kappa}}$, meaning the optimal form described in Proposition~\ref{prop:optimal_form} no longer strictly applies. To proceed, we revisit our initial optimization problem \eqref{eq:opt_problem_initial} and directly substitute $\Delta\boldsymbol{\theta}$ using \cref{eq:approx_dtheta_adamw}. Given the structural similarities with SGD case, we choose to retain the non-negativity constraint $\bm{w} \geq \mathbf{0}$, alongside the bounding constraint $\bm{w} \leq \mathbf{2}$ ensuring reasonable weighting. Incorporating also the last-layer restriction the AdamW surrogate problem is:
\begin{equation}\label{eq:opt_problem_adamw}
\boxed{
\begin{aligned}
\;&\arg\min_{\bm{w} \in \mathbb{R}^B}\;  H\big(- \mathbf{Q}_{1,L} \bm{w}+ \frac{1}{\mathrm{lr}}\mathbf{G}_L \Delta\boldsymbol{\theta}_{b,L}\big) \\
&\text{s.t.}\;  \bm{w}^\top \mathbf{Q}_{2,L} \bm{w}  \le \Big(\frac{c}{\mathrm{lr}}\Big)^2 \|\Delta\boldsymbol{\theta}_{b,L}\|^2, \; \mathbf{0} \leq \bm{w} \leq \mathbf{2}\;
\end{aligned}
}
\end{equation}
where  $\mathbf{Q}_{1,L} = \mathbf{G}_L \mathbf{D}_{\boldsymbol{\kappa},L} \mathbf{G}^\top_L$ and $\mathbf{Q}_{2,L} = \mathbf{G}_L \mathbf{D}_{\boldsymbol{\kappa},L}^2 \mathbf{G}^\top_L$ are the weighted partial Gram matrices, $\mathbf{D}_{\boldsymbol{\kappa},L} = \operatorname{diag}(\boldsymbol{\kappa_L})$  and $\boldsymbol{\kappa_L}$ refers to the partial of $\boldsymbol{\kappa}$ corresponding to the $L$-th layer.

\subsection{The Final Nail for Per-Sample Gradients}\label{sec:final nail}
For linear layers, we can compute the defining quantities of our optimization problems (\cref{eq:opt_problem_sgd,eq:opt_problem_adamw}) without explicitly materializing the per-sample gradients. Combined with our last-layer approximation strategy ($\ell=L$), which in almost all state-of-the-art architectures is a linear projection, this property means we can \textbf{completely bypass the expensive computation and storage of $\mathbf{G}$, or any of its partials}.

Assume the $L$-th layer is a linear layer with parameters $\boldsymbol{\theta}_L \in \mathbb{R}^{d_{out} \times d_{in}}$, receiving input $\mathbf{x}_L \in \mathbb{R}^{B \times d_{in}}$ and  outputs $\mathbf{y}_L = \mathbf{x}_L\boldsymbol{\theta}_L^\top$. Let $\mathbf{g}_{\mathbf{y}_L} = \nabla_{\mathbf{y}_L} \mathcal{L} \in \mathbb{R}^{B \times d_{out}}$ denote the gradient of the loss with respect to these outputs. The per-sample gradient is $\mathbf{G}_L = \mathbf{g_y} \otimes \mathbf{x} \in \mathbb{R}^{B \times (d_{out} d_{in})}$, where $\otimes$ here denotes the Khatri-Rao (i.e., row-wise Kronecker) product. Using the properties of this product, the partial Gram matrix $\mathbf{Q}_L = \mathbf{G}_L \mathbf{G}_L^\top $ can be factorized as:
\begin{equation*}
    \mathbf{Q}_L = (\mathbf{x}_L \mathbf{x}_L^\top) \odot (\mathbf{g}_{\mathbf{y}_L} \mathbf{g}_{\mathbf{y}_L}^\top).
\end{equation*}
Computing $\mathbf{Q}_L $ through this factorization elegantly eliminates the need to materialize $\mathbf{G}_L$, circumventing the prohibitive $\mathcal{O}(B \cdot d_{in} \cdot d_{out})$ memory cost. Furthermore, it reduces the computational time complexity of forming $\mathbf{Q}_L$ from $\mathcal{O}(B^2 \cdot d_{in} \cdot d_{out})$ to just $\mathcal{O}(B^2(d_{in} + d_{out}))$.

For AdamW, the inclusion of the diagonal scaling $\mathbf{D}_{\boldsymbol{\kappa},L}$ makes the exact factorization of $\mathbf{Q}_{1,L} = \mathbf{G}_L \mathbf{D}_{\boldsymbol{\kappa},L} \mathbf{G}_L^\top$ impossible in a single step\footnote{The same approximation applies to $\mathbf{Q}_{2,L}$ by replacing $\boldsymbol{\kappa}_L$ with $\boldsymbol{\kappa}_L^2$.}. To resolve this, we first factorize $\boldsymbol{\kappa}_L \in \mathbb{R}^{d_{out} \times d_{in}}$ using a low-rank Singular Value Decomposition (SVD). Let $\boldsymbol{\kappa}_L \approx \mathbf{U}^r \mathbf{S}^r {\mathbf{V}^r}^\top$, where $\mathbf{U}^r \in \mathbb{R}^{d_{out} \times r}$, $\mathbf{V}^r \in \mathbb{R}^{d_{in} \times r}$, and $\mathbf{S}^r = \operatorname{diag}(\sigma_1, \dots, \sigma_r)$ contains the $r$ largest singular values ($r \leq \min(d_{in}, d_{out})$). Denoting the $k$-th columns of $\mathbf{U}^r$ and $\mathbf{V}^r$ by $\mathbf{u}_k$ and $\mathbf{v}_k$ respectively, we approximate $\mathbf{Q}_{1,L}$ as:
\begin{equation*}
    \mathbf{Q}_{1,L}^r = \sum_{k=1}^r \sigma_k \big( \mathbf{x}_L \operatorname{diag}(\mathbf{v}_k) \mathbf{x}_L^\top \big) \odot \big( \mathbf{g}_{\mathbf{y}_L} \operatorname{diag}(\mathbf{u}_k) \mathbf{g}_{\mathbf{y}_L}^\top \big).
\end{equation*}
This approximation becomes exact (i.e. $\mathbf{Q}_{1,L} =\mathbf{Q}_{1,L}^r $) when $r$ equals the rank of $\boldsymbol{\kappa}_L$. In practice, however, using even $r\leq 5$ is sufficient to bound the relative Frobenius error $\|\mathbf{Q}_{1,L} - \mathbf{Q}_{1,L}^r\|_F / \|\mathbf{Q}_{1,L}\|_F$ to less than $2\%$. 

In \cref{appendix:Khatri-rao}, we demonstrate that as batch sizes or layer dimensions scale, this Khatri-Rao methodology yields \emph{order-of-magnitude savings in both memory and runtime for both optimizers}. The appendix also provides further intuition for the SVD approximation, the corresponding PyTorch code, detailed profiling metrics, and an explanation of how the remaining quantity $\mathbf{G}_L \Delta\boldsymbol{\theta}_{b,L}$ is efficiently computed without materializing $\mathbf{G}_L$.

\subsection{Efficient Solver for the Optimization Problem}

The optimization problem we need to solve has the form:
\begin{equation} \label{eq:opt_problem_for_CP}
\begin{aligned}
\bm{w}^\star 
&= \arg\min_{\bm{w} \in \mathcal{C}}
    \;  H\big(- \mathbf{Q}_1 \bm{w} + \mathbf{q} \big)
\end{aligned}
\end{equation}
where $\mathcal{C} {=} \left\{ \bm{w}\in\mathbb{R}^B \middle| \bm{w}^\top \mathbf{Q}_2 \bm{w} \le r, \mathbf{0}\le \bm{w}\le \mathbf{2} \right\}$ is the domain, $\mathbf{Q}_1,\mathbf{Q}_2 {\in} \mathbb{R}^{B\times B}$ are symmetric positive semi-definite matrices (with $\mathbf{Q}_1{=}\mathbf{Q}_2$ for SGD case), $\mathbf{q}\in\mathbb{R}^B$, and $r{>}0$.

Despite being convex, solving \cref{eq:opt_problem_for_CP} efficiently is non-trivial due to: (i) the non-smoothness of $H$; (ii) the complex feasible domain arising from the intersection of an ellipsoidal constraint with bounding box constraints; and (iii) the necessity of a fully GPU-resident routine.

To address these challenges, we choose the Chambolle--Pock (CP) primal--dual algorithm \cite{chambolle_pock}, as it is particularly well-suited for non-smooth optimization under simple convex constraints.\footnote{In practice, we perform a few warm-up steps of Projected Gradient Descent to quickly provide a good starting point for CP.} 
In brief, the CP algorithm selects step sizes $\tau, \sigma > 0$ satisfying $\tau\sigma \|\mathbf{Q}_1\|_\sigma^2 < 1$, where $\|\mathbf{Q}_1\|_\sigma$ denotes the spectral norm of $\mathbf{Q}_1$, and an extrapolation parameter $\theta {\in} [0,1]$. Each iteration takes the form:
\begin{align*}
\mathbf{y}^{k+1} &= \Pi_{[0,1]^B}\!\left( \mathbf{y}^{k} + \sigma \left(-\mathbf{Q}_1 \widetilde{\bm{w}}^{k} + \mathbf{q} \right) \right)\\
\bm{w}^{k+1} &= \Pi_{\mathcal{C}}\!\left( \bm{w}^{k} + \tau \mathbf{Q}_1 \mathbf{y}^{k+1} \right) \\
\widetilde{\bm{w}}^{k+1} &= \bm{w}^{k+1} + \theta \left(\bm{w}^{k+1} - \bm{w}^k\right)   %\label{eq:cp_extrapolation_step} \label{eq:cp_primal_step}  \label{eq:cp_dual_step}
\end{align*}
Here, $\Pi_{[0,1]^B}$ denotes the Euclidean projection onto the unit hypercube $[0,1]^B$, which can be trivially computed by an element-wise clipping operation (time complexity $\mathcal{O}(B)$). In contrast, computing the projection $\Pi_{\mathcal{C}}$ onto the domain $\mathcal{C}$ is more involved; nevertheless, it can be adequately approximated in $\mathcal{O}(B^2)$ time as detailed in \cref{appendix:chambolle_pock_details}. The resulting overall time complexity is $\mathcal{O}(I_{solver} B^2)$, where $I_{solver}$ represents the total number of iterations (we restrict $I_{solver} \le 50$). Crucially, the solver relies only on \textbf{element-wise operations} and \textbf{matrix-vector multiplications}, making it highly efficient on GPUs. Full mathematical and implementation details are provided in \cref{appendix:chambolle_pock_details}.

\paragraph{Experiment: Evaluating solver efficiency.}
We evaluate the performance of our solver in terms of convergence speed and solution quality. We quantify quality by computing the \textit{relative solver objective reduction}, defined as $(H(\mathbf{q}) {-} H(-\mathbf{Q}_X\bm{w}^\star {+} \mathbf{q}))/H(\mathbf{q})$, where $\mathbf{Q}_X$ corresponds to the corresponding matrix $\mathbf{Q}_L$ for SGD (\cref{eq:opt_problem_sgd}) or $\mathbf{Q}_{1,L}$ for AdamW (\cref{eq:opt_problem_adamw}). We remark that $H(\mathbf{q})$ and $H(-\mathbf{Q}_X\bm{w}^\star {+} \mathbf{q})$ represent the interference generated by the standard update and our method, respectively.  Hence, the numerator represents the interference avoided by using our solver's output $\bm{w}^\star$. As illustrated in \cref{fig:solver_convergence_main}, the solver consistently converges within a small number of iterations while securing substantial objective reductions. Additional plots are provided in \cref{appendix:solver_efficiency_plots}.

\begin{algorithm}[t!]
\caption{Proposed Training Iteration for SGD}
\label{alg:training_loop}
\begin{algorithmic}[1]
\STATE {\bfseries Input:} Batch $\mathbf{x}$
\STATE $\mathbf{x}_L \leftarrow F_{1:L-1}(\mathbf{x}; \boldsymbol{\theta}_{1:L-1})$\hspace{35pt} \COMMENT{Backbone}
\STATE $\mathbf{y}_L \leftarrow F_L(\mathbf{x}_L; \boldsymbol{\theta}_L)$ \hspace{59pt} \COMMENT{Last layer}
\STATE $\mathcal{L} \leftarrow \mathcal{L}(\mathbf{y}_L)$ 
\STATE \textcolor{mutedred}{Compute $\mathbf{Q}_L, \mathbf{G}_L \Delta\boldsymbol{\theta}_{b,L}, \|\Delta\boldsymbol{\theta}_{b,L}\|$ }\COMMENT{Sections \ref{sec:final nail}, \ref{appendix:Khatri-rao},\ref{appendix:stability_preconditioning}}
%\STATE \textcolor{mutedred}{Solve \cref{eq:opt_problem_sgd} for $\bm{w}^\star$} \COMMENT{\cref{alg:cp_full} in \cref{appendix:chambolle_pock_details}}
\STATE \textcolor{mutedred}{Solve Equation (10) for $\bm{w}^\star$} \hspace{25pt} \COMMENT{\cref{alg:cp_full}}
\STATE  $\mathcal{L}_{\bm{w}^\star} \leftarrow  \sum_{i=1}^{B} (1 + [\bm{w}^\star]_i) \mathcal{L}_i$
\STATE Compute gradients of $\mathcal{L}_{\bm{w}^\star}$ and execute an SGD step
\end{algorithmic}
\end{algorithm}

\begin{figure}[t!]
    \centering
    \begin{subfigure}[b]{0.49\columnwidth}
        \centering
        \includegraphics[width=\textwidth]{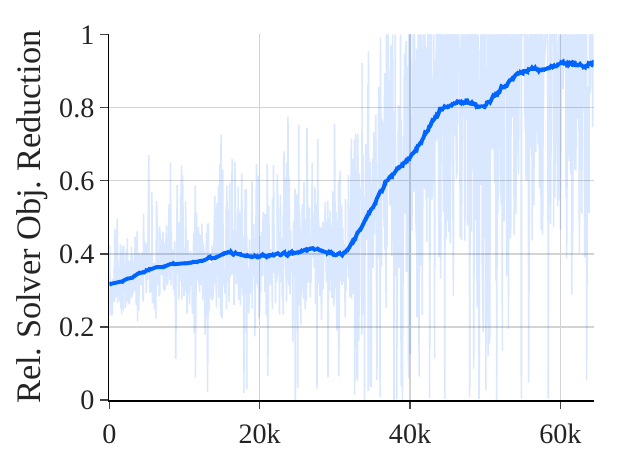}
        \vspace{2pt}
        \includegraphics[width=\textwidth]{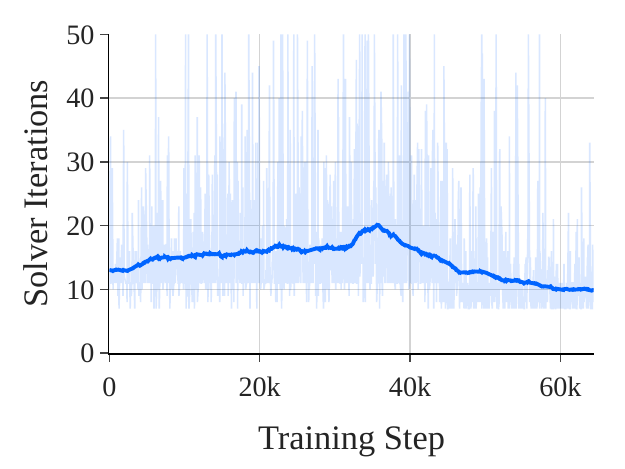}
        \caption{ResNet-44, CIFAR-10}
    \end{subfigure}
    \begin{subfigure}[b]{0.49\columnwidth}
        \centering
        \includegraphics[width=\textwidth]{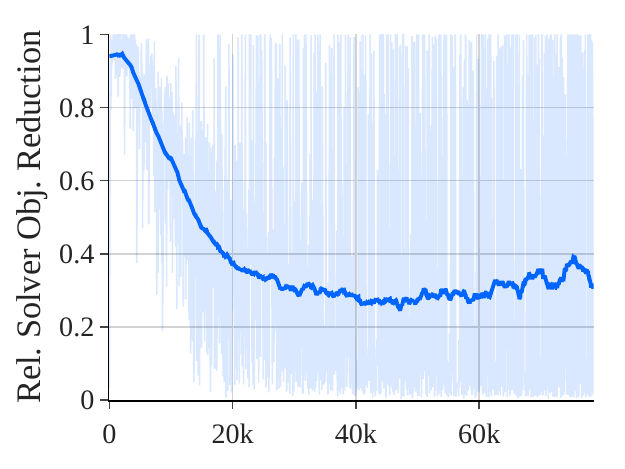}
        \includegraphics[width=\textwidth]{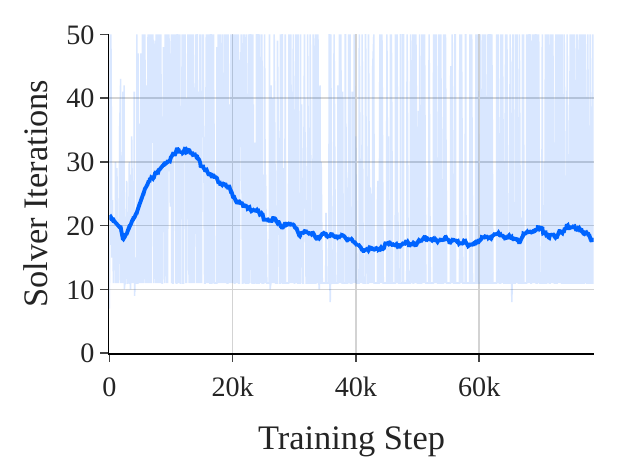}
        \caption{ViT-c, CIFAR-100}
    \end{subfigure}
    \caption{Solver convergence behavior. \textbf{Top Row:} Relative objective reduction achieved by the solver (higher is better). \textbf{Bottom Row:} Number of iterations required to reach convergence.}
    \label{fig:solver_convergence_main}
\end{figure}

\subsection{Full Training Algorithm and Computational Complexity}\label{sec:Full Algorithm}

\cref{alg:training_loop} details the proposed training iteration for SGD, executed at every step. The network is decomposed as $F(\mathbf{x};\boldsymbol{\theta}) = F_L(F_{1:L-1}(\mathbf{x}; \boldsymbol{\theta}_{1:L-1}); \boldsymbol{\theta}_L)$, where $F_L$ represents a standard linear layer  (whose output $\mathbf{y}_L$ corresponds to the logits in classification tasks). The main modifications to a standard training loop are Steps 5 and 6 (highlighted in red), which formulate and solve our surrogate optimization problem. Computing the weighted loss in Step 7 is the final difference, though its computational cost is negligible.

A practical consequence of using last-layer approximations is that these estimated statistics may occasionally diverge from their true global values, posing optimization risks. To prevent instabilities, we apply two \textbf{lightweight safeguards} (with time complexity $\mathcal{O}(B^2)$) before invoking the solver. Specifically, we maintain exponential moving averages (EMA) for $\|\Delta\boldsymbol{\theta}_{b,L}\|^2$ and the condition number of $\mathbf{Q}_L$. If the current step's $\|\Delta\boldsymbol{\theta}_{b,L}\|^2$ spikes abnormally compared to its EMA, we clip it. On the other hand, if $\mathbf{Q}_L$ becomes ill-conditioned relative to its EMA condition number, we apply adaptive identity preconditioning. In \cref{appendix:stability_preconditioning} we provide the exact pseudocode and further details.

Extending \cref{alg:training_loop} to AdamW is straightforward; the difference is that step 5 involves solving \cref{eq:opt_problem_adamw}, which requires modifying the preceding step to compute the matrices $\mathbf{Q}_{1,L}$ and $\mathbf{Q}_{2,L}$ instead of $\mathbf{Q}_L$.

In \cref{sec:more layers full alg}, we present a generalized formulation of our method that expands the last-layer approximation to include more layers (i.e. $\ell < L$). Because not all operations between  layer $\ell$ and $L$ are linear, the implicit computation techniques of \cref{sec:final nail} cannot be applied. Consequently, for this general form we explicitly compute the per-sample gradients (via PyTorch's \texttt{vmap}). In this section, we also profile the substantial memory and runtime overheads incurred  as per-sample gradients are explicitly computed for more layers. Finally, we show that including more layers may introduce numerical imprecisions that degrade overall performance.

\paragraph{Computational Overhead Complexity:}
The additional computations of our method compared to a standard training loop stem entirely from steps 5 and 6 in \cref{alg:training_loop}. Assuming the last linear layer has dimensions $(d_{in},d_{out})$, computing the quantities for the optimization problem in step 5 requires $\mathcal{O}\big(B^2(d_{in} + d_{out})\big)$ operations for SGD (\cref{appendix:Khatri-rao}). For AdamW, due to the additional low-rank SVD requirement, this increases to $\mathcal{O}\big(r d_{in} d_{out} + r B^2(d_{in} + d_{out})\big)$. Furthermore, applying our lightweight safeguards for numerical stability (\cref{appendix:stability_preconditioning}) requires $\mathcal{O}(I_L B^2)$ time, where $I_L$ represents the number of (Lanczos) iterations used to estimate the condition number of $\mathbf{Q}_L$ (or $\mathbf{Q}_{1,L}$). Moving to step 6, the solver operates in $\mathcal{O}(I_{solver} B^2)$ time, iterating at most $I_{solver}$ times (\cref{appendix:chambolle_pock_details}).  

Because we upper-bound the maximum allowed rank $r$ and the maximum iterations $I_L$ and $I_{solver}$, these terms behave as constants. Therefore, the total asymptotic complexity of the overhead reduces to $\mathcal{O}\big(B^2(d_{in} + d_{out})\big)$ for SGD and $\mathcal{O}\big(d_{in} d_{out} + B^2(d_{in} + d_{out})\big)$ for AdamW. Notice that this extra computation \textit{ depends exclusively on the batch size and the dimensions of the last  layer}. Consequently, as \textbf{the input dimensions or the backbone architecture scales up}—a common reality for modern, huge models where the backbone massively dominates total compute—\textbf{this overhead remains fixed}.  Finally, we observed completely negligible memory overheads in our benchmarks. The only exception was a minor $\sim 2\%$ increase on the main GPU when training ConvNeXt-Tiny on ImageNet under a 4-GPU Distributed Data Parallel (DDP) setup, resulting from the need to gather and compute the aggregate optimization quantities (e.g., $\mathbf{Q}$ and $\mathbf{G} \Delta\boldsymbol{\theta}_b$) over the collective global batch size.

\section{Experiments}
\label{sec:experiments}
We evaluate our method on the downstream task of image classification to demonstrate improved test accuracy with minimal memory and runtime overhead.

\subsection{Architectures, Datasets, and Training Details}

We evaluate performance on CIFAR-10, CIFAR-100, Tiny-ImageNet, and ImageNet-100. We consider both convolutional and transformer architectures. For ResNets on CIFAR, we follow \cite{ResNet} but substitute Batch Normalization \cite{BatchNorm} with Group Normalization \cite{GroupNorm}, as PyTorch's \texttt{vmap}---required for computing explicit per-sample gradients $\mathbf{G}$ in certain experiments (e.g., \cref{fig:last_layers_approx_quality})---lacks BatchNorm support. To avoid memory overflow when formulating the full $\mathbf{G}$ on CIFAR, we employ ResNet-20 and a small Vision Transformer (denoted ViT-small). Otherwise, we use ResNet-44 and the ViT implementation from \cite{vit_cifar_github}. For larger datasets (Tiny-ImageNet, ImageNet-100), we employ ResNet-50 and ConvNeXt-Tiny \cite{convnext}. Note that ResNet is trained with SGD, while ViT and \mbox{ConvNeXt} employ AdamW. Further details are provided in \cref{appendix:architectural_details}.

Across all experiments, we fix the trust-region parameter $c=0.2$.  Together with the constraint $\mathbf{w}\leq \mathbf{2}$ which avoids excessive sample weighting, these form the principal algorithmic choices of our method.  While additional hyperparameters are required, those are auxiliary, serving to ensure numerical stability when estimating the defining quantities and to ensure solver efficiency. Fundamentally, because the underlying optimization problem is convex, the core performance of our algorithm is expected to remain consistent regardless of the exact choice of solver, as long as numerical edge cases are prevented and a near-optimal solution is reached efficiently.

\subsection{Image Classification Performance}

\Cref{tab:acc_vs_resources} reports test accuracy and relative time overhead across varying batch sizes. Our approach consistently improves upon standard training, with particularly pronounced gains on CIFAR-100 and as batch size increases.

Intuitively, \textit{larger batch sizes} include more samples, increasing the probability of conflicting gradients. This presents a \textit{greater opportunity for our method to mitigate conflicts}, potentially yielding higher gains. Indeed, as $B$ scales from 128 to 1024, the absolute accuracy improvement over the baseline steadily increases: for ResNet-44 on CIFAR-100 (+0.25\%, +0.57\%, +0.86\%, +0.88\%), for ViT on CIFAR-10 (+0.49\%, +0.34\%, +0.58\%, +0.70\%), and for ViT on CIFAR-100 (+0.58\%, +0.79\%, +1.38\%, +2.24\%). 

An exception occurs with ResNet-44 on CIFAR-10, where gains ultimately degrade (+0.22\%, +0.26\%, +0.15\%, \mbox{-0.22\%}). Main reason of the performance drop at $B=1024$ is that the last linear layer of ResNet-44 has only 640 parameters ($d_{in}{=}64, d_{out}{=}10$). Consequently, $\mathbf{G}_L \in \mathbb{R}^{1024 \times 640}$, rendering the Gram matrix $\mathbf{Q}_L {=} \mathbf{G}_L \mathbf{G}_L^\top$ rank-deficient and highly ill-conditioned. This induces instability; in fact, this is the only setting where we had to reduce the trust-region radius to $c{=}0.1$ to prevent severe performance degradation.\footnote{In contrast, the ViT has a linear head with 3840 parameters and as $3840> 1024$, similar singularity issues do not appear.}

\begin{table}[t]
\caption{Test accuracy (\%) and relative time overhead (expressed as a multiplication factor) for ResNet-44 and ViT on CIFAR-10/100 across varying batch sizes (BS).}
\label{tab:acc_vs_resources}
\centering
\begin{small}
\begin{sc}
\setlength{\tabcolsep}{3.5pt} % Reduce column spacing
\begin{tabular}{c|ccc|ccc}
\toprule
 & \multicolumn{3}{c|}{\textbf{CIFAR-10}} & \multicolumn{3}{c}{\textbf{CIFAR-100}} \\
\textbf{BS} & \textbf{Base} & \textbf{Ours} & \textbf{Time} & \textbf{Base} & \textbf{Ours} & \textbf{Time} \\
\midrule
\multicolumn{7}{c}{\textit{ResNet-44}} \\
\midrule
128  & 91.71 & \textbf{91.93} & 1.13$\times$ & 65.22 & \textbf{65.50} & 1.30$\times$ \\
256  & 91.22 & \textbf{91.48} & 1.07$\times$ & 64.84 & \textbf{65.41} & 1.11$\times$ \\
512  & 90.60 & \textbf{90.75} & 1.03$\times$ & 63.51 & \textbf{64.37} & 1.05$\times$ \\
1024 & \textbf{89.22} & 89.00 & 1.03$\times$ & 61.43 & \textbf{62.31} & 1.03$\times$ \\
\midrule
\multicolumn{7}{c}{\textit{ViT-c}} \\
\midrule
128  & 89.94 & \textbf{90.34} & 1.08$\times$ & 63.05 & \textbf{63.63} & 1.10$\times$ \\
256  & 89.62 & \textbf{89.96} & 1.02$\times$ & 62.27 & \textbf{63.06} & 1.08$\times$ \\
512  & 89.59 & \textbf{90.17} & 1.01$\times$ & 60.73 & \textbf{62.11} & 1.05$\times$ \\
1024 & 88.91 & \textbf{89.61} & 1.01$\times$ & 58.82 & \textbf{61.06} & 1.04$\times$ \\
\bottomrule
\end{tabular}
\end{sc}
\end{small}
\end{table}

Regarding computational efficiency, represented as a runtime multiplier in \cref{tab:acc_vs_resources}, we observe two key trends. First, transitioning from ResNet-44 (0.6M parameters) to ViT (6.3M parameters) significantly reduces the relative overhead. This aligns with our analysis in \cref{sec:Full Algorithm}: increasing the backbone size increases the time spent in its forward and backward passes, whereas our method's overhead remains fixed, thereby reducing its relative cost. Second, although standard training scales linearly with $B$ and our method scales quadratically, the relative overhead actually decreases for larger batches. We attribute this to GPU utilization: for the tested batch sizes ($B \le 1024$), the $\mathcal{O}(B^2)$ matrix-vector operations used by our method are still small for modern GPUs. At smaller $B$, these operations do not fully utilize the available parallelism, so fixed GPU overheads account for a larger fraction of their runtime. As $B$ increases, the same operations use the hardware more efficiently, reducing the relative overhead.

\begin{table}[t]
\caption{Test accuracy (\%) and relative time overhead  (expressed as a multiplication factor) on Tiny-ImageNet and ImageNet-100.}
\label{tab:acc_vs_resources_imagenet}
\centering
\begin{subtable}[t]{\columnwidth}
\caption{Tiny-ImageNet}
\centering
\begin{small}
\begin{sc}
\setlength{\tabcolsep}{5pt}
\begin{tabular}{lccc}
\toprule
\textbf{Config} & \textbf{Base} & \textbf{Ours} & \textbf{Time} \\
\midrule
\multicolumn{4}{c}{\textit{ResNet-50 ($B=256$)}} \\
\midrule
\text{\normalfont  GroupNorm$\quad$} & 47.94 & \textbf{48.69} & 1.07$\times$ \\
\text{\normalfont  BatchNorm} & \textbf{56.37} & 56.20 & 1.02$\times$ \\
\midrule
\multicolumn{4}{c}{\textit{ConvNeXt-T ($B=4096$)}} \\
\midrule
\text{\normalfont w/o  Mixup$\quad$} & 54.08 & \textbf{54.54} & 1.18$\times$ \\
\text{\normalfont with Mixup} & 59.69 & \textbf{59.78} & 1.17$\times$ \\
\bottomrule
\end{tabular}
\end{sc}
\end{small}
\end{subtable}

\vspace{1.4em}

\begin{subtable}[t]{\columnwidth}
\caption{ImageNet-100}
\label{tab:acc_vs_resources_imagenet100}
\centering
\begin{small}
\begin{sc}
\setlength{\tabcolsep}{5pt}
\begin{tabular}{lccc}
\toprule
\textbf{Config} & \textbf{Base} & \textbf{Ours} & \textbf{Time} \\
\midrule
\multicolumn{4}{c}{\textit{ConvNeXt-T ($B=4096$)}} \\
\midrule
\text{\normalfont without Mixup, 100 epochs$\quad$} & 76.50 & \textbf{76.66} & 1.04$\times$ \\
\text{\normalfont with Mixup, 100 epochs} & 78.60 & \textbf{79.76} & 1.04$\times$ \\
\text{\normalfont standard\footnotemark} & 83.60 & \textbf{83.90} & 1.05$\times$ \\
\text{\normalfont standard + more data aug.} & 83.90 & \textbf{84.26} & 1.05$\times$ \\
\bottomrule
\end{tabular}
\end{sc}
\end{small}
\end{subtable}
\end{table}
\footnotetext{Standard configuration as in the original paper, trained for 300 epochs.}

\Cref{tab:acc_vs_resources_imagenet} presents results for larger models---ResNet-50 (25M parameters) and ConvNeXt-T (29M parameters)---on larger datasets. Since BatchNorm and Mixup/Cutmix break sample independence within a batch, changing the statistics of the required quantities (e.g., $\mathbf{Q}$), we explicitly test their effect by evaluating configurations both with and without them. Excluding ResNet-50 with BatchNorm, our approach consistently improves test accuracy, achieving the  substantial gains on ConvNeXt-T trained with Mixup on ImageNet-100 for 100 epochs.

For ConvNeXt-T, transitioning  from dataset Tiny-ImageNet to ImageNet-100 reduces the relative penalty from around 18\% to less than 5\%. This agrees with our analysis in \cref{sec:Full Algorithm}: as the input size increases (here from images $64{\times}64$ to $224{\times} 224$), the required backbone processing time increases, but our method's overhead remains the same, leading to a decrease in the relative overhead.

\Cref{tab:acc_vs_resources_imagenet100} presents results for ConvNeXt-T on ImageNet-100 across varying levels of training duration and data augmentation. We evaluate configurations trained for 100 epochs (both with and without Mixup/Cutmix) alongside the standard 300-epoch schedule from \cite{convnext}. Additionally, we assess a configuration that further intensifies data augmentation for the 300-epoch standard by increasing the number of RandAugment \cite{randaugment} operations from 2 to 3. In general, we observe that our method accelerates convergence, enabling the model to learn faster. This acceleration proves particularly beneficial when the training budget is constrained or when the learning task is heavily regularized. For instance, reducing the training duration from 300 to 100 epochs (while keeping Mixup) increases the absolute accuracy gain over the baseline from +0.30\% to a substantial +1.16\%, highlighting the critical role of our method under restricted budget. Conversely, when we make the 100-epoch setting easier to learn by removing Mixup, the gain drops back to +0.16\%. Similarly, reintroducing training difficulty into the standard 300-epoch schedule by aggressively increasing the RandAugment operations again demands faster learning, pushing our absolute performance gains from +0.30\% to +0.36\%.

We used a Distributed Data Parallel (DDP) setup (4$\times$T4 GPUs for Tiny-ImageNet, 4$\times$A100 for ImageNet-100) to train ConvNeXt-T. Standard DDP already requires gradient aggregation across GPUs (PyTorch's \texttt{all\_reduce}) and state synchronization (\texttt{broadcast}). Implementing our method adds two synchronization steps: i) an \texttt{all\_gather} to collect the last-layer per-sample gradients from all GPUs and form the global required quantities (e.g., $\mathbf{Q}$), and ii) a \texttt{broadcast} to distribute the solver's computed optimal weights. These extra synchronizations could potentially be removed through a practical relaxation in which each GPU uses its local batch to independently form and solve its own optimization problem (\cref{eq:opt_problem_sgd,eq:opt_problem_adamw}). This effectively approximates the global $\mathbf{Q}_L$ as block-diagonal, assuming that sample gradients across devices are uncorrelated.

\section{Conclusion}
\label{sec:conclusion}

We proposed a practical framework for reducing the interference that mini-batch updates can induce on individual samples. Although the natural formulation is computationally prohibitive, we showed that it can be turned into an efficient surrogate by moving the optimization to batch space, restricting the required quantities to the last linear layer and implicitly computing them without materializing per-sample gradients. We finally introduce a GPU-friendly primal-dual solver to solve the resulting surrogate problem. This yields a lightweight modification of standard optimizers such as SGD with momentum and AdamW. Experiments on image classification benchmarks demonstrate that our method reduces per-sample interference and improves test accuracy with small runtime overhead. 

\bibliography{biblio}
\bibliographystyle{icml2026}

%%%%%%%%%%%%%%%%%%%%%%%%%%%%%%%%%%%%%%%%%%%%%%%%%%%%%%%%%%%%%%%%%%%%%%%%%%%%%%%
%%%%%%%%%%%%%%%%%%%%%%%%%%%%%%%%%%%%%%%%%%%%%%%%%%%%%%%%%%%%%%%%%%%%%%%%%%%%%%%
% APPENDIX
%%%%%%%%%%%%%%%%%%%%%%%%%%%%%%%%%%%%%%%%%%%%%%%%%%%%%%%%%%%%%%%%%%%%%%%%%%%%%%%
%%%%%%%%%%%%%%%%%%%%%%%%%%%%%%%%%%%%%%%%%%%%%%%%%%%%%%%%%%%%%%%%%%%%%%%%%%%%%%%
\newpage
\appendix
\onecolumn

\section{Proof of Proposition \ref{prop:optimal_form}}\label{appendix:proof-proposition-1}

Defining the displacement vector $\boldsymbol{\delta} := \Delta\boldsymbol{\theta} - \Delta\boldsymbol{\theta}_b\in \mathbb{R}^\Theta$, $\mathbf{s} := \mathbf{G}\Delta\boldsymbol{\theta}_b \in \mathbb{R}^B$, the radius $\rho := c \|\Delta\boldsymbol{\theta}_b\|$, and using the definition $H(\mathbf{v}) := \sum_{i=1}^{B} \max(0, [\mathbf{v}]_i)$, we can directly rewrite the formulation \eqref{eq:opt_problem_initial} as:
\begin{equation}
\label{eq:delta_prob}
\begin{aligned}
\min_{\boldsymbol{\delta}} \quad & J(\boldsymbol{\delta}) := H(\mathbf{G}\boldsymbol{\delta} + \mathbf{s}) \\
\text{s.t.} \quad & \|\boldsymbol{\delta}\| \le \rho.
\end{aligned}
\end{equation}

\paragraph{Part 1: Existence of a solution in Range($\mathbf{G}^\top$).}
Any vector $\boldsymbol{\delta} \in \mathbb{R}^\Theta$ can be uniquely decomposed into two orthogonal components:
\[
\boldsymbol{\delta} = \boldsymbol{\delta}_{\parallel} + \boldsymbol{\delta}_{\perp},
\]
where $\boldsymbol{\delta}_{\parallel} \in \mathrm{Range}(\mathbf{G}^\top)$ (i.e., $\boldsymbol{\delta}_{\parallel} = \mathbf{G}^\top \mathbf{v}$ for some $\mathbf{v}\in\mathbb{R}^B$) and $\boldsymbol{\delta}_{\perp} \in \mathrm{Null}(\mathbf{G})$ (i.e., $\mathbf{G}\boldsymbol{\delta}_{\perp}=\mathbf{0}$).
Since $\mathbf{G}\boldsymbol{\delta}=\mathbf{G}\boldsymbol{\delta}_{\parallel}$, we have $J(\boldsymbol{\delta})=J(\boldsymbol{\delta}_{\parallel})$.
Moreover, by orthogonality,
\[
\|\boldsymbol{\delta}\|^2=\|\boldsymbol{\delta}_{\parallel}\|^2+\|\boldsymbol{\delta}_{\perp}\|^2
\quad\Rightarrow\quad
\|\boldsymbol{\delta}_{\parallel}\| \le \|\boldsymbol{\delta}\|.
\]
Therefore, if $\boldsymbol{\delta}$ is feasible (i.e., $\|\boldsymbol{\delta}\|\le \rho$), then $\boldsymbol{\delta}_{\parallel}$ is also feasible and achieves the same objective value. Hence, there always exists an optimal solution lying in $\mathrm{Range}(\mathbf{G}^\top)$.

\paragraph{Part 2: Non-positivity of coefficients ($\mathbf{w}\le \mathbf{0}$).}
Let $\mathcal{S}^*$ denote the set of optimal solutions to \cref{eq:delta_prob}. We aim to prove that the minimum-norm optimal solution $\boldsymbol{\delta}^\dagger\in\mathcal{S}^*$ satisfies
$\boldsymbol{\delta}^\dagger=\mathbf{G}^\top \mathbf{w}$ with $\mathbf{w}\le \mathbf{0}$ (recall that $\Delta\boldsymbol{\theta}^\star = \Delta\boldsymbol{\theta}_b + \boldsymbol{\delta}^\dagger$). To this end, we add a small $L_2$ regularization term and consider the modified problem:
\[
\min_{\boldsymbol{\delta}} \; J(\boldsymbol{\delta}) +\epsilon\|\boldsymbol{\delta}\|^2
\quad \text{s.t.}\quad
\|\boldsymbol{\delta}\|^2 \le \rho^2,
\]
where $\epsilon>0$ is arbitrarily small. The objective is now strictly convex and therefore has a unique minimizer $\boldsymbol{\delta}_\epsilon$. The subdifferential of $J(\boldsymbol{\delta}) = H(\mathbf{G}\boldsymbol{\delta} + \mathbf{s})$ is given by:
\[
\partial J(\boldsymbol{\delta}) =
\left\{
\mathbf{G}^\top \boldsymbol{\gamma}
\;\middle|\;
[\boldsymbol{\gamma}]_i =
\begin{cases}
1, & \text{for }  [\mathbf{G}\boldsymbol{\delta}+\mathbf{s}]_i > 0,\\[0pt]
\in [0,1], &\text{for }  [\mathbf{G}\boldsymbol{\delta}+\mathbf{s}]_i = 0,\\[0pt]
0, & \text{for }  [\mathbf{G}\boldsymbol{\delta}+\mathbf{s}]_i < 0
\end{cases}
\right\}.
\]
We remark that every admissible $\boldsymbol{\gamma}$ satisfies $\boldsymbol{\gamma}\in[0,1]^B$. By the KKT stationarity condition,
$ \mathbf{0} \in \partial J(\boldsymbol{\delta}_\epsilon) + 2(\epsilon+\lambda)\boldsymbol{\delta}_\epsilon$,
where $\lambda\ge 0$ is the Lagrange multiplier associated with the constraint $\|\boldsymbol{\delta}\|^2 \le \rho^2$. Hence, there exists some $\boldsymbol{\gamma}\ge \mathbf{0}$ such that
\[
\mathbf{G}^\top \boldsymbol{\gamma} + 2(\epsilon+\lambda)\boldsymbol{\delta}_\epsilon = \mathbf{0}
\quad\Rightarrow\quad
\boldsymbol{\delta}_\epsilon = \mathbf{G}^\top\!\Big(-\frac{\boldsymbol{\gamma}}{2\epsilon+2\lambda}\Big).
\]
Defining $\mathbf{w}_\epsilon:=-\frac{\boldsymbol{\gamma}}{2(\epsilon+\lambda)}$, we obtain $\mathbf{w}_\epsilon\le \mathbf{0}$ since $\boldsymbol{\gamma}\ge \mathbf{0}$ and $\epsilon+\lambda>0$. Finally, as $\epsilon\to 0$, the sequence $\boldsymbol{\delta}_\epsilon$ converges to the minimum-norm optimal solution $\boldsymbol{\delta}^\dagger\in\mathcal{S}^*$ (since if two solutions are optimal, and so they have the same $J$, the one with the smaller norm will have a smaller $J(\boldsymbol{\delta})+\epsilon\|\boldsymbol{\delta}\|^2$). Therefore, $
\boldsymbol{\delta}^\dagger = \mathbf{G}^\top \mathbf{w}$ for some $\mathbf{w}\le \mathbf{0}$.

\paragraph{Part 3: Uniqueness of minimum distance.}
Consider any optimal solution $\Delta\boldsymbol{\theta}\in\mathcal{S}^*$ with displacement $\boldsymbol{\delta}=\Delta\boldsymbol{\theta}-\Delta\boldsymbol{\theta}_b$, and decompose $\boldsymbol{\delta}=\boldsymbol{\delta}_{\parallel}+\boldsymbol{\delta}_{\perp}$ as above. From Part~1 we know that  $\boldsymbol{\delta}'= \boldsymbol{\delta}_{\parallel}$ is also optimal. Since
\[
\|\boldsymbol{\delta}\|^2=\|\boldsymbol{\delta}_{\parallel}\|^2+\|\boldsymbol{\delta}_{\perp}\|^2 \geq \|\boldsymbol{\delta}_{\parallel}\|^2 = \|\boldsymbol{\delta}'\|^2
\]
any solution with $\boldsymbol{\delta}_{\perp}{\neq}\mathbf{0}$ has strictly larger distance to $\Delta\boldsymbol{\theta}_b$ than the corresponding solution $\boldsymbol{\delta}'$ with $\boldsymbol{\delta}_{\perp}{=}\mathbf{0}$.
Hence, among all optimal solutions, the unique minimizer of $\|\Delta\boldsymbol{\theta}-\Delta\boldsymbol{\theta}_b\|$ is the one with $\boldsymbol{\delta}\in\mathrm{Range}(\mathbf{G}^\top)$, i.e., of the form $\Delta\boldsymbol{\theta}_b+\mathbf{G}^\top \mathbf{w}$.

\paragraph{Connection to the Representer Theorem \cite{scholkopf2001generalized}.}
A closely related statement follows by considering a Lagrangian relaxation of the ball constraint, yielding a regularized objective of the form
\[
\min_{\boldsymbol{\delta}} \; H(\mathbf{G}\boldsymbol{\delta} + \mathbf{s}) + \lambda \|\boldsymbol{\delta}\|^2,
\]
for some $\lambda>0$. The Representer Theorem then directly guarantees that the optimal $\boldsymbol{\delta}$ lies in the span of the data representers $\{\mathbf{G}_i\}_{i=1}^{B}$. This is consistent with Part~1. Our argument additionally characterizes the sign of the coefficients in the representation, i.e., $\mathbf{w}\le \mathbf{0}$.

\section{Computing the Defining Quantities of the Optimization Problem Without Materializing Per-Sample Gradients}
\label{appendix:Khatri-rao}

In this section, we study the case of a single linear layer. Because our focus here is not limited to the last  layer, we drop the subscript $L$—used in the main text to denote the last linear layer—for clarity. Consider a linear layer with parameters $\boldsymbol{\theta}_w \in \mathbb{R}^{d_{out} \times d_{in}}$ and bias $\boldsymbol{\theta}_b \in \mathbb{R}^{d_{out}}$, receiving inputs $\mathbf{x} \in \mathbb{R}^{B \times d_{in}}$ and producing outputs $\mathbf{y} \in \mathbb{R}^{B \times d_{out}}$ such that $ [\mathbf{y}]_i = [\mathbf{x}\boldsymbol{\theta}_w^\top]_i + \boldsymbol{\theta}_b, \forall i\in\{1, 2, \dots, B \}$. With $[\mathbf{A}]_i$ we refer to the $i$-th element if $\mathbf{A}$ is a vector and the $i$-th row if it is a matrix. Let $\mathbf{g_y} = \nabla_{\mathbf{y}} \mathcal{L} \in \mathbb{R}^{B \times d_{out}}$ denote the gradient of the loss with respect to these outputs. The per-sample gradient with respect to the weight parameters is $\mathbf{G}_w = \mathbf{g_y} \otimes \mathbf{x} \in \mathbb{R}^{B \times (d_{out} d_{in})}$, where $\otimes$ denotes the Khatri-Rao (i.e., row-wise Kronecker) product, meaning $[\mathbf{G}_w]_i =  [\mathbf{g_y}]_i \otimes [\mathbf{x}]_i $ gives the flattened gradient for the $i$-th sample\footnote{Note the Kronecker product for one dimensional vectors coincides to their flattened outer product.}. The per-sample gradients with respect to the bias parameters are simply $\mathbf{G}_b = \mathbf{g_y} \in \mathbb{R}^{B \times d_{out}}$.

\subsection{Standard form $\mathbf{Q}=\mathbf{G}\mathbf{G}^\top$ (SGD)}
\label{appendix:Khatri-rao-sgd}
 Leveraging the properties of the Khatri-Rao product, the total Gram matrix $\mathbf{Q} \in \mathbb{R}^{B \times B}$ can be factorized as:
\begin{equation*}
    \mathbf{Q} = \mathbf{G}_w \mathbf{G}_w^\top + \mathbf{G}_b \mathbf{G}_b^\top = (\mathbf{x} \mathbf{x}^\top) \odot (\mathbf{g_y} \mathbf{g_y}^\top) + (\mathbf{g_y} \mathbf{g_y}^\top) = (\mathbf{x} \mathbf{x}^\top + \mathbf{I}) \odot (\mathbf{g_y} \mathbf{g_y}^\top)
\end{equation*}
where $\mathbf{I}$ is the $B \times B$ identity matrix. This factorization allows to compute $\mathbf{Q}$ as $(\mathbf{x} \mathbf{x}^\top + \mathbf{I}) \odot (\mathbf{g_y} \mathbf{g_y}^\top)$ instead of $\mathbf{G}_w \mathbf{G}_w^\top + \mathbf{G}_b \mathbf{G}_b^\top$. Therefore it eliminates the need to materialize the per-sample weight gradients $\mathbf{G}_w \in \mathbb{R}^{B \times (d_{out} d_{in})}$, circumventing a prohibitive $\mathcal{O}(B d_{in}  d_{out})$ memory cost. The time complexity is reduced from $\mathcal{O}(B^2 d_{in} d_{out})$ to $\mathcal{O}(B^2(d_{in} + d_{out}))$. In Pseudocode \ref{lst:sgd_pseudocode}, we provide a minimal example demonstrating how the Khatri-Rao trick can be used for the computation of the Gram matrix $\mathbf{Q}$ compared to the standard baseline.

\renewcommand{\lstlistingname}{Pseudocode}
\begin{lstlisting}[language=Python, caption={Khatri-Rao Trick vs. Baseline in PyTorch}, label={lst:sgd_pseudocode}]
# Dimensions: x [B, d_in], grad_y [B, d_out]

# --- Baseline ---
per_sample_grad_w = grad_y.unsqueeze(2) * x.unsqueeze(1) # [B, d_out, d_in]
grad_w_flat = per_sample_grad_w.flatten(start_dim=1)     # [B, d_out * d_in]
Q_bias = grad_y @ grad_y.t() 
Q_weight = grad_w_flat @ grad_w_flat.t()
Q_baseline = Q_weight + Q_bias 

# --- Khatri-Rao Trick ---
Q_bias = grad_y @ grad_y.t()
Q_optimized = (x @ x.t() + torch.eye(B)) * Q_bias
\end{lstlisting}

In \cref{tab:khatri_rao_bench_summary}, each row corresponds to one batch size. Entries are reported in the format:
\emph{time (ms), memory (MB)} for Baseline, and \emph{time (improvement), memory (improvement)} for Khatri-Rao, where improvements are percentages relative to Baseline. For example, the memory improvement is defined as $\displaystyle 100 \cdot \frac{Mem. Baseline - Mem. Khatri-Rao}{Mem. Baseline}$. We emphasize that the \textit{maximum improvement is }100\%. An NVIDIA Tesla T4 GPU with 16 GB memory was used. Note that Tables (a)-(b) correspond to layer dimensions used for \textit{ResNet-44-gn} architectures for CIFAR-10 and CIFAR-100 respectively. The tables show that the gains are an order of magnitude both in time and in memory, especially as the dimensions are increased.

\begin{table}[h!]
\centering
\begin{subtable}[t]{0.48\textwidth}
\centering
\caption{$d_{in}=64,\ d_{out}=10$}
\begin{tabular}{c|r@{,\ }l|r@{,\ }l}
\toprule
BS & \multicolumn{2}{c|}{Baseline (ms, MB)} & \multicolumn{2}{c}{Khatri-Rao (ms, MB)} \\
\midrule
128  & $\quad\;$ 0.14 & 8.75  & 0.107 (\textbf{25}\%) & 8.41 (\textbf{4}\%) \\
512  & 0.42 & 13.6  & 0.343 (\textbf{19}\%) & 12.3 (\textbf{10}\%) \\
2048 & 2.60  & 79.3  & 1.07 (\textbf{59}\%)  & 72.7 (\textbf{8}\%) \\
\bottomrule
\end{tabular}
\end{subtable}
\hfill
\begin{subtable}[t]{0.48\textwidth}
\centering
\caption{$d_{in}=64,\ d_{out}=100$}
\begin{tabular}{c|r@{,\ }l|r@{,\ }l}
\toprule
BS & \multicolumn{2}{c|}{Baseline (ms, MB)} & \multicolumn{2}{c}{Khatri-Rao (ms, MB)} \\
\midrule
128  & $\quad\;$ 0.22 & 11.9  & 0.10 (\textbf{54}\%) & 8.46 (\textbf{29}\%) \\
512  & 1.63  & 26.2  & 0.35 (\textbf{79}\%) & 12.5 (\textbf{53}\%) \\
2048 & 15.2  & 128   & 1.03 (\textbf{93}\%)  & 73.4 (\textbf{43}\%) \\
\bottomrule
\end{tabular}
\end{subtable}

\vspace{0.6em}

\begin{subtable}[t]{0.48\textwidth}
\centering
\caption{$d_{in}=2048,\ d_{out}=100$}
\begin{tabular}{c|r@{,\ }l|r@{,\ }l}
\toprule
BS & \multicolumn{2}{c|}{Baseline (ms, MB)} & \multicolumn{2}{c}{Khatri-Rao (ms, MB)} \\
\midrule
256  &  $\quad\;$ 9.08 & 231    & 0.185 (\textbf{98}\%) & 11.2 (\textbf{95}\%) \\
1024 & 155  & 913    & 1.95 (\textbf{99}\%)  & 32.5 (\textbf{96}\%) \\
4096 & 3381 & 3818   & 22.3 (\textbf{99}\%)  & 298 (\textbf{92}\%) \\
\bottomrule
\end{tabular}
\end{subtable}
\hfill
\begin{subtable}[t]{0.48\textwidth}
\centering
\caption{$d_{in}=2048,\ d_{out}=1000$}
\begin{tabular}{c|r@{,\ }l|r@{,\ }l}
\toprule
BS & \multicolumn{2}{c|}{Baseline (ms, MB)} & \multicolumn{2}{c}{Khatri-Rao (ms, MB)} \\
\midrule
256  &  $\quad\;$ 127 & 2212 & 0.326 (\textbf{100}\%) & 12.1 (\textbf{100}\%) \\
1024 & \multicolumn{2}{c|}{Out-of-Memory} & 2.44  & 36.1  \\
4096 & \multicolumn{2}{c|}{Out-of-Memory} & 29.9  & 312  \\
\bottomrule
\end{tabular}
\end{subtable}
\vspace{5pt}
\caption{Benchmark comparison of Baseline and Khatri-Rao across batch sizes and layer dimensions.}
\label{tab:khatri_rao_bench_summary}
\end{table}

\newpage
\subsection{Weighted form $\mathbf{Q}=\mathbf{G}\texttt{diag}(\boldsymbol{\tau})\mathbf{G}^\top$ (AdamW)}
\label{appendix:Khatri-rao-adamw}

Let $[\mathbf{M}]_{i,j}$ denote the element in the $i$-th row and $j$-th column of a matrix $\mathbf{M}$. We assume $\boldsymbol{\tau} \in \mathbb{R}^{d_{out} \times d_{in}}$ and with $\operatorname{diag}(\boldsymbol{\tau})$ we mean that first we flatten $\boldsymbol{\tau}$ in the same way that each row (corresponding to each sample of the batch) of $\mathbf{G}_w$ was flattened, and then we diagonalize the resulting vector. Previously we could directly use the Khatri-Rao trick as we could write:
\begin{equation*}
[\mathbf{G}_w \mathbf{G}_w^\top ]_{i,j} = \sum_{q,p} [\mathbf{x}]_{i,q} [\mathbf{x}]_{j,q} [\mathbf{g_y}]_{i,p} [\mathbf{g_y}]_{j,p} = \Big(\sum_q [\mathbf{x}]_{i,q} [\mathbf{x}]_{j,q}\Big) \Big(\sum_p [\mathbf{g_y}]_{i,p} [\mathbf{g_y}]_{j,p}\Big).
\end{equation*}
Unfortunately, the existence of the diagonal matrix complicates the formula as:
\begin{equation*}
[\mathbf{G}_w \operatorname{diag}(\boldsymbol{\tau}) \mathbf{G}_w^\top ]_{i,j} = \sum_{q,p} \boldsymbol{\tau}_{p,q} [\mathbf{x}]_{i,q} [\mathbf{x}]_{j,q} [\mathbf{g_y}]_{i,p} [\mathbf{g_y}]_{j,p},
\end{equation*}
and the $\boldsymbol{\tau}_{p,q}$ inhibits the same factorization.  However, if we manage to factorize $\boldsymbol{\tau}=\mathbf{A}\mathbf{V}^\top$, then because  $\boldsymbol{\tau}_{p,q}= \sum_n [\mathbf{A}]_{p,n} [\mathbf{V}]_{q,n}$, we can write:
\begin{equation*}
[\mathbf{G}_w \operatorname{diag}(\boldsymbol{\tau}) \mathbf{G}_w^\top ]_{i,j} = \sum_n \bigg( \Big(\sum_q [\mathbf{V}]_{q,n} [\mathbf{x}]_{i,q} [\mathbf{x}]_{j,q}\Big) \Big(\sum_p [\mathbf{A}]_{p,n}  [\mathbf{g_y}]_{i,p} [\mathbf{g_y}]_{j,p}\Big) \bigg).
\end{equation*}
Such a factorization could be achieved with SVD, setting $\boldsymbol{\tau}= \mathbf{U} \mathbf{S} \mathbf{V}^\top$ with $\mathbf{U} \in \mathbb{R}^{d_{out} \times d_{out}}, \mathbf{V} \in \mathbb{R}^{d_{in} \times d_{in}}$ unitary matrices and $\mathbf{S} = \operatorname{diag}(\sigma_1, \dots, \sigma_{\min(d_{in},d_{out})})\in \mathbb{R}^{d_{out} \times d_{in}}$ with $\sigma_1 \ge \sigma_2 \ge \dots \ge 0$ the singular values. We choose the SVD factorization because the matrix $\boldsymbol{\tau}^r$ that is closest to $\boldsymbol{\tau}$ in Frobenius norm, i.e., minimizing $\|\boldsymbol{\tau} - \boldsymbol{\tau}^r\|_F$, and can be factorized as $\boldsymbol{\tau}^r = \mathbf{A}^r {\mathbf{V}^r}^\top$ with $\mathbf{A}^r \in \mathbb{R}^{d_{out} \times r}$ and $\mathbf{V}^r \in \mathbb{R}^{d_{in} \times r}$, is given by $\mathbf{A} = \mathbf{U}^r \operatorname{diag}(\sigma_1, \dots, \sigma_r)$, where $\mathbf{U}^r \in \mathbb{R}^{d_{out} \times r}$ is the first $r$ columns of $\mathbf{U}$ and $\mathbf{V}^r$ is the first $r$ columns of $\mathbf{V}$. Therefore, given $r$, we know we have the best possible approximation and we can write:
\begin{align*}
[\mathbf{G}_w \operatorname{diag}(\boldsymbol{\tau}) \mathbf{G}_w^\top ]_{i,j} &\approx \sum_{k=1}^r \sigma_k \bigg( \Big(\sum_q [\mathbf{V}]_{q,k} [\mathbf{x}]_{i,q} [\mathbf{x}]_{j,q}\Big) \Big(\sum_p [\mathbf{U}]_{p,k} [\mathbf{g_y}]_{i,p} [\mathbf{g_y}]_{j,p}\Big) \bigg) \\
&= \sum_{k=1}^r \sigma_k \big( \mathbf{x} \operatorname{diag}(\mathbf{v}_k) \mathbf{x}^\top \big) \odot \big( \mathbf{g_y} \operatorname{diag}(\mathbf{u}_k) \mathbf{g_y}^\top \big)
\end{align*}
denoting by $\mathbf{u}_k, \mathbf{v}_k$ the $k$-th column of $\mathbf{U}^r$ and $\mathbf{V}^r$ respectively.

Similar to before, this factorization bypasses the prohibitive $\mathcal{O}(B d_{in}  d_{out})$ memory cost associated with materializing per-sample gradients. In terms of computation, while the baseline requires $\mathcal{O}(B^2  d_{in} d_{out})$ operations to compute the exact weighted Gram matrix, our proposed method reduces the overall time complexity to $\mathcal{O}\big(r d_{in} d_{out}  + r B^2 (d_{in} + d_{out})\big)$. The first term accounts for the cost of computing a low-rank SVD on $\boldsymbol{\tau}$ (e.g., using \texttt{torch.pca\_lowrank}). Although performing a full SVD (e.g., using \texttt{torch.linalg.svd}) results in higher time complexity  $\mathcal{O}\big(\min(d_{in}, d_{out})d_{in} d_{out}\big)$, for the matrix dimensions tested in our experiments there was no substantial overhead, so we keep the full-SVD implementation.

\renewcommand{\lstlistingname}{Pseudocode}
\begin{lstlisting}[language=Python, caption={Khatri-Rao Trick with SVD vs. Baseline in PyTorch (AdamW)}, label={lst:adamw_pseudocode}]
# Dimensions: x [B, d_in], grad_y [B, d_out], tau_w [d_out, d_in], tau_b [d_out]

# --- Baseline ---
per_sample_grad_w = grad_y.unsqueeze(2) * x.unsqueeze(1) # [B, d_out, d_in]
grad_w_flat = per_sample_grad_w.flatten(start_dim=1)     # [B, d_out * d_in]
Q_weight = (grad_w_flat * tau_w.unsqueeze(0)) @ grad_w_flat.t()   
Q_bias = (grad_y * tau_b.unsqueeze(0)) @ grad_y.t()
Q_baseline = Q_weight + Q_bias

# --- Khatri-Rao Trick with SVD ---
U, S, Vh = torch.linalg.svd(tau_w, full_matrices=False)
u = U[:, :r].t().unsqueeze(1)       # [r, 1, d_out]
s = S[:r].view(r, 1, 1)             # [r, 1, 1]
v = Vh[:r, :].unsqueeze(1)          # [r, 1, d_in]

Gx = (x * v) @ x.t()                # [r, B, B]
Gy = (grad_y * u) @ grad_y.t()      # [r, B, B]

Q_weight = (s * Gx * Gy).sum(dim=0) # Sum over rank -> [B, B]
Q_bias = (grad_y * tau_b.unsqueeze(0)) @ grad_y.t()
Q_optimized = Q_weight + Q_bias
\end{lstlisting}

In \cref{tab:khatri_rao_svd_bench_summary}, we summarize the benchmark execution. The setup mirrors the one used for the standard form (\cref{appendix:Khatri-rao-sgd}). We used rank-5 approximation  ($r=5$), as is the one used throughout the paper. Table (a) and (b) match the dimensions of ViT-c for CIFAR-10 and CIFAR-100 respectively. Table (c) matches the dimensions of ConvNeXt-Tiny on the Tiny-ImageNet dataset. SVD computations introduce non-negligible overhead that can sometimes increase execution time and memory consumption compared to the baseline; we highlight this performance deterioration in red. However, as layer dimensions progressively scale, the asymptotic benefits of the decomposition prevail, yielding substantial speedups, massive memory reductions and avoiding Out-of-Memory errors. Finally, we see that as batch-size increases memory gains become worse, but the speed gains improve.

\begin{table}[h!]
\centering
\begin{subtable}[t]{0.48\textwidth}
\centering
\caption{$d_{in}=384,\ d_{out}=10$}
\begin{tabular}{c|r@{,\ }l|r@{,\ }l}
\toprule
BS & \multicolumn{2}{c|}{Baseline (ms, MB)} & \multicolumn{2}{c}{Khatri-Rao SVD (ms, MB)} \\
\midrule
128  & $\quad\;$0.37 & 17.7  & 1.77 (\textcolor{mutedred}{\textbf{-384}\%}) & 16.9 (\textbf{5}\%) \\
512  & 1.20  & 32.2  & 2.17 (\textcolor{mutedred}{\textbf{-81}\%})  & 34.4 (\textcolor{mutedred}{\textbf{-7}\%}) \\
2048 & 12.5  & 112   & 8.40 (\textbf{33}\%)   & 337 (\textcolor{mutedred}{\textbf{-201}\%}) \\
\bottomrule
\end{tabular}
\end{subtable}
\hfill
\begin{subtable}[t]{0.48\textwidth}
\centering
\caption{$d_{in}=384,\ d_{out}=100$}
\begin{tabular}{c|r@{,\ }l|r@{,\ }l}
\toprule
BS & \multicolumn{2}{c|}{Baseline (ms, MB)} & \multicolumn{2}{c}{Khatri-Rao SVD (ms, MB)} \\
\midrule
128  & $\quad\;$ 1.34 & 51.6  & 4.77 (\textcolor{mutedred}{\textbf{-257}\%}) & 17.5 (\textbf{66}\%) \\
512  & 7.22 & 169   & 6.01 (\textbf{17}\%)   & 34.9 (\textbf{79}\%) \\
2048 & 113  & 647   & 14.0 (\textbf{88}\%)   & 332 (\textbf{49}\%) \\
\bottomrule
\end{tabular}
\end{subtable}

\vspace{0.6em}

\begin{subtable}[t]{0.48\textwidth}
\centering
\caption{$d_{in}=768,\ d_{out}=200$}
\begin{tabular}{c|r@{,\ }l|r@{,\ }l}
\toprule
BS & \multicolumn{2}{c|}{Baseline (ms, MB)} & \multicolumn{2}{c}{Khatri-Rao SVD (ms, MB)} \\
\midrule
256  & $\quad\;$  9.03 & 316   & 9.71 (\textcolor{mutedred}{\textbf{-8}\%}) & 21.5 (\textbf{93}\%) \\
1024 & 137  & 1222  & 14.8 (\textbf{89}\%) & 93.2 (\textbf{92}\%) \\
4096 & 2882 & 4966  & 73.8 (\textbf{97}\%) & 1305 (\textbf{74}\%) \\
\bottomrule
\end{tabular}
\end{subtable}
\hfill
\begin{subtable}[t]{0.48\textwidth}
\centering
\caption{$d_{in}=768,\ d_{out}=1000$}
\begin{tabular}{c|r@{,\ }l|r@{,\ }l}
\toprule
BS & \multicolumn{2}{c|}{Baseline (ms, MB)} & \multicolumn{2}{c}{Khatri-Rao SVD (ms, MB)} \\
\midrule
256  & $\quad\;$   62.3 & 1519 & 89.2 (\textcolor{mutedred}{\textbf{-43}\%}) & 34.3 (\textbf{98}\%) \\
1024 & 830  & 6027 & 98.4 (\textbf{88}\%)  & 103 (\textbf{98}\%) \\
4096 & \multicolumn{2}{c|}{Out-of-Memory} & 212 & 1324  \\
\bottomrule
\end{tabular}
\end{subtable}
\vspace{5pt}
\caption{Benchmark comparison of Baseline and Khatri-Rao with rank-5 SVD approximation (AdamW form).}
\label{tab:khatri_rao_svd_bench_summary}
\end{table}

\subsection{Implicit computation of $\Delta\boldsymbol{\theta}_b$ and $\mathbf{G} \Delta\boldsymbol{\theta}_b$}
\label{appendix:implicit_gdg}
To fully realize the computational and memory benefits described above, it is crucial that \emph{all} quantities defining the optimization problem are computed without explicitly instantiating the per-sample gradients $\mathbf{G}$. Evidently, it is pointless to avoid materializing $\mathbf{G}$ for the Gram matrix $\mathbf{Q}$, only to materialize it later to compute the vector $\mathbf{G} \Delta\boldsymbol{\theta}_b \in \mathbb{R}^B$. 

Let the standard parameter update proposed by the underlying optimizer be $\Delta\boldsymbol{\theta}_b$. To form the defining quantities of the optimization problem, we explicitly invoke the underlying optimizer (e.g., SGD with momentum or AdamW) to compute $\Delta\boldsymbol{\theta}_b$, but we do so \emph{strictly} for the layers involved in the approximation---which we advocate to be solely the last  layer ($\ell=L$). We emphasize that the gradients fed to the optimizer are the standard batch-averaged gradients, efficiently calculated via a regular backward pass that avoids materializing per-sample gradients. Partitioning this update into the weight component $\Delta\boldsymbol{\theta}_{b,w} \in \mathbb{R}^{d_{out} d_{in}}$ (in flattened format) and the bias component $\Delta\boldsymbol{\theta}_{b,b} \in \mathbb{R}^{d_{out}}$, we can then compute the product $\mathbf{G} \Delta\boldsymbol{\theta}_b = \mathbf{G}_w \Delta\boldsymbol{\theta}_{b,w} + \mathbf{G}_b \Delta\boldsymbol{\theta}_{b,b}$ implicitly as:
\begin{equation*}
\mathbf{G} \Delta\boldsymbol{\theta}_b = \big( (\mathbf{x} \Delta\boldsymbol{\theta}_{b,w}^\top) \odot \mathbf{g_y} \big) \mathbf{1} + \mathbf{g_y} \Delta\boldsymbol{\theta}_{b,b}
\end{equation*}
where $\mathbf{1}\in\mathbb{R}^{d_{out}}$ is a vector of ones. In PyTorch, this translates to the following highly efficient operations:

\begin{lstlisting}[language=Python, caption={Implicit computation of $\mathbf{G} \Delta\boldsymbol{\theta}_b$ without explicitly forming $\mathbf{G}$}]
# Dimensions: feat_all (x) [B, d_in], grad_logits_all (g_y) [B, d_out]
# dg_w [d_out, d_in], dg_b [d_out]
Gdg = (feat_all @ dg_w.t() * grad_logits_all).sum(dim=1) 
Gdg += grad_logits_all @ dg_b
\end{lstlisting}
It is worth noting that if $\mathbf{G}$ were already computed and stored in memory, performing the direct matrix-vector multiplication (e.g., \texttt{grad\_w\_flat @ dg\_w\_flat}) is fast. In fact, for small layer dimensions (e.g., $d_{in}=64, d_{out} \in \{10, 100\}$), the explicit method using the materialized $\mathbf{G}$ is usually a bit faster.  However, when evaluating larger sizes like $d_{in}=2048$ with $d_{out} \in \{100, 1000\}$, the implicit method is an order of magnitude faster and less susceptible Out-of-Memory (OOM) errors. We note that our setup uses an NVIDIA Tesla T4 GPU with 16 GB, where OOM is triggered for the explicit baseline computing $\mathbf{G}$ when evaluating at $d_{in}=2048, d_{out}=1000$ for batch sizes $BS \ge 1024$. Thus, by avoiding the materialization of $\mathbf{G}$ everywhere, we retain excellent scaling capabilities across both memory footprint and runtime.

\section{Guarding against instabilities}
\label{appendix:stability_preconditioning}

Before running the constrained optimization, we apply two lightweight safeguards to improve numerical robustness for the computed $\|\Delta\boldsymbol{\theta}_b\|^2$, and $\mathbf{Q}$ for SGD and $\mathbf{Q}_1, \mathbf{Q}_2$ for AdamW. Since in practice we only use the last layer to approximate these quantities (i.e. $\|\Delta\boldsymbol{\theta}_{b,L}\|^2$, $\mathbf{Q}_L$, $\mathbf{Q}_{L,1}, \mathbf{Q}_{L,2}$), it is possible that the computed values occasionally diverge from the actual global values, which can inadvertently affect optimization stability. Specifically, we found that $\|\Delta\boldsymbol{\theta}_b\|^2$ may occasionally spike up, making the constraint of \cref{eq:opt_problem_sgd} (for SGD) or \cref{eq:opt_problem_adamw} (for AdamW) overly loose and distancing the update too much from the trusted region. Additionally, the proxy matrices (e.g., $\mathbf{Q}_L$, $\mathbf{Q}_{L,1}, \mathbf{Q}_{L,2}$) may also become ill-conditioned, necessitating a mechanism to regularize them so as to ensure our solver remains stable.

\subsection{Safeguard 1: Spike control for $\|\Delta\boldsymbol{\theta}_b\|^2$}\label{sec:spike guard}

\begin{algorithm}[h!]
\caption{Spike guard}
\label{alg:dgdg spike guard}
\begin{algorithmic}[1]
\STATE At current iteration $t$ compute $\|\Delta\boldsymbol{\theta}_{b,L}\|^2$ , $\mathbf{G}_L \Delta\boldsymbol{\theta}_{b,L}$
\STATE \textbf{if} $t=1$: $\mathrm{ema\_dg} \leftarrow \|\Delta\boldsymbol{\theta}_{b,L}\|^2 + 1$
\STATE $\mathrm{max\_dg} \leftarrow \mathrm{max\_raise\_dg} \cdot \mathrm{ema\_dg}$
\IF{$\|\Delta\boldsymbol{\theta}_{b,L}\|^2 > \mathrm{max\_dg}$}
  \STATE $\mathbf{G}_L \Delta\boldsymbol{\theta}_{b,L} \leftarrow \frac{\mathrm{max\_dg}}{\|\Delta\boldsymbol{\theta}_{b,L}\|^2} \mathbf{G}_L \Delta\boldsymbol{\theta}_{b,L} \quad$ \COMMENT{Scaling down $\mathbf{G}_L \Delta\boldsymbol{\theta}_{b,L}$} 
  \STATE $\|\Delta\boldsymbol{\theta}_{b,L}\|^2 \leftarrow \mathrm{max\_dg} \qquad \qquad$ \COMMENT{Clipping before used in \cref{eq:opt_problem_sgd} or \eqref{eq:opt_problem_adamw}}
\ENDIF
\STATE $\mathrm{ema\_dg} \leftarrow \mathrm{mom\_dg} \cdot \mathrm{ema\_dg} + (1-\mathrm{mom\_dg})\cdot \|\Delta\boldsymbol{\theta}_{b,L}\|^2  $
\end{algorithmic}
\end{algorithm}

The idea is to keep an exponential moving average (EMA), which we note as $\mathrm{ema\_dg}$, on the computed $\|\Delta\boldsymbol{\theta}_{b,L}\|^2$.  If at some iteration $t$ we observe a more than $\mathrm{max\_raise\_dg}$ times increase of $\|\Delta\boldsymbol{\theta}_{b,L}\|^2$ with respect to EMA value, then we will clip. In \cref{alg:dgdg spike guard} we assume that $t=1$ is the first iteration. The reason behind adding the offset +1 in the initialization (see line 2) is because some training routines include a warm-up phase for the first iterations. During this phase the tiny learning rates make the $\|\Delta\boldsymbol{\theta}_{b,L}\|$ also tiny. Without the offset +1 the ema values $\mathrm{ema\_dg}$ takes too many rounds to increase and this leads to unnecessarily many clippings. We used $ \mathrm{mom\_dg}=0.98$ and $\mathrm{max\_raise\_dg} = 1.25$.

\subsection{Safeguard 2: Avoiding ill-conditioning for $\mathbf{Q}$}
\label{sec:ill conditioning guard}

Although the matrix $\mathbf{Q}$  is positive semi-definite, it can become highly ill-conditioned—either practically due to numerical precision limits, or natively due to rank deficiency in the extreme case where two per-sample gradients are linearly dependent. This resultant ill-conditioning leads to unstable updates. To prevent this, we monitor the condition number of $\mathbf{Q}$. Computing the exact condition number requires full spectral decomposition, which is computationally prohibitive as it has $\mathcal{O}(B^3)$ time complexity. Instead, we use the Lanczos algorithm to approximate the extremal eigenvalues of $\mathbf{Q}$ by projecting it onto a Krylov subspace. If $I_{L}$ is the number of iterations of the algorithm, then a $I_{L} \times I_{L}$ symmetric tridiagonal matrix $\mathbf{T}$ is efficiently constructed, whose largest and smallest eigenvalues provide efficient approximations to those of $\mathbf{Q}$. The overall time complexity of this approximation is only $\mathcal{O}(I_{L} B^2)$ (dominated by $I_{L}$ matrix-vector multiplications), representing a substantial speedup over exact decomposition since typically $I_{L} \ll B$. In practice, on an NVIDIA Tesla T4 GPU, we found that for smaller batch sizes ($B \le 256$) computing the exact eigenvalues directly via standard PyTorch routines overhead remains negligible and is preferred. For larger batches where $\mathcal{O}(B^3)$ becomes noticeably slow, we switch to the Lanczos approximation, setting the number of iterations to $I_{L} = \min(\max(20, \lfloor B/32 \rfloor),128)$.

Once the extremal eigenvalues ($\lambda_{\max}$ and $\lambda_{\min}$) are estimated, we evaluate the condition number $\kappa = \lambda_{\max} / \lambda_{\min}$. We maintain an exponential moving average (EMA) of the condition number, denoted $\bar{\kappa}$. If the current condition number exceeds a maximum allowed growth factor relative to the average ($\kappa > \mathrm{max\_raise\_k}\cdot \bar{\kappa}$), we apply adaptive preconditioning by adding a scaled identity matrix $\alpha \mathbf{I}$ to $\mathbf{Q}$. The scalar $\alpha$ is analytically derived to precisely enforce the target condition number $\mathrm{max\_raise\_k} \cdot\bar{\kappa}$.  We used $ \mathrm{mom\_k}=0.98$ and $\mathrm{max\_raise\_k} = 1.25$.

\begin{algorithm}[H]
\caption{Extremal Eigenvalue Approximation (Lanczos)}
\label{alg:lanczos}
\begin{algorithmic}[1]
\STATE \textbf{Input:}  Symmetric matrix $\mathbf{Q} \in \mathbb{R}^{B \times B}$, iterations $I_L$.
\STATE Sample random vector $\mathbf{v} \sim \mathcal{N}(\mathbf{0}, \mathbf{I})$
\STATE $\mathbf{v} \leftarrow \mathbf{v} / \|\mathbf{v}\|$
\STATE $\mathbf{v}_{\text{prev}} \leftarrow \mathbf{0}$, $\beta_0 \leftarrow 0$
\FOR{$i = 1$ \textbf{to} $I_{L}$}
    \STATE $\mathbf{w} \leftarrow \mathbf{Q} \mathbf{v}$
    \STATE $\alpha_i \leftarrow \mathbf{v}^\top \mathbf{w}$
    \STATE $\mathbf{w} \leftarrow \mathbf{w} - \alpha_i \mathbf{v} - \beta_{i-1} \mathbf{v}_{\text{prev}}$
    \STATE $\beta_i \leftarrow \|\mathbf{w}\|$
    \STATE $\mathbf{v}_{\text{prev}} \leftarrow \mathbf{v}$
    \STATE $\mathbf{v} \leftarrow \mathbf{w} / \beta_i$
\ENDFOR
\STATE Construct a $I_{L} \times I_{L}$ symmetric tridiagonal matrix $\mathbf{T}$ with $\alpha_1, \dots, \alpha_{I_L}$ on the main diagonal and $\beta_1, \dots, \beta_{I_{L}-1}$ on the first sub-diagonal and first super-diagonal.
\STATE Compute extremal eigenvalues $\lambda_{\min}, \lambda_{\max}$ of $\mathbf{T}$.
\STATE \textbf{Output:} $\lambda_{\min}, \lambda_{\max}$
\end{algorithmic}
\end{algorithm}

\begin{algorithm}[H]
\caption{Adaptive Preconditioning of $\mathbf{Q}$}
\label{alg:cond_monitor}
\begin{algorithmic}[1]
\STATE \textbf{Input:} Matrix $\mathbf{Q}$, EMA state $\bar{\kappa}$, momentum $\mathrm{mom\_k}$, threshold multiplier $\mathrm{max\_raise\_k}$.
\STATE \textbf{if} $B \le 256$ \textbf{then} $\lambda_{\min}, \lambda_{\max}$ from exact $\texttt{torch.linalg.eigvalsh}(\mathbf{Q})$ 
\STATE \textbf{else} $\lambda_{\min}, \lambda_{\max} \leftarrow \text{Algorithm~\ref{alg:lanczos}}(\mathbf{Q}, I_L=\min(\max(20, \lfloor B/32 \rfloor),128) )$
\STATE $\kappa \leftarrow \lambda_{\max} / \lambda_{\min}$
\STATE $\kappa_{\max} \leftarrow \mathrm{max\_raise\_k} \cdot \bar{\kappa}$
\IF{$\kappa > \kappa_{\max}$}
    \STATE $\alpha \leftarrow (\lambda_{\max} - \kappa_{\max}\lambda_{\min})/(\kappa_{\max} - 1.0)$
    \STATE $\mathbf{Q} \leftarrow \mathbf{Q} + \alpha \mathbf{I}$
    \STATE $\kappa \leftarrow \kappa_{\max}$
\ENDIF
\STATE $\bar{\kappa} \leftarrow\mathrm{mom\_k}\cdot \bar{\kappa} + (1 -\mathrm{mom\_k}) \cdot \kappa$ 
\STATE \textbf{Output:} $\mathbf{Q}, \bar{\kappa}$
\end{algorithmic}
\end{algorithm}

\section{Solver and implementation details}
\label{appendix:chambolle_pock_details}

\subsection{Overview of the Chambolle--Pock algorithm}
The Chambolle--Pock (CP) algorithm \cite{chambolle_pock} is a first-order primal--dual method for convex optimization problems of the form:
\[
\min_{\bm{w}\in\mathbb{R}^B} \; F(\mathbf{K}\bm{w}) + G(\bm{w}),
\]
where $\mathbf{K}: \mathbb{R}^B\to\mathbb{R}^B$ is a linear operator and $F, G$ are proper, convex (possibly non-smooth) functions. CP introduces a dual variable $\mathbf{y}$ associated with the linear term $\mathbf{K}\bm{w}$ and performs alternating proximal updates. 
The algorithm is particularly attractive when the proximal operators of $G$ and $F^\ast$ (the convex conjugate of $F$) are computationally inexpensive. Convergence is guaranteed provided the step sizes $\tau, \sigma > 0$ satisfy the condition $\tau\sigma\|\mathbf{K}\|^2_\sigma < 1$, where $\|\mathbf{K}\|_\sigma$ denotes the spectral norm (largest singular value) of $\mathbf{K}$. The general update steps at iteration $k$ are:
\begin{align*}
\mathbf{y}^{k+1} &= \operatorname{prox}_{\sigma F^{*}}\!\left(\mathbf{y}^{k} + \sigma \mathbf{K} \widetilde{\bm{w}}^{k}\right), \\
\bm{w}^{k+1} &= \operatorname{prox}_{\tau G}\!\left(\bm{w}^{k} - \tau \mathbf{K}^{\top} \mathbf{y}^{k+1}\right), \\
\widetilde{\bm{w}}^{k+1} &= \bm{w}^{k+1} + \theta\bigl(\bm{w}^{k+1} - \bm{w}^{k}\bigr).
\end{align*}
We now specialize this framework to our specific problem.

\subsection{Problem statement}
Our optimization problem is:
\begin{equation}\label{eq:app_problem}
\begin{aligned}
\bm{w}^\star 
&= \arg\min_{\bm{w} \in \mathcal{C}}
    \; H\big( -\mathbf{Q}_1 \bm{w} + \mathbf{q} \big) \\
\end{aligned}
\end{equation}
where $\mathcal{C} = \left\{ \bm{w}\in\mathbb{R}^B \; \middle| \bm{w}^\top \mathbf{Q}_2 \bm{w} \le r,\; \mathbf{0}\le \bm{w}\le \mathbf{t} \right\}$, $H(\mathbf{v}) = \sum_{i=1}^{B} \max(0, [\mathbf{v}]_i)$, $\mathbf{Q}_1, \mathbf{Q}_2 \in \mathbb{R}^{B\times B}$ are symmetric positive semi-definite matrices (for SGD case it is $\mathbf{Q}_1=\mathbf{Q}_2$), $\mathbf{q} \in \mathbb{R}^B$, $r > 0$, $\mathbf{t}= t\mathbf{1}$ for some $t>0$ and $\mathbf{1}$ a vector with all its elements equal to one. Let $I_{\mathcal{S}}(\mathbf{x})$ denote the indicator function of a set $\mathcal{S}$ in the context of convex analysis, i.e., $I_{\mathcal{S}}(\mathbf{x}) = 0$ if $\mathbf{x} \in \mathcal{S}$ and $+\infty$ otherwise. We map our problem to the CP template by choosing:
\[
\mathbf{K} = -\mathbf{Q}_1, \qquad 
F(\mathbf{z}) = H(\mathbf{z} + \mathbf{q}), \qquad 
G(\bm{w}) = I_\mathcal{C}  (\bm{w})= I_{\{\mathbf{0}\le\bm{w}\le \mathbf{t}\}}(\bm{w}) \;+\; I_{\{\bm{w}^\top \mathbf{Q}_2 \bm{w} \le r\}}(\bm{w}) 
\]

\subsection{Derivation of the updates}
CP requires the proximal operator of the conjugate $F^\ast$ and the proximal operator of $G$.

\paragraph{Conjugate of the (shifted) hinge sum.}
 Given that the convex conjugate of $f_i(u)=\max(0, u+[\mathbf{q}]_i)$ with $u\in\mathbb{R}$ is:
\[
f_i^\ast(y) = \sup_{u\in\mathbb{R}} \{ u y - f_i(u) \} = 
\begin{cases} 
-[\mathbf{q}]_iy, & \text{if } y\in[0,1],\\
+\infty, & \text{otherwise.}
\end{cases}
\]
the sum of conjugates:
\[
F^\ast(\mathbf{y}) = \sum_{i=1}^B f_i^\ast([\mathbf{y}]_i) = I_{[0,1]^B}(\mathbf{y}) - \sum_{i=1}^B [\mathbf{y}]_i[\mathbf{q}]_i = I_{[0,1]^B}(\mathbf{y}) - \langle \mathbf{y}, \mathbf{q} \rangle,
\]
where $\langle \mathbf{y}, \mathbf{q} \rangle$ is the inner product of the vectors  $\mathbf{y}$ and $\mathbf{q}$.

\paragraph{Proximal of $\sigma F^\ast$.}
The proximal operator of $\sigma F^\ast$ at $\mathbf{u}$ is defined as:
\begin{align*}
\operatorname{prox}_{\sigma F^\ast}(\mathbf{u})
&= \arg\min_{\mathbf{y}} \big\{\sigma\left( I_{[0,1]^B}(\mathbf{y}) - \langle \mathbf{y}, \mathbf{q}\rangle \right)+ \frac{1}{2}\|\mathbf{y}-\mathbf{u}\|^2 \big\} \\
&= \arg\min_{\mathbf{y}} \left\{2\sigma I_{[0,1]^B}(\mathbf{y}) -2 \langle \mathbf{y}, \sigma\mathbf{q}\rangle + \|\mathbf{y}\|^2 - 2\langle \mathbf{y}, \mathbf{u}\rangle + \|\mathbf{u}\|^2  \right\} \\
&= \arg\min_{\mathbf{y}} \left\{2\sigma I_{[0,1]^B}(\mathbf{y}) + \|\mathbf{y}\|^2 - 2\langle \mathbf{y}, \mathbf{u} + \sigma \mathbf{q}\rangle \right\}\\
&= \arg\min_{\mathbf{y} \in [0,1]^B} \big\{ \|\mathbf{y} - (\mathbf{u} + \sigma \mathbf{q})\|^2 -\|\mathbf{u} + \sigma \mathbf{q}\|^2 \big\} \\
&= \arg\min_{\mathbf{y} \in [0,1]^B} \left\{ \|\mathbf{y} - (\mathbf{u} + \sigma \mathbf{q})\|^2  \right\} = \Pi_{[0,1]^B}(\mathbf{u} + \sigma \mathbf{q}).
\end{align*}
Note that  the term $\|\mathbf{u} + \sigma \mathbf{q}\|^2$ is constant with respect to the minimization.

\paragraph{Proximal of $G$.} 
The proximal operator is $\operatorname{prox}_{\tau G}(\mathbf{v}) = \arg\min_{\bm{w}} \left(\tau G(\bm{w})+  \frac{1}{2}\|\bm{w}-\mathbf{v}\|^2 \right)$. This  actually amounts to an Euclidean projection on a convex set problem as:
\[
\operatorname{prox}_{\tau G}(\mathbf{v})
=\arg\min_{\bm{w}\in\mathcal{C}}
\left\|\bm{w}-\mathbf{v}\right\|^2
=\Pi_{\mathcal{C}}\!\left(\mathbf{v}\right), % \frac{1}{1+2\tau\epsilon}
\]
where the (convex) set $\mathcal{C}$ is the intersection of a box and an ellipsoid,
\[
\mathcal{C}=\mathcal{B}\cap\mathcal{E},\qquad
\mathcal{B}:=[0,t]^B,\qquad
\mathcal{E}:=\{\bm{w}\in\mathbb{R}^B:\bm{w}^\top \mathbf{Q}_2\bm{w}\le r\}.
\]

Evaluating such proximal operators is often non-trivial \cite{proximal_splitting}. A standard approach for projecting onto an intersection of convex sets is \textit{Dykstra's projection algorithm} \cite{Dykstra_projection_onto_intersection}, which alternates projections onto individual sets while incorporating correction vectors to ensure convergence to the true Euclidean projection $\Pi_{\mathcal{B}\cap\mathcal{E}}(\bm{w}_0)$. However, the exact projection onto the ellipsoid $\Pi_{\mathcal{E}}$ is computationally demanding, typically requiring an eigendecomposition of $\mathbf{Q}_2$ or an iterative root-finding procedure involving $B\times B$ linear systems, leading to an $\mathcal{O}(B^3)$ routine.

Since high precision is not critical within our CP iterations, we instead employ a computationally efficient alternating scheme \cite{POCS_algorithms_review}. 
\begin{enumerate}
    \item \textbf{Box Projection:} We first project onto $\mathcal{B}$ via element-wise clipping into $[0, t]^B$ (i.e. $\Pi_{\mathcal{B}}$).
    \item \textbf{Radial Scaling:} If the resulting vector violates the ellipsoidal constraint (i.e., $\bm{w}^\top \mathbf{Q}_2 \bm{w} > r$), we scale it by a factor $\eta = \sqrt{r / (\bm{w}^\top \mathbf{Q}_2 \bm{w})}$. 
\end{enumerate}
This radial scaling provides a cheap $\mathcal{O}(B^2)$ alternative to the exact Euclidean projection. Crucially, by performing the radial scaling second, we ensure that the final point is guaranteed to lie in the feasible set $\mathcal{C}$. The opposite ordering cannot guarantee it.

\subsection{Stopping criterion}\label{sec:stopping criterion}

\paragraph{Theoretical criterion: Primal-dual gap.}
A theoretically rigorous stopping criterion for primal-dual algorithms is the duality gap, defined as $\text{Gap}(\bm{w}, \mathbf{y}) = \text{Primal}(\bm{w}) - \text{Dual}(\mathbf{y})$, where Primal and Dual are the primal and dual objectives. The standard saddle-point form of our problem is:
\[
\min_{\bm{w}} \max_{\mathbf{y}} \langle -\mathbf{Q}_1 \bm{w}, \mathbf{y} \rangle + G(\bm{w}) - F^*(\mathbf{y}).
\]
The dual objective is given by $\text{Dual}(\mathbf{y}) = -F^*(\mathbf{y}) - G^*(\mathbf{Q}_1^\top \mathbf{y})$.
While the conjugate $F^*(\mathbf{y}) = I_{[0,1]^B}(\mathbf{y}) - \mathbf{y}^\top \mathbf{q}$ is trivial to compute, the conjugate of $G$ is not. The conjugate of  $G(\bm{w}) = I_{\mathcal{C}}(\bm{w})$ is:
\[
G^*(\mathbf{v}) = \sup_{\bm{w} \in \mathcal{C}} \left( \mathbf{v}^\top \bm{w} \right).
\]
Evaluating $G^*(\mathbf{v})$ requires solving a maximization problem over the intersection of an ellipsoid and the non-negative orthant. This sub-problem is as computationally difficult as the original primal problem itself. Consequently, computing the exact duality gap at every iteration is computationally prohibitive. 

\paragraph{Practical heuristic: Mean absolute update.}
Instead, we monitor convergence via the iterates in the original space. Let $\bm{w}^k$ denote the iterate in the primal domain at step $k$. We terminate the algorithm once the mean absolute update falls below a predefined tolerance:
\begin{equation}
    \frac{1}{B} \sum_{i=1}^B | [\bm{w}^{k+1}]_i - [\bm{w}^k]_i | \le \text{tol}.
\end{equation}

\subsection{Warm Start using Projected Gradient Descent}
To accelerate convergence, we found that initializing the primal and dual variables using a few steps ($I_{PGD}=10$) of projected (sub)gradient descent (PGD) is effective. In \cref{alg:approx_proj} we use $\mathbbm{1}(\cdot)$ to denote the standard binary indicator function that evaluates to $1$ when its argument is true and $0$ otherwise. With this notation, the (sub)gradient of the objective $H(\mathbf{x})$ can be compactly written as the element-wise boolean vector $\mathbbm{1}(\mathbf{x} > \mathbf{0})$.

\subsection{Efficient Spectral Norm Estimation}
Computing the exact spectral norm $\|\mathbf{Q}_1\|_\sigma$ needed for defining the step sizes$\tau, \sigma$ using PyTorch's standard routine (\texttt{torch.linalg.matrix\_norm}) becomes a computational bottleneck as the batch size $B$ grows, given its $\mathcal{O}(B^3)$ time complexity. Since the matrix $\mathbf{Q}_1$ is symmetric and positive semi-definite (PSD), its spectral norm coincides with its largest eigenvalue, $\lambda_{\max}$. As described in \cref{appendix:stability_preconditioning}, we already perform an efficient estimation of $\lambda_{\max}$ at each iteration using the Lanczos algorithm (or exact eigensolver for small $B$) to monitor and prevent ill-conditioning. Therefore, we reuse the $\lambda_{\max}$ output from the safeguarding check as the spectral norm $\|\mathbf{Q}_1\|_\sigma$.

\subsection{Final algorithm}

We use a safety factor of 0.5 (instead of 1.0) for the step sizes and 0.9 (again instead of 1.0)  for the relaxation parameter to ensure numerical stability and dampen oscillations that might arise from the operator splitting approximation. In our implementation, the solver runs for a total of $I_{solver}=50$ iterations, inclusive of the PGD warm-up steps. 

\begin{algorithm}[h!]
\caption{Approximate projection onto $\Pi_\mathcal{C}$}
\label{alg:approx_proj}
\begin{algorithmic}[1]
\STATE {\bfseries Input:} $\bm{w}, \mathbf{Q}_2, r$
\STATE $\bm{w}  \leftarrow \Pi_{[0,2]^B}(\bm{w} )$ $\qquad\qquad\qquad$ \COMMENT{Box projection (component-wise clipping)}
\STATE $\bm{w}  \leftarrow \displaystyle\sqrt{\frac{r}{\max(r,\bm{w}^\top \mathbf{Q}_2 \bm{w})}}\bm{w} \quad\;\;$ \COMMENT{Radial scaling for ellipsoid constraint}
\STATE {\bfseries Output:} $\bm{w}$
\end{algorithmic}
\end{algorithm}

\begin{algorithm}[h!]
\caption{PGD Warm Start}
\label{alg:pgd_warm_start}
\begin{algorithmic}[1]
\STATE {\bfseries Input:} $\mathbf{Q}_1, \mathbf{Q}_2, \mathbf{q}, r, I_{PGD}, \eta$
\STATE \textbf{if} $I_{PGD}=0$  \textbf{then output:} $\mathbf{0}, \mathbf{0}$
\STATE $\bm{w} = \mathbf{0}$
\STATE $\mathbf{y} = \mathbbm{1}(\mathbf{q} > \mathbf{0})$
\FOR{$k=1, \dots, I_{PGD}$}
  \STATE $\mathbf{g} \leftarrow -\mathbf{Q}_1^\top \mathbf{y} \qquad\qquad\qquad\qquad\quad$ \COMMENT{Subgradient of $H$}
  \STATE $\bm{w}  \leftarrow \text{Algorithm~\ref{alg:approx_proj}}(\bm{w} - \eta \mathbf{g}, \mathbf{Q}_2, r) $
  \STATE $\mathbf{y} \leftarrow \mathbbm{1}(-\mathbf{Q}_1 \bm{w} + \mathbf{q} > \mathbf{0})\qquad\qquad$  \COMMENT{Update active set}
\ENDFOR
\STATE {\bfseries Output:} $\bm{w}, \mathbf{y}$
\end{algorithmic}
\end{algorithm}

\begin{algorithm}[h!]
\caption{Chambolle--Pock solver}
\label{alg:cp_full}
\begin{algorithmic}[1]
\STATE {\bfseries Input:} $\mathbf{Q}_1, \mathbf{Q}_2, \mathbf{q}, r, \text{tol}, I_{PGD}, I_{solver}, \lambda_{\max}$
\STATE $\|\mathbf{Q}_1\|_\sigma \leftarrow \lambda_{\max}\qquad\qquad\qquad\qquad\qquad\qquad\;\;$  \COMMENT{Reused from Safeguard 2}
\STATE $\tau \leftarrow 0.5/\|\mathbf{Q}_1\|_\sigma$ 
\STATE $ \sigma \leftarrow 0.5/\|\mathbf{Q}_1\|_\sigma$ 
\STATE $ \bm{w}^1, \mathbf{y}^1\leftarrow$ \text{Algorithm~\ref{alg:pgd_warm_start}}$(\mathbf{Q}_1, \mathbf{Q}_2, \mathbf{q}, r, I_{PGD}, \tau) \;$ \COMMENT{Warm start}
\STATE $\widetilde{\bm{w}}^1 = \bm{w}^1$
\FOR{$k=1, \dots,I_{solver}-I_{PGD}$}
  \STATE $\mathbf{y}^{k+1} \leftarrow \Pi_{[0,1]^B}\big(\mathbf{y}^k + \sigma ( -\mathbf{Q}_1 \widetilde{\bm{w}}^{k}  + \mathbf{q})\big)\quad\quad$\COMMENT{Proximal of $\sigma F^\ast$ (Dual step)}
  \STATE $\bm{w} \leftarrow \bm{w}^{k} + \tau \mathbf{Q}_1^\top \mathbf{y}^{k+1}$  
  \STATE $\bm{w}^{k+1} \leftarrow$ \text{Algorithm~\ref{alg:approx_proj}}$(\bm{w}, \mathbf{Q}_2, r)\quad \qquad\qquad$ \COMMENT{Proximal of $\tau G$ (Primal step)}
  \STATE $\widetilde{\bm{w}}^{\,k+1} \leftarrow \bm{w}^{k+1}  + 0.9(\bm{w}^{k+1} - \bm{w}^{k})\quad\qquad\;\;$ \COMMENT{Extrapolation}
  \STATE \textbf{if} $\frac{1}{B}\sum_{i=1}^B \big|[\bm{w}^{k+1} - \bm{w}^{k}]_i\big| \le \text{tol}$ \textbf{then} break
\ENDFOR
\STATE {\bfseries Output:} $\bm{w}$
\end{algorithmic}
\end{algorithm}

\newpage

\section{Architectural and Training Details} 
\label{appendix:architectural_details}

The \textbf{ViT} architecture consists of 7 transformer layers with 12 attention heads. For the standard version, both the hidden and MLP dimensions are set to 384, while for the \textbf{ViT-small} variant we use a dimension of 192 for both; the only difference from the reference implementation of \cite{vit_cifar_github} is that we utilize the AdamW optimizer instead of Adam. 

For convolutional models on CIFAR datasets, we follow the \textbf{ResNet-44} (and the smaller \textbf{ResNet-20}) specifications for CIFAR \cite{ResNet}, which utilize three stages of residual blocks with 16, 32, and 64 channels respectively. However, because PyTorch's \texttt{vmap} does not support Batch Normalization, we replace it with Group Normalization (GN) \cite{GroupNorm} using 8 groups across all stages to enable efficient per-sample gradient computation.

For both ResNet and ViT architectures, we adopt a baseline batch size of 128. The initial learning rate is 0.1 for the former (using SGD optimizer)  and 0.001 for the latter (using AdamW optimizer). When the batch size is increased, we keep the total number of epochs constant, and we adjust the learning rate according to the square root scaling rule \cite{lr_square_law}: 
\begin{equation}
    \mathrm{lr}_{\text{new}} = \mathrm{lr}_{\text{base}} \sqrt{\frac{B_{\text{new}}}{B_{\text{base}}}},
\end{equation}
where $B_{\text{base}} = 128$ and $\mathrm{lr}_{\text{base}}$ is the corresponding baseline learning rate. The rest of the hyperparameters match those in \cite{vit_cifar_github} for ViT and/or \cite{ResNet} for ResNet. 

For Tiny-ImageNet, a downsampled ImageNet-1K variant with 200 classes and RGB images of size $64\times 64$, we train ResNet-50 using the same recipe as in \cite{ResNet}. For ConvNeXt-T, we follow the training recipe of \cite{convnext} and report the accuracy of the exponential moving average (EMA) model, using FP16 training to avoid memory issues.

For ImageNet-100, \Cref{tab:acc_vs_resources_imagenet100} reports configurations based on the standard ConvNeXt recipe \cite{convnext}. For our "standard" baseline, the only modification is the use of half-precision (FP16) instead of FP32 to fit the total batch size of 4096 (1024 per A100 GPU) into memory. For the "standard + more data aug." configuration, we further intensify the regularization: we increase the number of RandAugment \cite{randaugment} operations from 2 to 3, and slightly raise the Mixup/Cutmix coefficients from 0.8/1.0 to 1.0/1.2 (though the RandAugment increase drives most of the difficulty). Finally, when training under the reduced 100-epoch budget, we proportionally decrease the warmup period from 20 to 5 epochs. Across all settings, we report the non-EMA model accuracy. The EMA coefficient calibrated for the standard 300-epoch ImageNet-1K recipe \cite{convnext} adapts too slowly when the total number of iterations is drastically reduced—a consequence of both the smaller dataset (100 classes) and, in some settings, the shortened 100-epoch training schedule.

For CIFAR-10, CIFAR-100, and Tiny-ImageNet, all reported results are averaged over 5 random seeds. For ImageNet-100, we report results from a single seed.

%%%%%%%%%%%%%%%%%%%%%%%%%%%%%%%%%%%%%%%%%%%%%%%%%%%%%%%%%%%%%%%%%%%%%%%%%%%%%%%
%%%%%%%%%%%%%%%%%%%%%%%%%%%%%%%%%%%%%%%%%%%%%%%%%%%%%%%%%%%%%%%%%%%%%%%%%%%%%%%

\section{Additional figures on the validity of the approximation and update quality}\label{appendix:additional_figures_approx}

In this section, we provide the derivation for the optimal scaling factor $\alpha^\star$ and present supplementary figures further validating our approximation strategy. 
As detailed in the main text, we assess performance via two metrics: (1) the \textit{structural approximation error}, measured by the relative Frobenius norm $\|\mathbf{Q} - \alpha^\star \mathbf{Q}_{\ell:L}\|_F / \|\mathbf{Q}\|_F$ for experiments with SGD optimizer and $\|\mathbf{Q}_1 - \alpha^\star \mathbf{Q}_{1,\ell:L}\|_F / \|\mathbf{Q}_1\|_F$ for experiments with AdamW; and (2) the \textit{relative effective objective reduction} (REOR), defined as $\left(J(\Delta\boldsymbol{\theta}_b) - J(\Delta\boldsymbol{\theta}^\star)\right)/J(\Delta\boldsymbol{\theta}_b)$. This second metric quantifies the reduction of the original per-sample interference objective (\cref{eq:conceptual_problem}), where $\Delta\boldsymbol{\theta}^\star = \Delta\boldsymbol{\theta}_b + \mathbf{G}^\top \mathbf{w}^\star$ and $\mathbf{w}^\star$ is the solution the solver outputs to the respective surrogate problem.

\subsection{Derivation of the Optimal Scaling Factor}
We seek the scalar $\alpha^\star$ that minimizes the Frobenius distance between the full Gram matrix $\mathbf{Q}$ and the scaled partial Gram matrix $\mathbf{Q}_{\ell:L}$. The optimization problem is:
\[
\alpha^\star = \arg\min_{\alpha \in \mathbb{R}} \|\mathbf{Q} - \alpha \mathbf{Q}_{\ell:L}\|_F^2
\]
Expanding the squared norm:
\[
\begin{aligned}
\|\mathbf{Q} - \alpha \mathbf{Q}_{\ell:L}\|_F^2 &= \langle \mathbf{Q} - \alpha \mathbf{Q}_{\ell:L}, \mathbf{Q} - \alpha \mathbf{Q}_{\ell:L} \rangle_F \\
&= \|\mathbf{Q}\|_F^2 - 2\alpha \langle \mathbf{Q}, \mathbf{Q}_{\ell:L} \rangle_F + \alpha^2 \|\mathbf{Q}_{\ell:L}\|_F^2
\end{aligned}
\]
where $\langle \mathbf{A}, \mathbf{B} \rangle_F = \operatorname{Tr}(\mathbf{A}^\top \mathbf{B})$ is the Frobenius inner product. Taking the derivative with respect to $\alpha$ and setting it to zero yields:
\[
\frac{d}{d\alpha} = -2 \langle \mathbf{Q}, \mathbf{Q}_{\ell:L} \rangle_F + 2\alpha \|\mathbf{Q}_{\ell:L}\|_F^2 = 0
\]
Solving for $\alpha$:
\[
\alpha^\star = \frac{\langle \mathbf{Q}, \mathbf{Q}_{\ell:L} \rangle_F}{\|\mathbf{Q}_{\ell:L}\|_F^2} = \frac{\operatorname{Tr}(\mathbf{Q}^\top \mathbf{Q}_{\ell:L})}{\|\mathbf{Q}_{\ell:L}\|_F^2}
\]
Since $\mathbf{Q}$ and $\mathbf{Q}_{\ell:L}$ are positive semi-definite (and non-zero), both the numerator and denominator are positive, ensuring $\alpha^\star > 0$. Substituting  $\mathbf{Q}$ with $\mathbf{Q}_1$ and  $\mathbf{Q}_{\ell:L}$ with  $\mathbf{Q}_{1,\ell:L}$ gives the corresponding result for the AdamW case.

\subsection{Additional Results}
Consistent with the main text, the top rows of the following figures display the structural error (lower is better), while the bottom rows display the distribution of the effective objective reduction (higher/positive is better).

\begin{figure}[h!]
    \centering
    \begin{subfigure}[b]{0.24\textwidth}
        \centering
        \includegraphics[width=\textwidth]{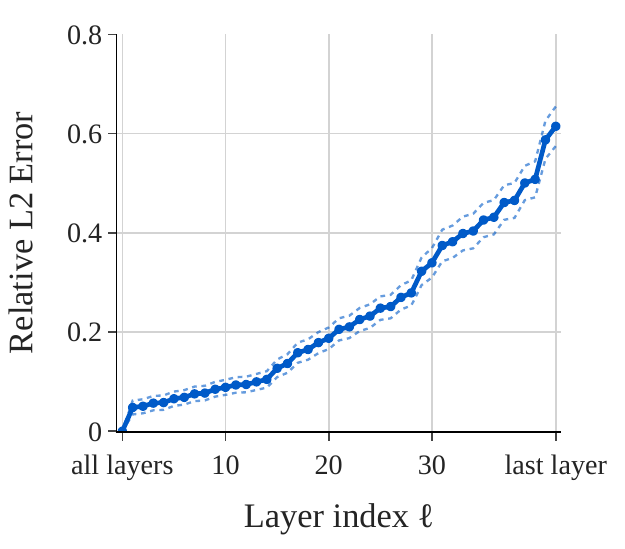}
        \vspace{2pt}
        \includegraphics[width=\textwidth]{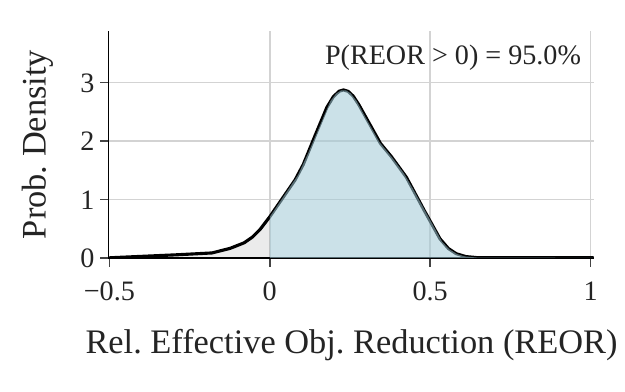}
        \caption{Epoch 10}
    \end{subfigure}
    \hfill
    \begin{subfigure}[b]{0.24\textwidth}
        \centering
        \includegraphics[width=\textwidth]{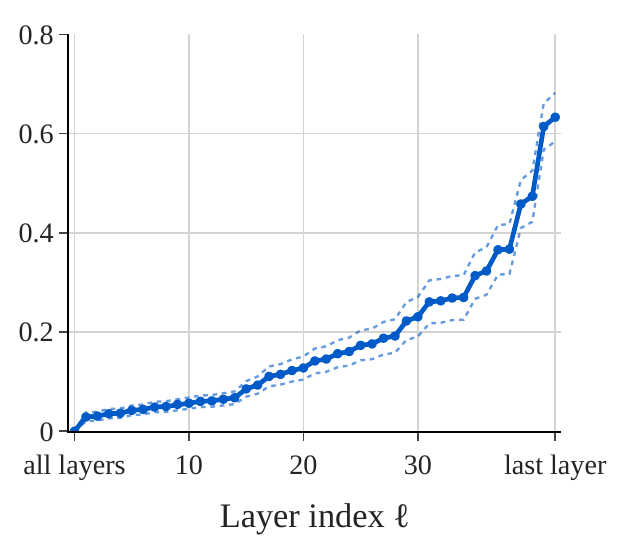}
        \vspace{2pt}
        \includegraphics[width=\textwidth]{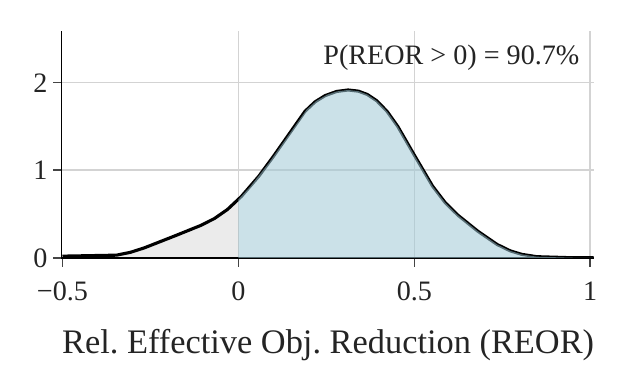}
        \caption{Epoch 50}
    \end{subfigure}
    \hfill
    \begin{subfigure}[b]{0.24\textwidth}
        \centering
        \includegraphics[width=\textwidth]{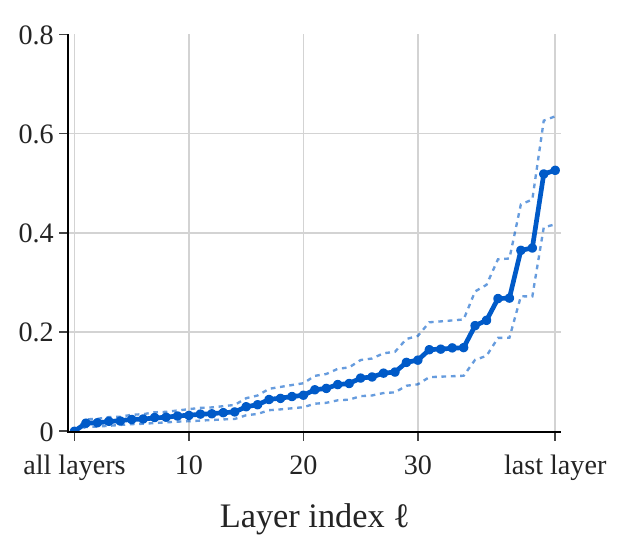}
        \vspace{2pt}
        \includegraphics[width=\textwidth]{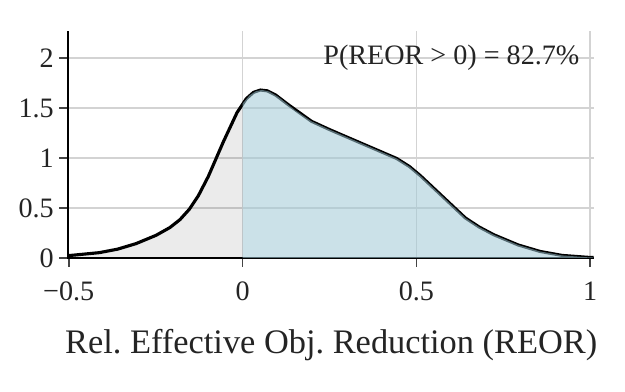}
        \caption{Epoch 100}
    \end{subfigure}
    \hfill
    \begin{subfigure}[b]{0.24\textwidth}
        \centering
        \includegraphics[width=\textwidth]{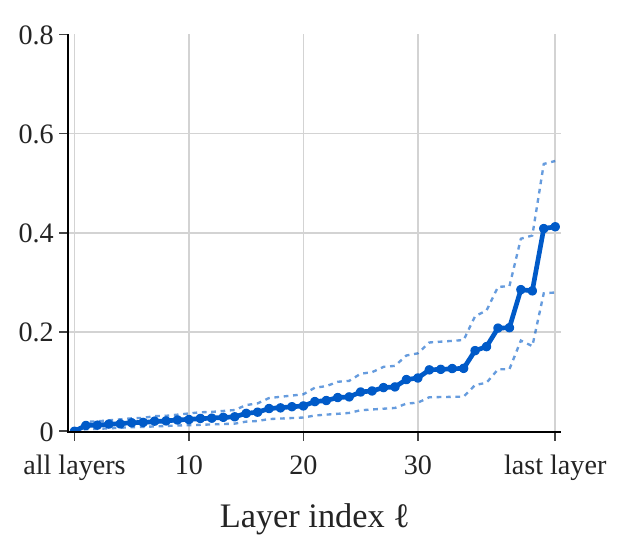}
        \vspace{2pt}
        \includegraphics[width=\textwidth]{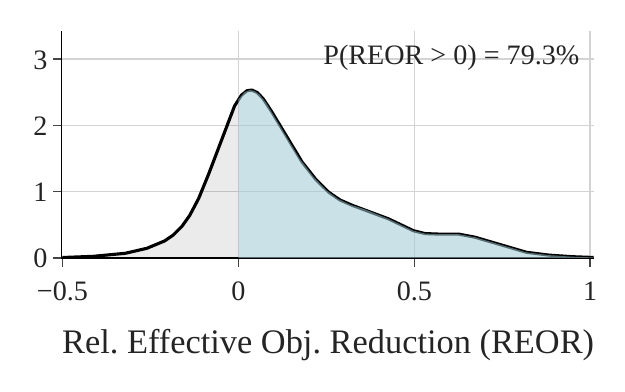}
        \caption{Epoch 150}
    \end{subfigure}
     \caption{Approximation quality for ResNet-20 on CIFAR-10 for Batch Size = 128 at different training epochs. \textbf{Top Row:} Relative $L_2$ error of the Gram matrix approximation $\mathbf{Q}$. \textbf{Bottom Row:} Distribution of the relative effective objective reduction (REOR).}   
    \label{fig:rel_l2_c10_resnet_bs128}
\end{figure}

\begin{figure}[h!]
     \centering
     \begin{subfigure}[b]{0.24\textwidth}
         \centering
         \includegraphics[width=\textwidth]{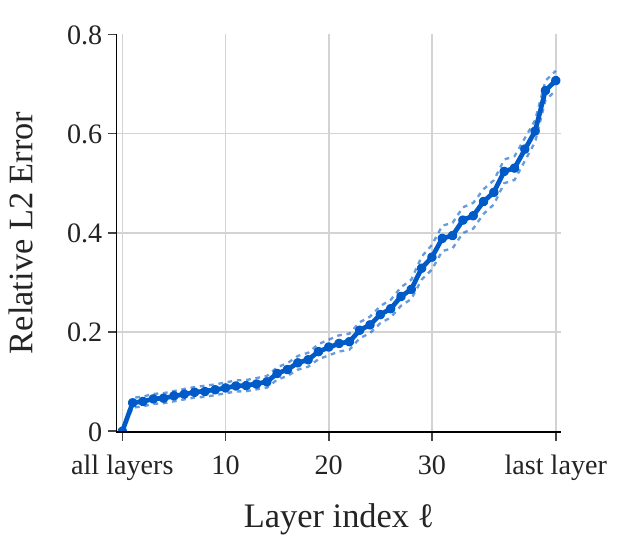}
         \vspace{2pt}
         \includegraphics[width=\textwidth]{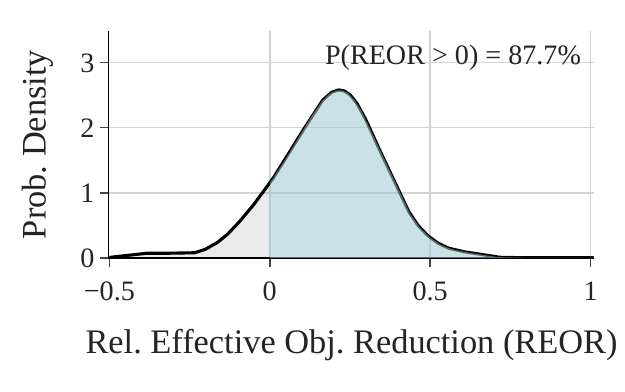}
         \caption{Epoch 10}
     \end{subfigure}
     \hfill
     \begin{subfigure}[b]{0.24\textwidth}
         \centering
         \includegraphics[width=\textwidth]{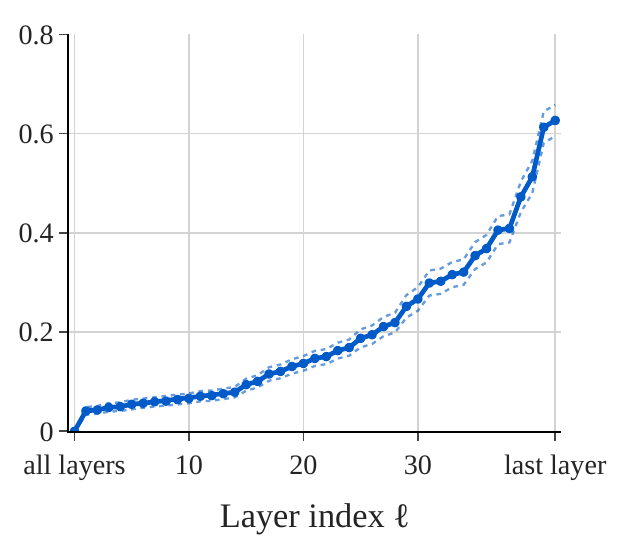}
         \vspace{2pt}
         \includegraphics[width=\textwidth]{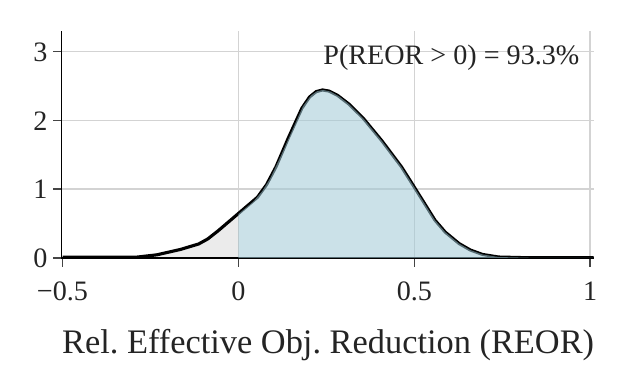}
         \caption{Epoch 50}
     \end{subfigure}
     \hfill
     \begin{subfigure}[b]{0.24\textwidth}
         \centering
         \includegraphics[width=\textwidth]{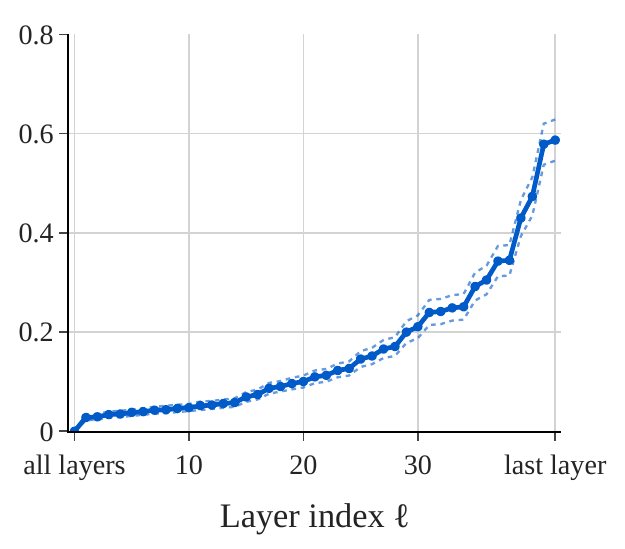}
         \vspace{2pt}
         \includegraphics[width=\textwidth]{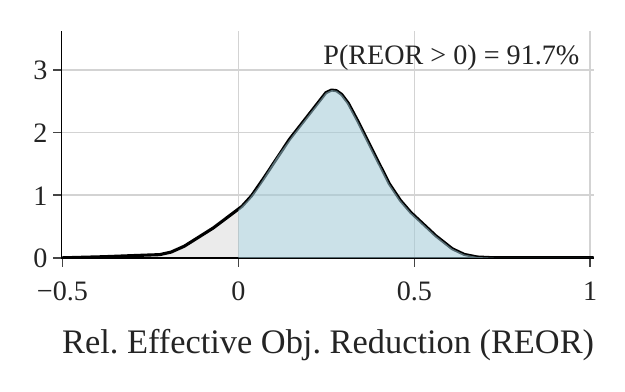}
         \caption{Epoch 100}
     \end{subfigure}
     \hfill
     \begin{subfigure}[b]{0.24\textwidth}
         \centering
         \includegraphics[width=\textwidth]{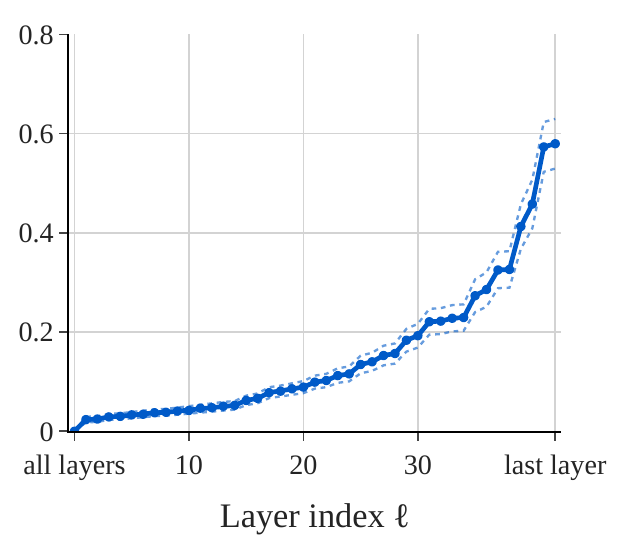}
         \vspace{2pt}
         \includegraphics[width=\textwidth]{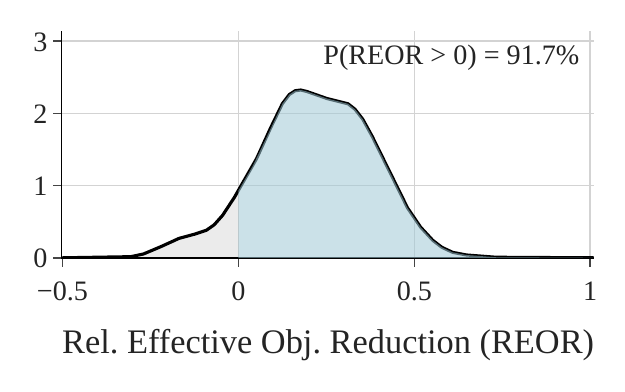}
         \caption{Epoch 150}
     \end{subfigure}        
     \caption{Approximation quality for ResNet-20 on CIFAR-100 for Batch Size = 128 at different training epochs. \textbf{Top Row:} Relative $L_2$ error of the Gram matrix approximation $\mathbf{Q}$. \textbf{Bottom Row:} Distribution of the relative effective objective reduction (REOR).}
        \label{fig:l2_c100_resnet_bs128}
\end{figure}

\newpage

\begin{figure}[h!]
     \centering
     \begin{subfigure}[b]{0.24\textwidth}
         \centering
         \includegraphics[width=\textwidth]{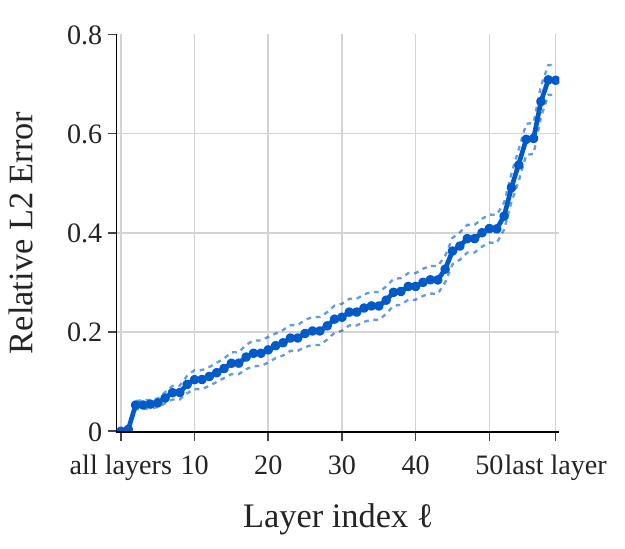}
         \vspace{2pt}
         \includegraphics[width=\textwidth]{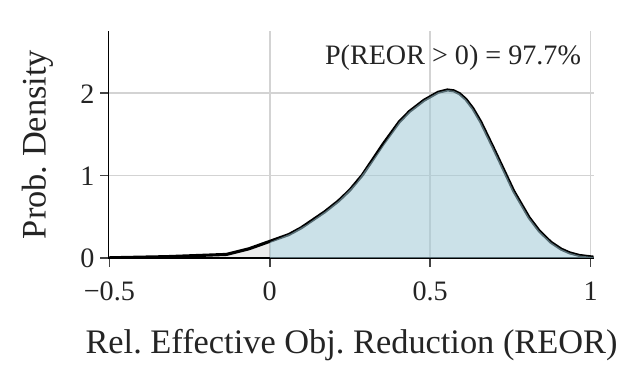}
         \caption{Epoch 10}
     \end{subfigure}
     \hfill
     \begin{subfigure}[b]{0.24\textwidth}
         \centering
         \includegraphics[width=\textwidth]{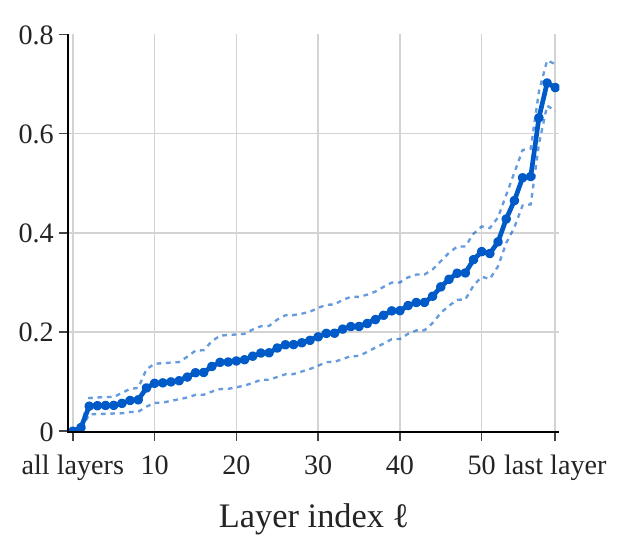}
         \vspace{2pt}
         \includegraphics[width=\textwidth]{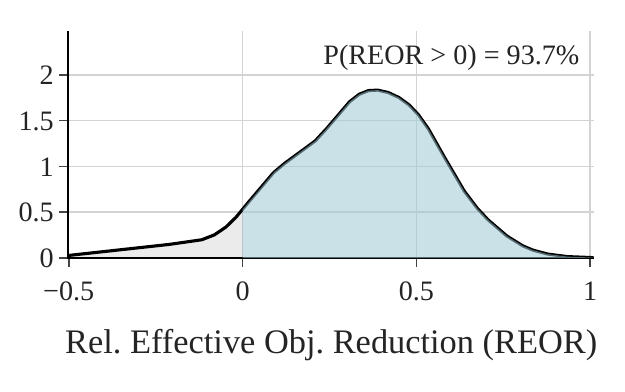}
         \caption{Epoch 50}
     \end{subfigure}
     \hfill
     \begin{subfigure}[b]{0.24\textwidth}
         \centering
         \includegraphics[width=\textwidth]{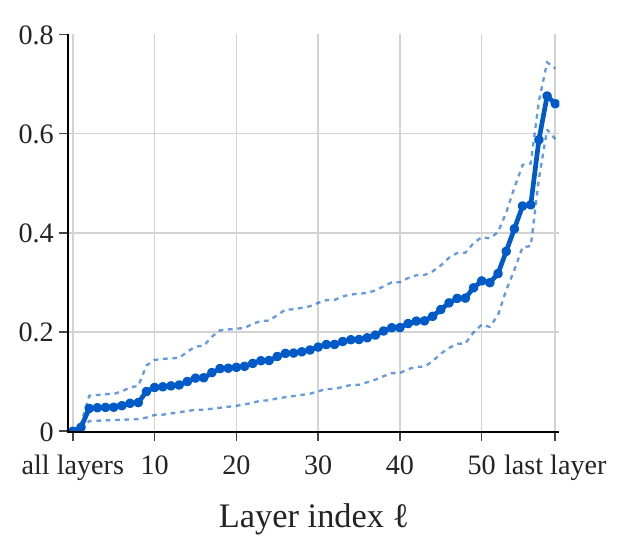}
         \vspace{2pt}
         \includegraphics[width=\textwidth]{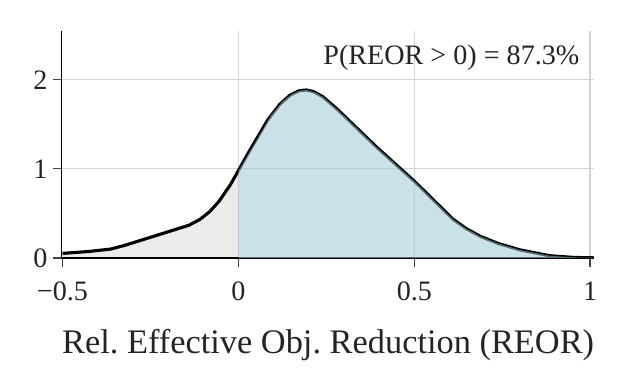}
         \caption{Epoch 100}
     \end{subfigure}
     \hfill
     \begin{subfigure}[b]{0.24\textwidth}
         \centering
         \includegraphics[width=\textwidth]{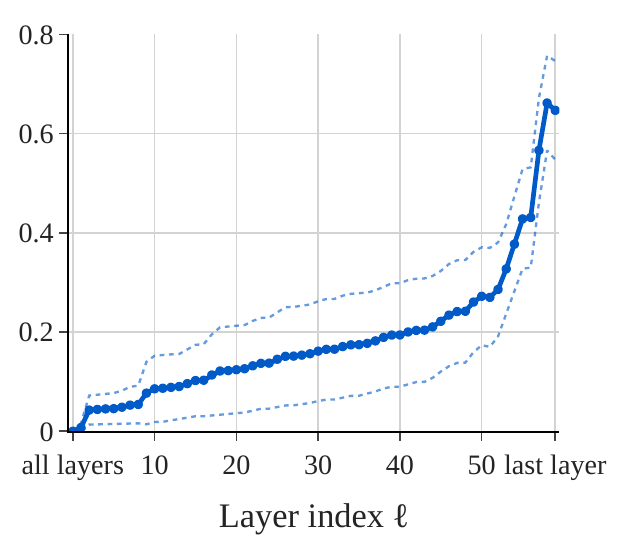}
         \vspace{2pt}
         \includegraphics[width=\textwidth]{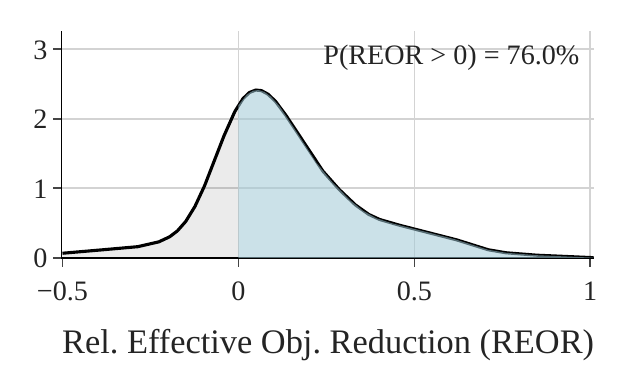}
         \caption{Epoch 150}
     \end{subfigure}
     \caption{Approximation quality for ViT-small on CIFAR-10 for Batch Size = 128 at different training epochs. \textbf{Top Row:} Relative $L_2$ error of the Gram matrix approximation $\mathbf{Q}_1$. \textbf{Bottom Row:} Distribution of the relative effective objective reduction (REOR).}
        \label{fig:rel_l2_c10_vit_small_bs128}
\end{figure}

\begin{figure}[h!]
     \centering
     \begin{subfigure}[b]{0.24\textwidth}
         \centering
         \includegraphics[width=\textwidth]{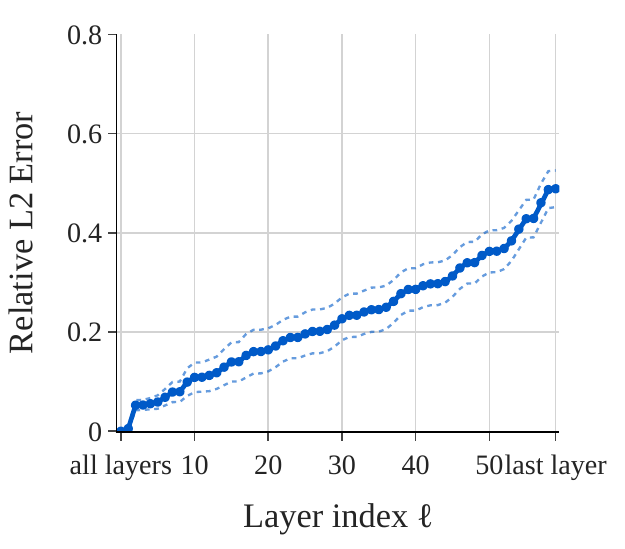}
         \vspace{2pt}
         \includegraphics[width=\textwidth]{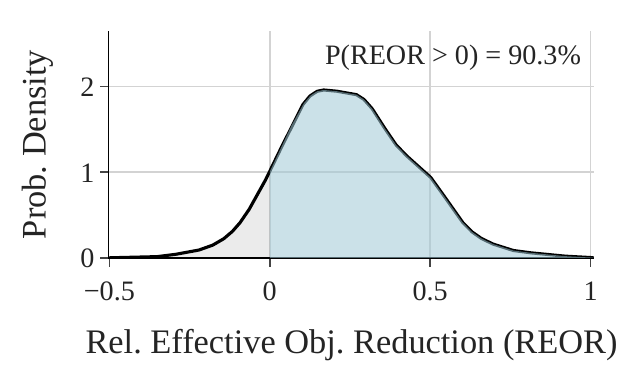}
         \caption{Epoch 10}
     \end{subfigure}
     \hfill
     \begin{subfigure}[b]{0.24\textwidth}
         \centering
         \includegraphics[width=\textwidth]{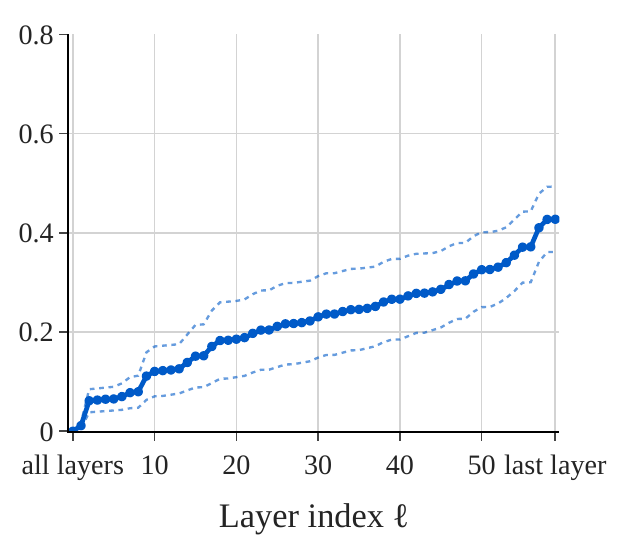}
         \vspace{2pt}
         \includegraphics[width=\textwidth]{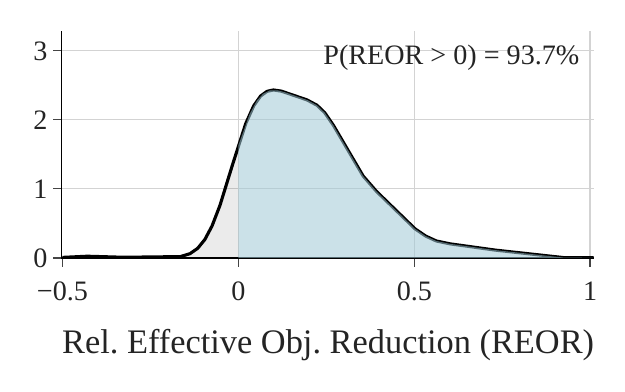}
         \caption{Epoch 50}
     \end{subfigure}
     \hfill
     \begin{subfigure}[b]{0.24\textwidth}
         \centering
         \includegraphics[width=\textwidth]{./figures_optim_approx/epoch_100_rel_l2_c100_vit_small_bs128.pdf}
         \vspace{2pt}
         \includegraphics[width=\textwidth]{./figures_optim_approx/epoch_100_improvement_relaxed_c100_vit_small_bs128.pdf}
         \caption{Epoch 100}
     \end{subfigure}
     \hfill
     \begin{subfigure}[b]{0.24\textwidth}
         \centering
         \includegraphics[width=\textwidth]{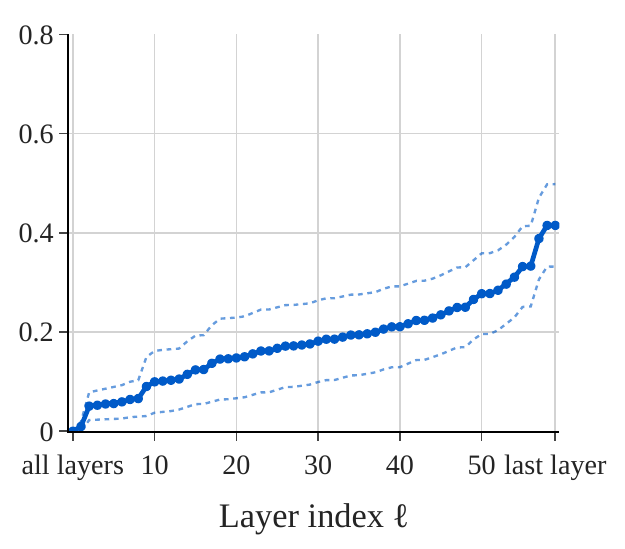}
         \vspace{2pt}
         \includegraphics[width=\textwidth]{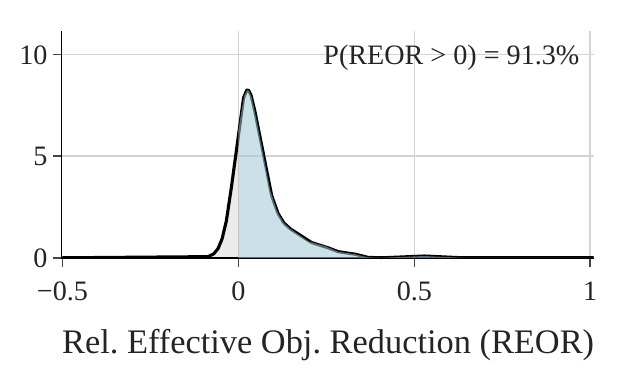}
         \caption{Epoch 150}
     \end{subfigure}     
     \caption{Approximation quality for ViT-small on CIFAR-100 for Batch Size = 128 at different training epochs. \textbf{Top Row:} Relative $L_2$ error of the Gram matrix approximation $\mathbf{Q}_1$. \textbf{Bottom Row:} Distribution of the relative effective objective reduction (REOR).}
        \label{fig:rel_l2_c100_vit_small_bs128}
\end{figure}

\clearpage

\section{Additional figures on the efficiency of the solver}\label{appendix:solver_efficiency_plots}
We present additional results demonstrating the efficiency of the proposed solver (detailed in \cref{appendix:chambolle_pock_details}) on the optimization problem \eqref{eq:app_problem}. We evaluate the solver's performance based on convergence speed and the quality of the resulting parameter solutions. Mirroring the main analysis, solution quality is quantified via the \textit{relative solver objective reduction}, defined as:
\begin{equation}
    \frac{H(\mathbf{q}) - H(-\mathbf{Q}_X\bm{w}^\star + \mathbf{q})}{H(\mathbf{q})}
\end{equation}
where $\mathbf{Q}_X$ corresponds to the matrix $\mathbf{Q}_L$ for SGD (\cref{eq:opt_problem_sgd}) or $\mathbf{Q}_{1,L}$ for AdamW (\cref{eq:opt_problem_adamw}) and $\bm{w}^\star$ is the solver's output optimal solution.

The following figures illustrate these convergence metrics across varying batch sizes. In each pair, the top plot highlights the relative objective reduction (interference avoided), while the bottom plot tracks the number of iterations required by the CP solver to reach a solution. The maximum allowed number of iterations our solver may use is set at $I_{solver}=50$.

\begin{figure}[h!]
    \centering
    \begin{subfigure}[b]{0.24\textwidth}
        \centering
        \includegraphics[width=\textwidth]{./figures_solver/c10_res_BS128_obj_reduction.pdf}
        \vspace{2pt}
        \includegraphics[width=\textwidth]{./figures_solver/c10_res_BS128_num_iter.pdf}
        \caption{Batch Size = 128}
    \end{subfigure}
    \hfill
    \begin{subfigure}[b]{0.24\textwidth}
        \centering
        \includegraphics[width=\textwidth]{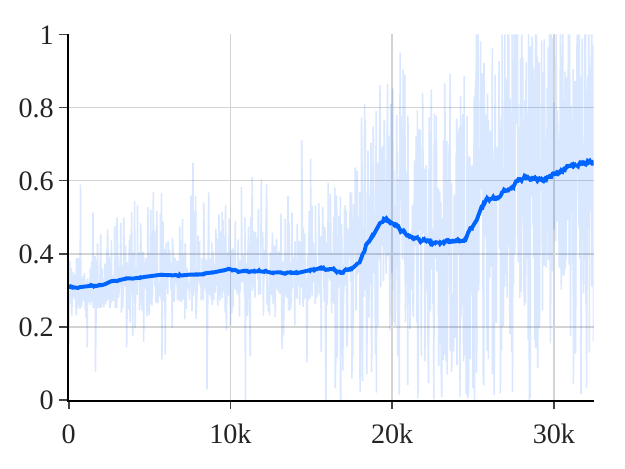}
        \vspace{2pt}
        \includegraphics[width=\textwidth]{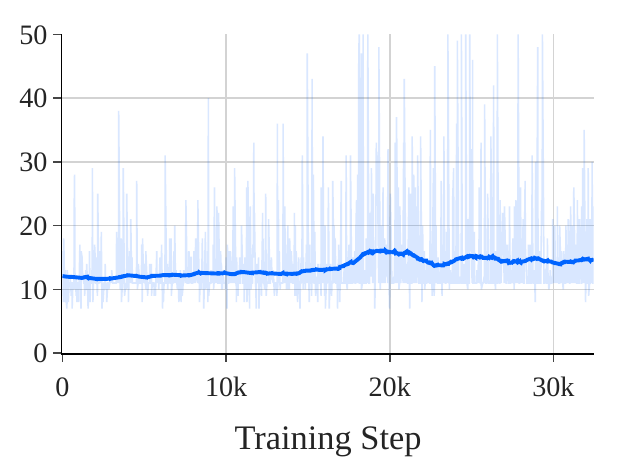}
        \caption{Batch Size = 256}
    \end{subfigure}
    \hfill
    \begin{subfigure}[b]{0.24\textwidth}
        \centering
        \includegraphics[width=\textwidth]{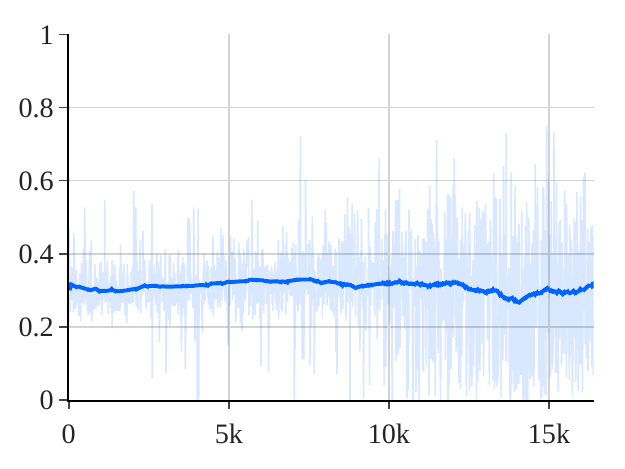}
        \vspace{2pt}
        \includegraphics[width=\textwidth]{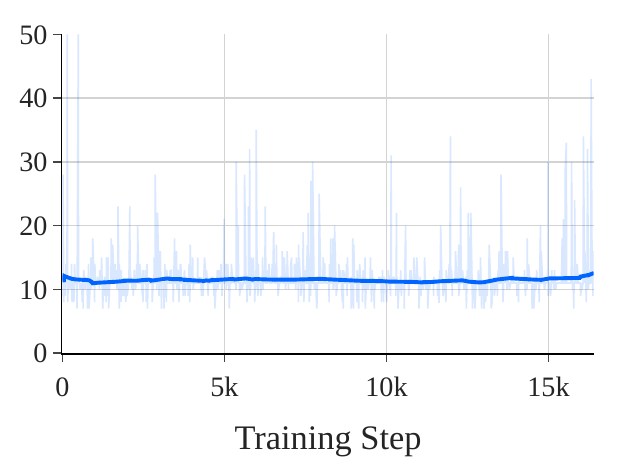}
        \caption{Batch Size = 512}
    \end{subfigure}
    \hfill
    \begin{subfigure}[b]{0.24\textwidth}
        \centering
        \includegraphics[width=\textwidth]{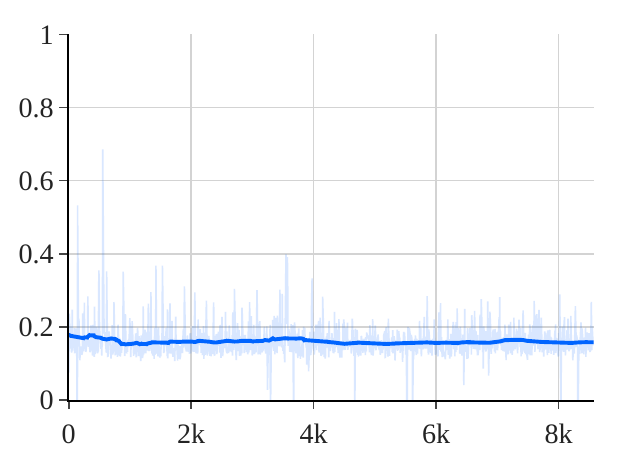}
        \vspace{2pt}
        \includegraphics[width=\textwidth]{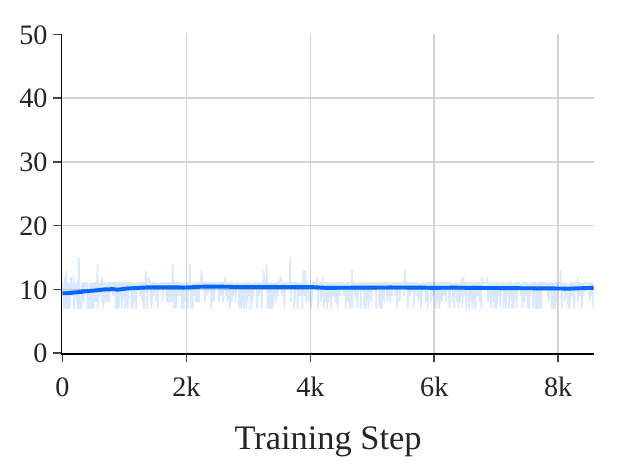}
        \caption{Batch Size = 1024}
    \end{subfigure}
    \caption{Relative objective reduction (top) and number of solver iterations (bottom) across different batch sizes for ResNet-44 on CIFAR-10.}
    \label{fig:solver_convergence_bs_comparison_res_C10}
\end{figure}

\begin{figure}[h!]
    \centering
    \begin{subfigure}[b]{0.24\textwidth}
        \centering
        \includegraphics[width=\textwidth]{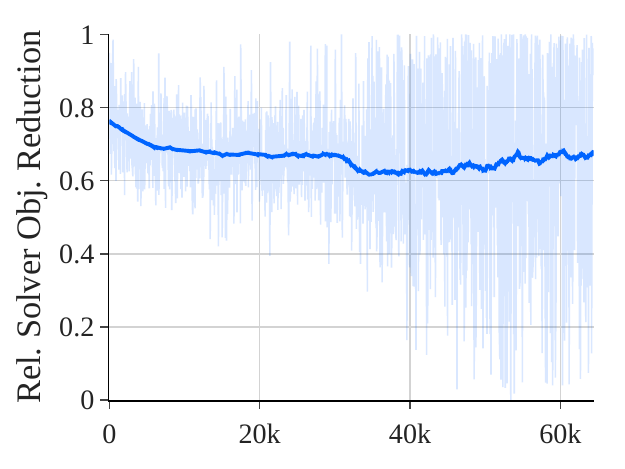}
        \vspace{2pt}
        \includegraphics[width=\textwidth]{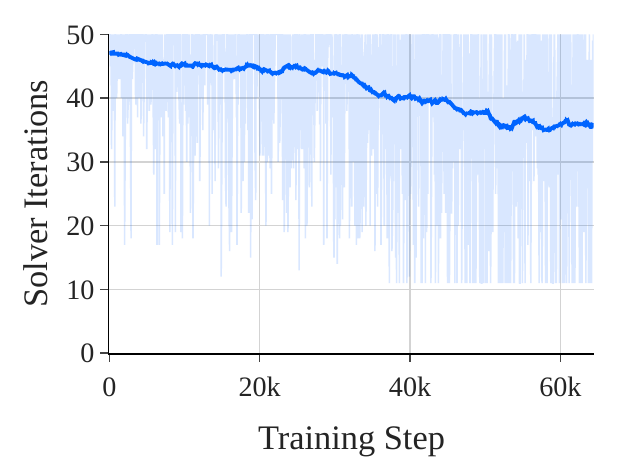}
        \caption{Batch Size = 128}
    \end{subfigure}
    \hfill
    \begin{subfigure}[b]{0.24\textwidth}
        \centering
        \includegraphics[width=\textwidth]{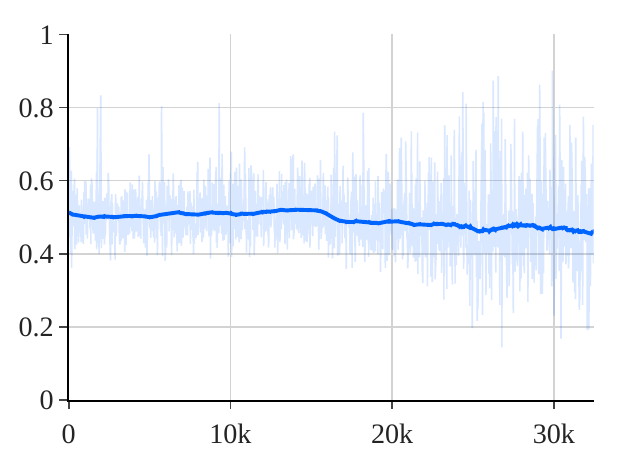}
        \vspace{2pt}
        \includegraphics[width=\textwidth]{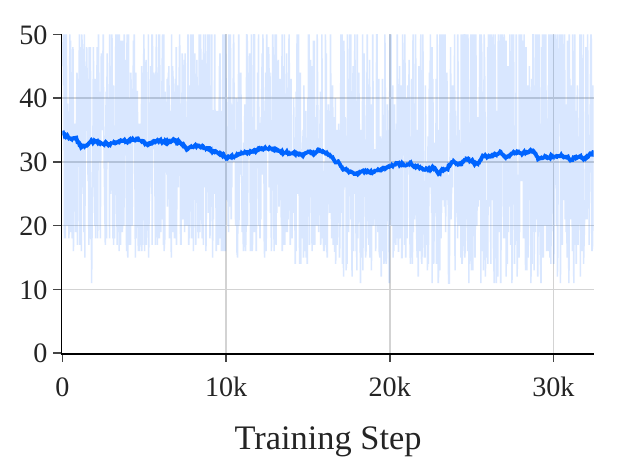}
        \caption{Batch Size = 256}
    \end{subfigure}
    \hfill
    \begin{subfigure}[b]{0.24\textwidth}
        \centering
        \includegraphics[width=\textwidth]{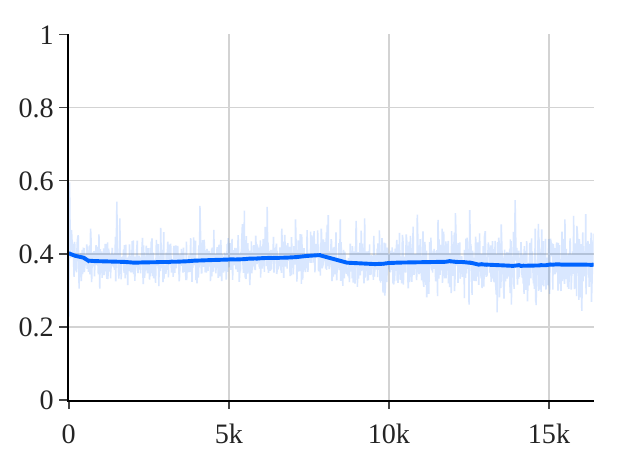}
        \vspace{2pt}
        \includegraphics[width=\textwidth]{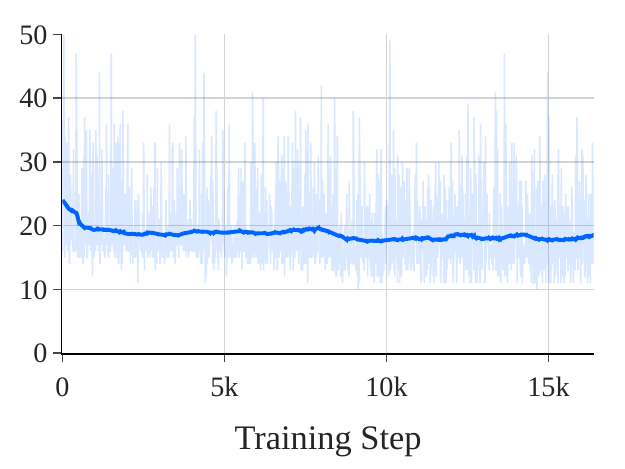}
        \caption{Batch Size = 512}
    \end{subfigure}
    \hfill
    \begin{subfigure}[b]{0.24\textwidth}
        \centering
        \includegraphics[width=\textwidth]{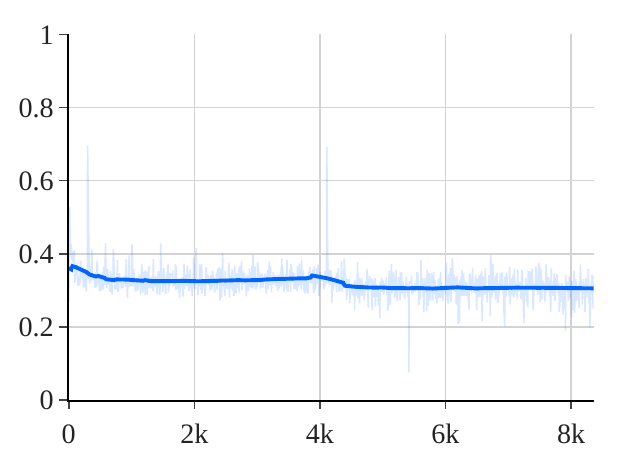}
        \vspace{2pt}
        \includegraphics[width=\textwidth]{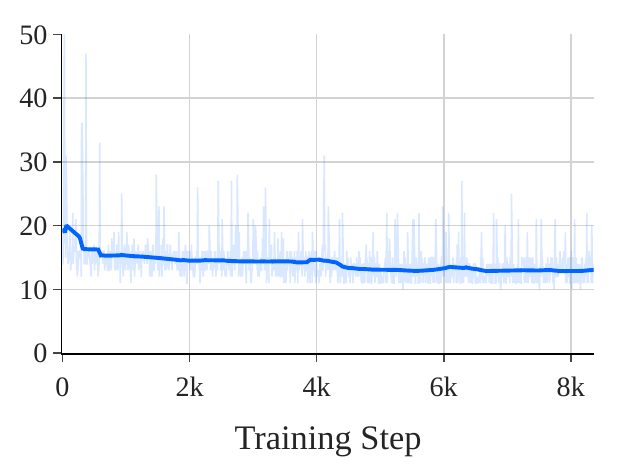}
        \caption{Batch Size = 1024}
    \end{subfigure}
    \caption{Relative objective reduction (top) and number of solver iterations (bottom) across different batch sizes for ResNet-44 on CIFAR-100.}
    \label{fig:solver_convergence_bs_comparison_res_C100}
\end{figure}

\begin{figure}[h!]
    \centering
    \begin{subfigure}[b]{0.24\textwidth}
        \centering
        \includegraphics[width=\textwidth]{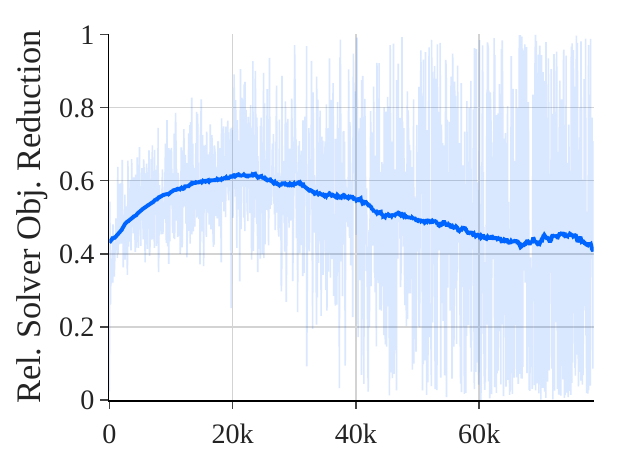}
        \vspace{2pt}
        \includegraphics[width=\textwidth]{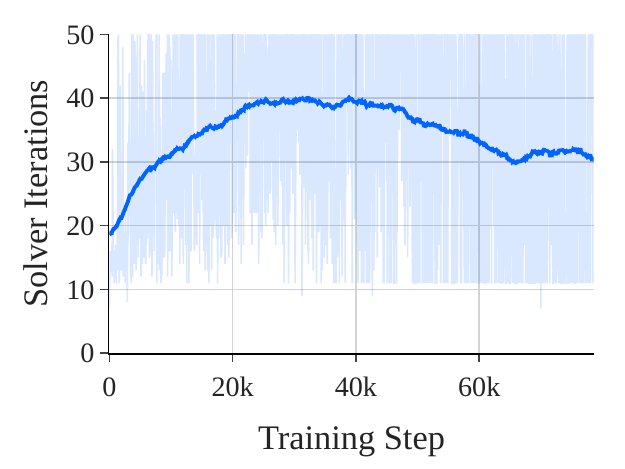}
        \caption{Batch Size = 128}
    \end{subfigure}
    \hfill
    \begin{subfigure}[b]{0.24\textwidth}
        \centering
        \includegraphics[width=\textwidth]{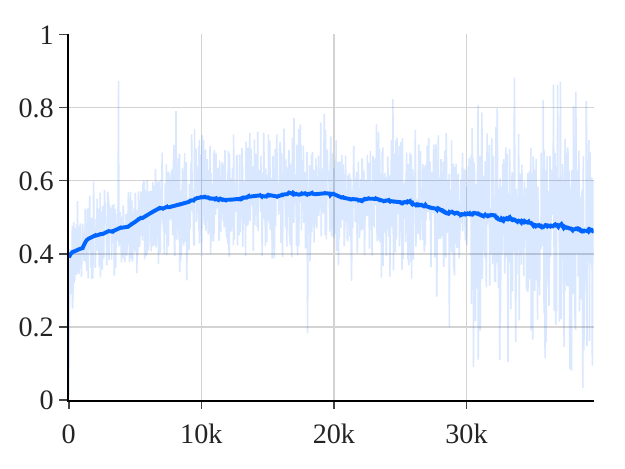}
        \vspace{2pt}
        \includegraphics[width=\textwidth]{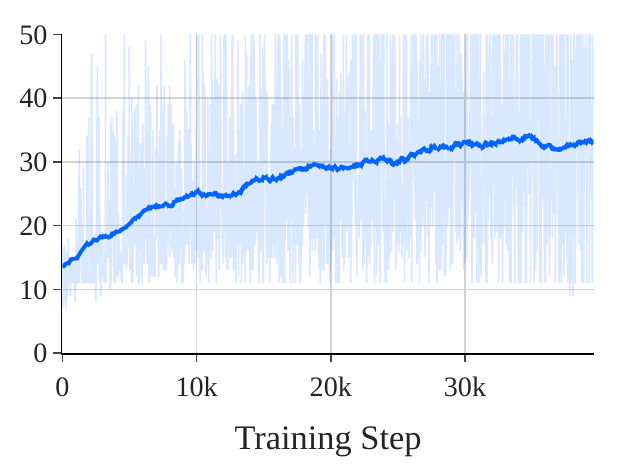}
        \caption{Batch Size = 256}
    \end{subfigure}
    \hfill
    \begin{subfigure}[b]{0.24\textwidth}
        \centering
        \includegraphics[width=\textwidth]{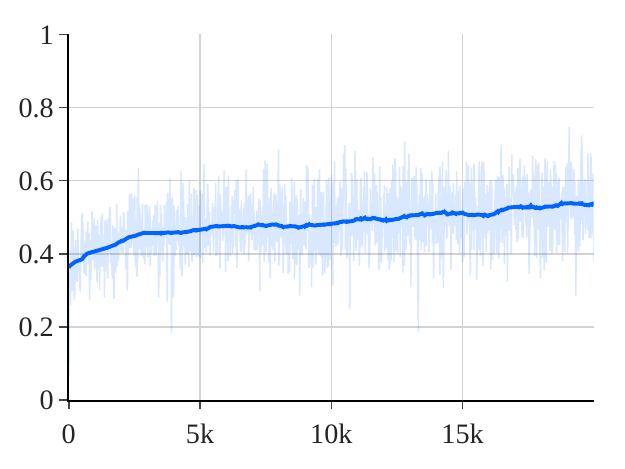}
        \vspace{2pt}
        \includegraphics[width=\textwidth]{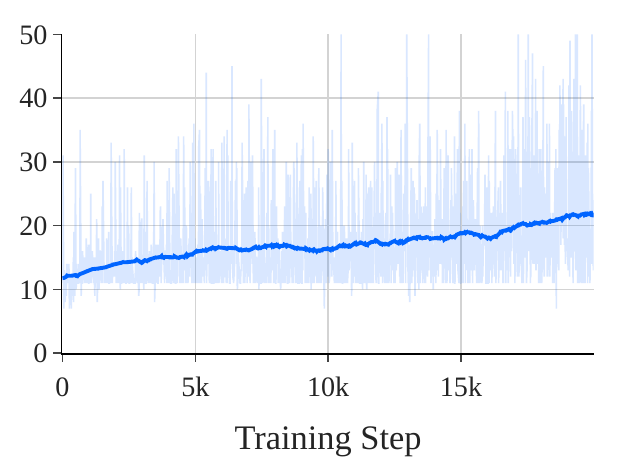}
        \caption{Batch Size = 512}
    \end{subfigure}
    \hfill
    \begin{subfigure}[b]{0.24\textwidth}
        \centering
        \includegraphics[width=\textwidth]{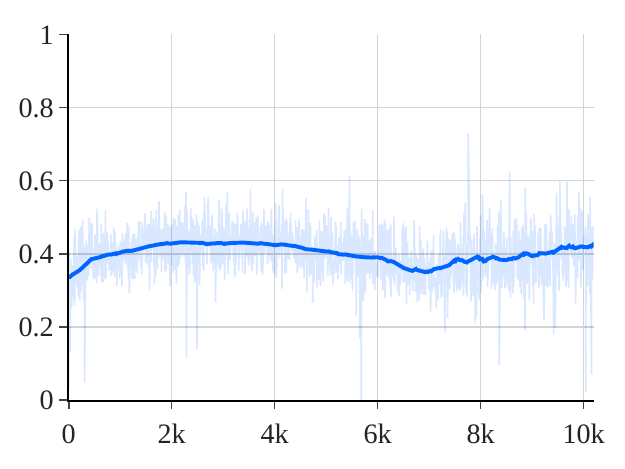}
        \vspace{2pt}
        \includegraphics[width=\textwidth]{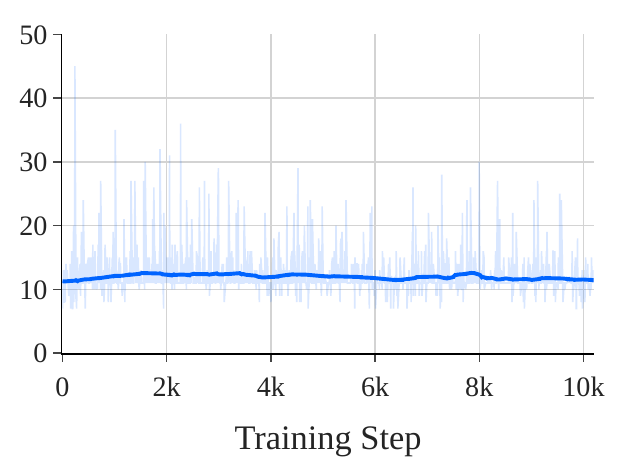}
        \caption{Batch Size = 1024}
    \end{subfigure}
    \caption{Relative objective reduction (top) and number of solver iterations (bottom) across different batch sizes for ViT-c on CIFAR-10.}
    \label{fig:solver_convergence_bs_comparison_vitC10}
\end{figure}

\begin{figure}[h!]
    \centering
    \begin{subfigure}[b]{0.24\textwidth}
        \centering
        \includegraphics[width=\textwidth]{./figures_solver/c100_vit_BS128_obj_reduction.pdf}
        \vspace{2pt}
        \includegraphics[width=\textwidth]{./figures_solver/c100_vit_BS128_num_iter.pdf}
        \caption{Batch Size = 128}
    \end{subfigure}
    \hfill
    \begin{subfigure}[b]{0.24\textwidth}
        \centering
        \includegraphics[width=\textwidth]{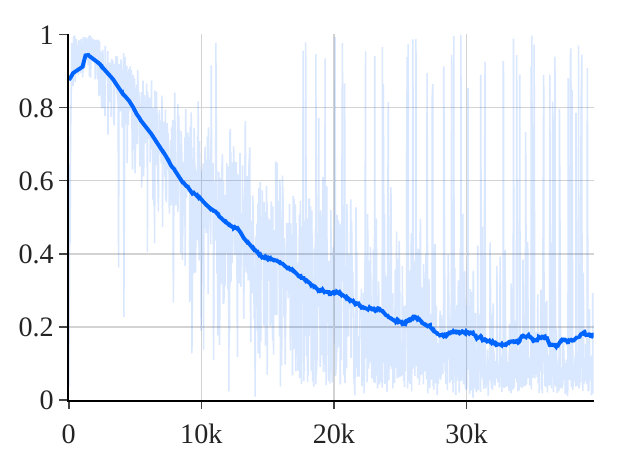}
        \vspace{2pt}
        \includegraphics[width=\textwidth]{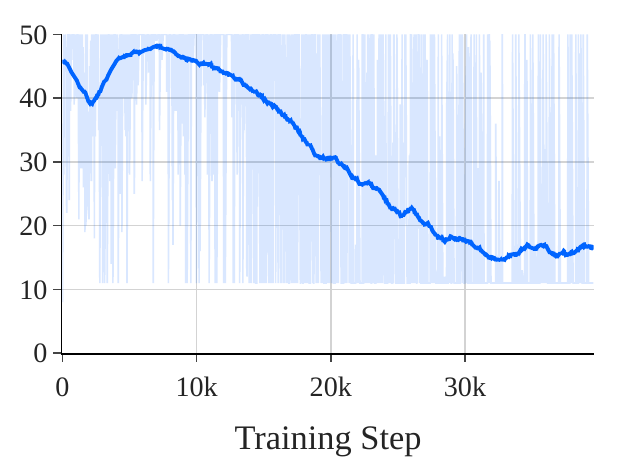}
        \caption{Batch Size = 256}
    \end{subfigure}
    \hfill
    \begin{subfigure}[b]{0.24\textwidth}
        \centering
        \includegraphics[width=\textwidth]{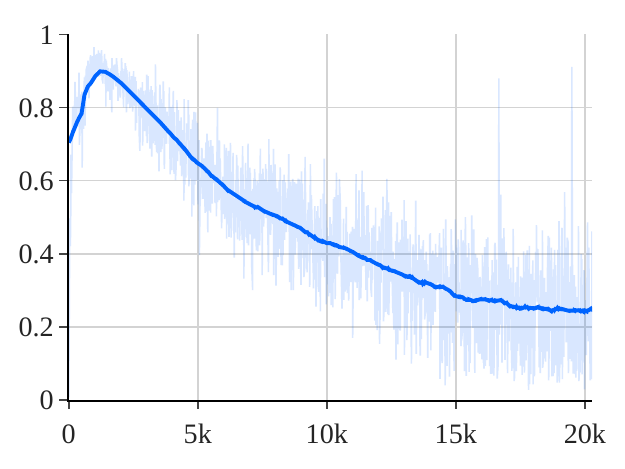}
        \vspace{2pt}
        \includegraphics[width=\textwidth]{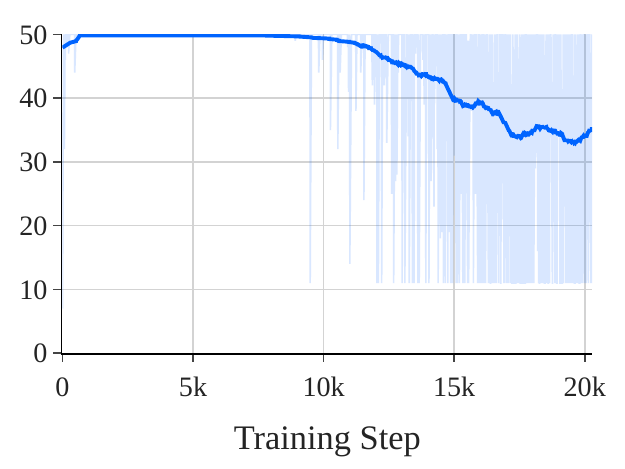}
        \caption{Batch Size = 512}
    \end{subfigure}
    \hfill
    \begin{subfigure}[b]{0.24\textwidth}
        \centering
        \includegraphics[width=\textwidth]{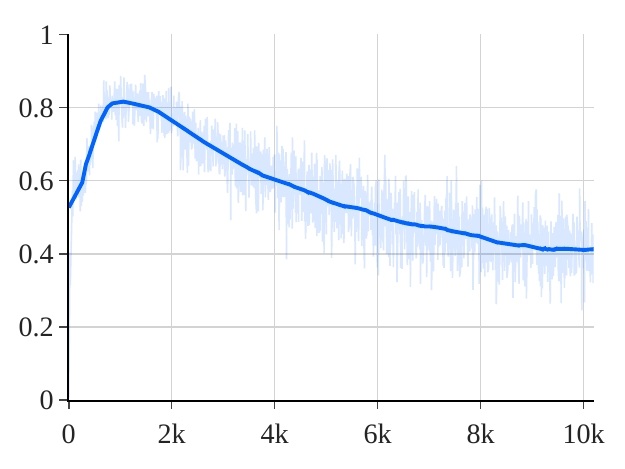}
        \vspace{2pt}
        \includegraphics[width=\textwidth]{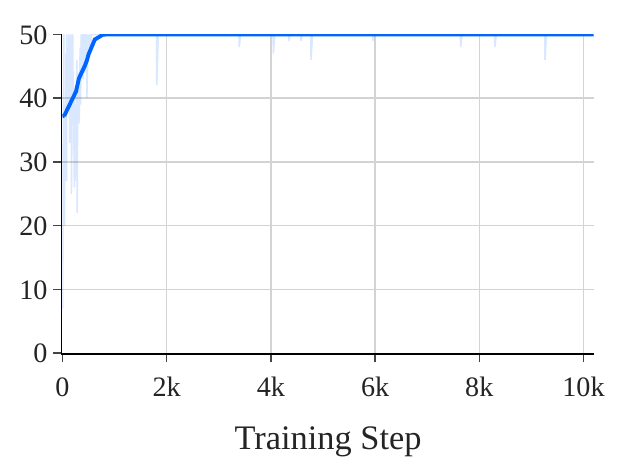}
        \caption{Batch Size = 1024}
    \end{subfigure}
    \caption{Relative objective reduction (top) and number of solver iterations (bottom) across different batch sizes for ViT-c on CIFAR-100.}
    \label{fig:solver_convergence_bs_comparison_vitC100}
\end{figure}

\section{Generalized Multi-Layer Algorithm and Precision Analysis}\label{sec:more layers full alg}

\cref{alg:training_loop_multi} details the generalized training iteration using SGD when expanding the approximation to include multiple layers ($\ell < L$). We decompose the architecture as $F(\mathbf{x};\boldsymbol{\theta}) = F_{\ell:L}(F_{1:\ell-1}(\mathbf{x}; \boldsymbol{\theta}_{1:\ell-1}); \boldsymbol{\theta}_{\ell:L})$. Because intermediate blocks contain also non-linear operations, we explicitly compute the per-sample gradients $\mathbf{G}_{\ell:L}$ using PyTorch's \texttt{vmap}. For notational clarity, we treat the quantities $\mathbf{G}_{\ell:L}$ and $\nabla_{y_\ell}\mathcal{L}$ as flattened two-dimensional tensors of shape $[B, D]$, though in practice they may possess arbitrary trailing dimensions. The multiplication with the weight vector $(\mathbf{1} + \bm{w}^\star)$ denotes a weighted sum across the batch dimension. In modern deep learning frameworks, this can be efficiently implemented using Einstein summation (e.g., \texttt{einsum('b...,b->...', tensor, weights)}). The extension to AdamW is straightforward with the difference being that the optimal $\bm{w}^\star$ is obtained by solving \cref{eq:opt_problem_adamw} instead of \cref{eq:opt_problem_sgd}.

\begin{algorithm}[h!]
\caption{Generalized Training Iteration for SGD (Multi-Layer)}
\label{alg:training_loop_multi}
\begin{algorithmic}[1]
\STATE {\bfseries Input:} Batch $\mathbf{x}$
\STATE $y_\ell \leftarrow F_{1:\ell-1}(\mathbf{x}; \boldsymbol{\theta}_{1:\ell-1})$ 
\STATE $\mathcal{L} \leftarrow \mathcal{L}(F_{\ell:L}(y_\ell; \boldsymbol{\theta}_{\ell:L}))$ 
\STATE Compute per-sample gradients $\mathbf{G}_{\ell:L}$ and $\nabla_{y_\ell} \mathcal{L}$ 
\STATE $\Delta\boldsymbol{\theta}_{b,\ell:L} \leftarrow \text{SGD}(\mathbf{G}_{\ell:L}^\top \mathbf{1})$ 
\STATE Solve \cref{eq:opt_problem_sgd} for $\bm{w}^\star$ \; \COMMENT{\cref{alg:cp_full} in \cref{appendix:chambolle_pock_details}}
\STATE $\widetilde{\mathbf{g}}_{\ell:L} \leftarrow \mathbf{G}_{\ell:L}^\top (\mathbf{1} + \bm{w}^\star)$ 
\STATE $\bar{\delta}_{y_\ell} \leftarrow (\nabla_{y_\ell}\mathcal{L})^\top (\mathbf{1} + \bm{w}^\star)$ 
\STATE $\widetilde{\mathbf{g}}_{1:\ell-1} \leftarrow \text{backward}(\bar{\delta}_{y_\ell}, F_{1:\ell-1})$ 
\STATE Execute an SGD($[\widetilde{\mathbf{g}}_{1:\ell-1}, \widetilde{\mathbf{g}}_{\ell:L}]$) step
\end{algorithmic}
\end{algorithm}

\paragraph{Impact of Per-Sample Gradients on Precision:}
We investigate the numerical stability of explicitly computing per-sample gradients via \texttt{vmap}. In \cref{fig:efficiency_vmap_resnet_bs128}, we show the effect of increasing the number of blocks handled by \texttt{vmap} (i.e., decreasing $\ell$). For this experiment, we set $\bm{w}=\mathbf{0}$, making the parameter update theoretically equivalent to standard SGD. As expected, time per batch and peak memory increase substantially as more blocks are included. However, we also observe fluctuations in test accuracy depending on the approximation depth. Theoretically, these fluctuations should not occur: all experiments use identical random seeds, and summing per-sample gradients is mathematically equivalent to computing the batch-averaged gradients directly. Their presence indicates that explicitly materializing and summing per-sample gradients over numerous layers can introduce numerical imprecisions compared to the standard, fused backward pass. This finding provides further motivation for our design choice: restricting the per-sample gradient computation exclusively to the last linear layer as it minimizes not only computational and memory overheads but also potential numerical drift.

\begin{figure}[h!]
    \centering
    \includegraphics[width=0.6\columnwidth]{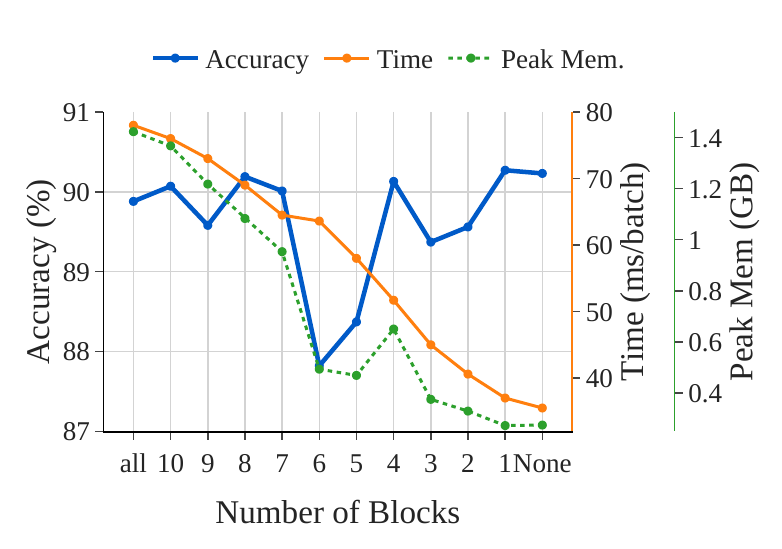}
    \caption{Efficiency comparison for ResNet-20-gn (accuracy, time per batch, and peak memory across number of blocks).}
    \label{fig:efficiency_vmap_resnet_bs128}
\end{figure}

\section{Ablation on the Trust-Region Radius}
\label{appendix:trust_region_ablation}

In our main experiments, we fixed the trust-region radius hyperparameter to $c=0.2$. This choice, along with other algorithmic parameters, was established using early exploratory runs on ResNet-44 (with Group Normalization) with a batch size of 128 as the primary driver. To better understand the sensitivity of our method to this constraint, we conducted an ablation study on ResNet-44 across both CIFAR-10 and CIFAR-100 for batch sizes 128 and 512. The results are averaged over 3 random seeds (except for the cases of $c=0$ and $c=0.2$ where it is over 5 seeds as we use the result from \cref{sec:experiments}) and they are summarized in \cref{tab:trust_region_ablation}. Note that $c=0.0$, enforcing a strictly zero radius, coincides with the standard training baseline.

\begin{table}[h]
\caption{Test accuracy (\%) of ResNet-44 under varying trust-region radius values. The baseline corresponds to $c=0.0$.}
\label{tab:trust_region_ablation}
\centering
\begin{small}
\begin{sc}
\begin{tabular}{l|cccccc}
\toprule
 & \multicolumn{6}{c}{\textbf{Trust-Region Radius $\bm{c}$}} \\
\textbf{Dataset, BS} & \textbf{0.0} & \textbf{0.05} & \textbf{0.1} & \textbf{0.2} & \textbf{0.4} & \textbf{0.8} \\
\midrule
C10, BS=128  & 91.71 & 91.62 & 91.66 & \textbf{91.93} & 91.84 & 91.65 \\
C10, BS=512  & 90.60 & 90.82 & \textbf{90.85} & 90.75 & 90.67 & 85.59 \\
\midrule
C100, BS=128 & 65.22 & 65.54 & 65.73 & 65.50 & \textbf{66.11} & 66.01 \\
C100, BS=512 & 63.51 & 63.63 & 63.79 & 64.37 & 64.88 & \textbf{65.02} \\
\bottomrule
\end{tabular}
\end{sc}
\end{small}
\end{table}

For CIFAR-10, selecting any value between 0.1 and 0.4 yields robust improvements over the baseline. However, as $c$ increases considerably, potential instabilities may occur (for example, at BS=512 with $c=0.8$, the accuracy drops sharply to 85.59\%). Interestingly, for the more complex CIFAR-100 dataset, we observe that performance generally continues to improve as the trust-region constraint is relaxed. 

While our default configuration generalized well, these results suggest that better performance might be possible for a broader range of datasets and architectures by specifically tuning the trust region $c$ or maybe varying the weight constraint box $\mathbf{w} \le \mathbf{2}$. Beyond this hyperparameter tuning, performance improvements might be realized by refining the solver, developing stronger numerical safeguards targeting the robustness of the surrogate optimization problem, or even fundamentally modifying the mathematical definition of per-sample interference.

\end{document}